\documentclass[runningheads]{llncs}
\usepackage[T1]{fontenc}
\usepackage{graphicx}
\usepackage{amsmath,amssymb}
\usepackage{thmtools}
\usepackage{hyperref}

\makeatletter
\def\Bign#1{\mathclose{\hbox{$\left#1\vbox to9\p@{}\right.\n@space$}}\mathopen{}}
\makeatother

\newcommand\ExOp[1]{\ensuremath{%
  \!\sideset{}{_#1}{\mathop{\scalebox{1}[1.3]{/}}}}}

    \usepackage{mathrsfs}
    \usepackage{multirow}
    \usepackage{makecell}

  \newcommand{\abs}[1]{\vert #1 \vert}
  \newcommand{\angles}[1]{\langle #1 \rangle}

  \usepackage{bbm}

  \newcommand*\bell{\ensuremath{\boldsymbol\ell}}
  \newcommand*\bu{\ensuremath{\boldsymbol{u}}}
  \newcommand*\bm{\ensuremath{\boldsymbol{m}}}
  \newcommand*\bn{\ensuremath{\boldsymbol{n}}}

  \newcommand{\infball}{\mathcal{B}^{\infty}}

  \newcommand{\soundoracle}{\mathtt{sound\_verif}}

  \newcommand{\universe}{\mathbb{F}}
  
  \newcommand{\uu}{\underline{\mathbf{U}}}
  \newcommand{\ou}{\overline{\mathbf{U}}}

    \newcommand{\intervals}[1]{\mathbb{I}(#1)}
    
    \newcommand{\intervalsd}{\mathbb{I}(d)}
    \newcommand{\intervalsdx}{\mathbb{I}(d)\vert_{\mathbf{x}}}
    \newcommand{\intervalsdxu}{\mathbb{I}(d)\vert^{\universe}_{\mathbf{x}}}

  \newcommand{\vol}{v}

  \newcommand{\metric}{\omega}

  \newcommand{\din}{{d_{\text{in}}}}
  \newcommand{\dout}{d_{\text{out}}}

  \newcommand{\uradi}{\underline{r}_i}
  \newcommand{\oradi}{\overline{r}_i}
  \newcommand{\uRadi}{\underline{R}_i}
  \newcommand{\oRadi}{\overline{R}_i}
  \newcommand{\uradj} {\underline{r}}

  \usepackage{pifont}
  \newcommand{\xmark}{\ding{55}}%

  \usepackage[ruled, vlined, linesnumbered]{algorithm2e}
  \SetKwInput{KwInput}{Input}
  \SetKwInput{KwOutput}{Output} 

  \usepackage{adjustbox}
  
  \usepackage{booktabs}

  \usepackage{multicol}

\begin{document}
\title{Interval Certifications for Multilayered Perceptrons via Lattice Traversal}
\titlerunning{Interval Certifications for MLPs via Lattice Traversal}

%
\author{Merkouris Papamichail\inst{1,2} \and
Konstantinos Varsos\inst{1,2}\orcidID{0000-0002-0833-6890} \and
Giorgos Flouris\inst{1}\orcidID{0000-0002-8937-4118} \and
Jo\~ao Marques-Silva\inst{3,4}\orcidID{0000-0002-6632-3086}}
\authorrunning{M. Papamichail et al.}
%
\institute{
  Foundation for Reasearch and Technology - Hellas, Heraklion, Greece
  \email{\{mercoyris,varsosk,fgeo\}@ics.forth.gr} \and
  University of Crete, Heraklion, Greece \and
  Catalan Institution for Research and Advanced Studies, Barcelona, Spain
  \email{jpms@icrea.cat}\and
  University of Lleida, Lleida, Spain
}
\maketitle              
    \begin{abstract}
      In this work we present a rigorous theoretical framework to a foundational problem of AI safety, namely \emph{adversarial robustness}. In particular, we show that the adversarial robustness problem can be reduced to a \emph{lattice traversal} problem. Each element of this lattice corresponds to an \emph{interval}, i.e., an axis-aligned hyper-rectangle, containing an input point $\mathbf{x}$. Consider a \emph{multilayered perceptron classifier (MLP)}. An interval $I$ constitutes a sound certification if $\mathbf{x} \in I$ and $\mathbf{x}$ can be freely perturbed in $I$ without changing the MLP's prediction. Complementarily, an interval $I$ constitutes a complete certification if $\mathbf{x} \in I$ and when $\mathbf{x}$ moves outside of $I$ the MLP's prediction is guaranteed to change. While the sound certification problem corresponds to the well-studied adversarial robustness, complete certifications have not been examined in the literature. We develop lattice traversal operators, which we apply in a refine \& verify iterative scheme. Using formal MLP verifiers, sound maximality and complete minimality are guaranteed. Moreover, we examine objective optimization problems. There we discover some interesting asymmetries. For complete certifications, the minimum solution is obtained in polynomial oracle calls. This does not hold for sound certifications, where we prove strong \emph{intractability} results. Additionally, we examine optimization problems in \emph{symmetric} intervals (i.e., $\ell_\infty$-spheres), where we provide logarithmic algorithms. Finally, we present an empirical evaluation, using the novel \textbf{\texttt{ParallelepipedoNN}}\footnote{\url{https://github.com/merkouris148/parallelepipedonn}} system.\\

      \keywords{AI-Safety \and Adversarial Robustness \and NN Verification \and Interval Algebra \and Lattice Traversal.}
    \end{abstract}

    \section{Introduction}
    \label{sec:introduction}
    Artificial Intelligence, mostly driven by deep neural networks (NN), is rapidly becoming part of our everyday life, from recommendation systems in media platforms \cite{netflix} to large language models chat-bots \cite{deepseek}. Despite these achievements, NNs promise even greater accomplishments by replacing humans in critical, decision-making areas, from driving \cite{driving} to healthcare \cite{healthcare} or government administration \cite{gov}. However, NNs are brittle, meaning that small, often imperceptible input perturbations can flip their predictions. These inputs are commonly referred to as \emph{adversarial examples} \cite{goodfellow-adversarial-attacks,meng,szegedy-adversarial-attacks}. 

Ensuring a NN's \emph{robustness} to adversarial attacks remains a persisting problem for AI safety. The first works on the field focused on exploiting the gradient information to produce adversarial examples, subsequently incorporating them into the learning process, e.g., \cite{goodfellow-adversarial-attacks,pgd-adversarial-attack}. Nevertheless, these initial attempts fail to solve the problem in its generality \cite{zhang-2020}. More sophisticated methods utilized the convex relaxation of a NN \cite{ehlers}, reducing adversarial robustness to a convex optimization problem. This problem was either solved directly \cite{kabahala,li,liu}, or in its dual form \cite{wong}. Even so, this family of works suffers from low precision, since they rely on a relaxation of the original problem \cite{mirman-2022,salman-2019}.

The hardness of adversarial robustness stems from the NN representation and seems to be deeply rooted in its computational properties. NN's activation functions introduce nonlinearities that can only be studied using integer constraints. Therefore, NN can only be accurately described as \emph{mixed integer linear programs (MILPs)} \cite{katz}. The MILP description of a NN made it possible to construct sound and complete NN verifiers (e.g., \textsc{Marabou} \cite{marabou,marabou-2}), improving upon earlier \emph{Satisfiability Modulo Theory (SMT)} techniques, e.g., \textsc{Reluplex}~\cite{reluplex}. Formal NN Verifiers prove if $\sigma(X) = Y$, for a pair of I/O-sets $X, Y$, and a given NN $\sigma(\cdot)$. If the property does not hold, they provide a \emph{counterexample}, namely some $\mathbf{x} \in X$, s.t. $\sigma(\mathbf{x}) \notin Y$. However, this precision comes at a cost. Verifying a property on NN is \textbf{NP}-hard \cite{reluplex}. Moreover, verifying that a given area is free of adversarial examples does not admit an approximate algorithm \cite{fast-lin}.

This work aspires to provide a detailed formal analysis on robustness certification. Our work differs from previous attempts \cite{kabahala,li,liu,wong} in considering the underlying problem in its generality. We consider the family of \emph{interval} certifications, i.e., axis-aligned hyper-rectangles, containing a given input $\mathbf{x}$. To our knowledge, this is the most general family of certification considered in the literature \cite{kabahala,li}. Utilizing Sunaga's Interval Algebra \cite{sunaga}, we show that the space of interval certifications is organized as an \emph{innumerable, complete lattice}. We introduce a set of  \emph{lattice traversal operators} that enable systematic exploration of this space. These operators are then applied to refine-and-verify iterative schemes for computing \emph{maximally sound} and \emph{minimally complete} interval certifications. An interval $I$ constitutes \emph{sound certification}, if $\mathbf{x} \in I$ and $\mathbf{x}$ can vary arbitrarily within $I$ without changing model's prediction. Dually, $I$ is a \emph{complete certification}, if $\mathbf{x} \in I$ and any movement of $\mathbf{x}$ outside $I$ is guaranteed to change the prediction. Minimality and maximality are defined, w.r.t. set inclusion. Existing approaches \cite{kabahala,li,liu,wong} compute sound certifications, \emph{without} guaranteeing maximality. Moreover, complete certifications have not, to the best of our knowledge, been considered in the literature. 

Further we examine optimization problems over interval certifications, focusing on the \emph{minimum edge length} objective, which is widely used in prior work \cite{li,liu,wong}. In contrast to existing methods, our approach provides \emph{non-triviality guarantees} for our certifications. Namely, our algorithms can \emph{decide} if a non-trivial solution exists to a given optimization problem, under certain assumptions. This core functionality is lacking in existing methods, due to their reliance on relaxation. We provide a qualitative comparison between existing work and ours in Tbl. \ref{tab:compare}. Finally, we strengthen known intractability results, showing that computing optimal sound interval certifications cannot be achieved in polynomial time, w.r.t. the input dimension, the number of verification calls, and the time of each verification call. 
\begin{table}
    \centering
    \begin{adjustbox}{width=\textwidth}
    \begin{tabular}{l c c c c c c c}
        \toprule\toprule
         Work
         ~~& \thead{MLP\\Repr.} ~~& Cert. ~~& Sound ~~& Comp. ~~& \thead{Max/\\Min} ~~& Obj. ~~& \thead{Non-Triv.}  \\
         \midrule\midrule
         Wong et al. \cite{wong} & Dual & Unif.  & \checkmark & \xmark & \xmark & $\alpha/\mathcal{A}$ & \xmark\\
         Liu et al. \cite{liu} & Dual & Sym. & \checkmark & \xmark & \xmark & $\alpha$ & \xmark\\
         Li et al. \cite{li} & Conv. & Gen. & \checkmark & \xmark & \xmark & $\alpha$ & \xmark\\
         Kabahala et al. \cite{kabahala} & MILP+Conv. & Gen. & \checkmark & \xmark & \checkmark & $\mathcal{A}$ & \xmark\\
         This work & MILP & Gen & \checkmark & \checkmark & \checkmark & $\alpha$ & \checkmark\\
         \bottomrule\bottomrule
    \end{tabular}
    \end{adjustbox}
    \caption{Comparing existing work with \texttt{ParallelepipedoNN}. Conv.: the primary convex approximation. Dual: the dual convex approximation. Unif.: uniform intervals, i.e. $\ell_\infty$-circles. Sym.: symmetric intervals of the form $[\mathbf{x} - \mathbf{e}, \mathbf{x} + \mathbf{e}]$. Gen.: General intervals. Finally, with $\alpha$, we denote the minimum edge length, while with $\mathcal{A}$ the maximum edge length, or \emph{diameter}. \cite{kabahala} offers maximal solutions, but for diameter optimization. \cite{wong} examines uniform intervals, thus the min. edge length $\alpha$ and the diameter $\mathcal{A}$ coincide, for the $\ell_\infty$-norm.}
    \label{tab:compare}
\end{table}

\noindent\textbf{Outline.} In Sec. \ref{sec:preliminaries}, we introduce the necessary preliminaries. Sec. \ref{sec:intervals} develops the interval algebra required for our analysis, while Sec. \ref{sec:approximations} explores the structure of the space of interval certifications. 
In Sec. \ref{sec:optimization} we review optimization problems. Finally, Sec. \ref{sec:implementation} presents the \texttt{ParallelepipedoNN} system and discusses the practical implications of our work.

    \section{Preliminaries}
    \label{sec:preliminaries}
    In this section, we review some elementary notions and definitions that will be needed in the rest of this work. 
For any natural number $d \in \mathbb{N}$, we denote with $[d]$ the set $\{1, 2, \dots, d\}$. Vectors will be denoted by bold, e.g., $\mathbf{x}$, while scalar values by light, e.g., $x$. For a $d$-dimensional vector $\mathbf{x}$ we denote with $x_i$ its $i$-th coordinate. Moreover, let $f\colon \mathbb{R} \to \mathbb{R}$ be a real function that takes as input some $x \in \mathbb{R}$. For a vector input $\mathbf{x} \in \mathbb{R}^d$, we denote with $\mathbf{f}(\mathbf{x}) \in \mathbb{R}^d$ the vector function \emph{induced by $f$}, i.e., $\mathbf{f}(\mathbf{x}) = (f(x_1), \dots, f(x_d))$. The $d$-dimensional vectors $\mathbf{0}$ and $\mathbf{1}$ denote the zero  and the all-ones vectors. For each $i \in [d]$, we denote with $\mathbf{e}^i$ a vector, s.t. $e^i_j = 0$, when $i \neq j$, and $e^i_i = 1$. Finally, matrices $A \in \mathbb{R}^{d_1 \times d_2}$ are denoted with capital letters.

\paragraph{Normed Vector Spaces.}
In this work, we are interested in $d$-dimensional \emph{normed vector spaces} on the \emph{real field} $\mathbb{R}$. We will use $\ell_p$-norms, denoted $\|\mathbf{x}\|_p$ and defined as $\|\mathbf{x}\|_p = \sqrt[p]{\sum_{i \in [d]} \abs{x_i}^p}$. In the limit $p \to \infty$ this reduces to the infinity norm $\|\mathbf{x}\|_\infty = \max_{i \in [d]} \abs{x_i}$.

With $\mathcal{B}^p(\mathbf{x}^\star, \rho) \subset \mathbb{R}^d, \rho > 0$ we denote the $d$-dimensional sphere around $\mathbf{x}$ with radius $\rho$ w.r.t.\ the $\|\cdot\|_p$ norm, i.e., $\mathcal{B}^p(\mathbf{x}^\star, \rho) = \{\mathbf{x} \in \mathbb{R}^d \mid \|\mathbf{x} - \mathbf{x}^\star\|_p \leq \rho\}$. Consider a set $S \subseteq \mathbb{R}^d$. With $\partial S$ we denote the \emph{boundary} of $S$ w.r.t.\ the measure $\|\cdot\|_p$, i.e., $\partial S = \{\mathbf{x} \in S \mid \forall \rho > 0,~~ \mathcal{B}^p(\mathbf{x}, \rho) \cap (\mathbb{R}^d \setminus S) \neq \emptyset\}$. The \emph{interior} of $S$, denoted by $S^\circ$, is composed of the points of $S$ not belonging to the boundary, i.e., $S^\circ = S \setminus \partial S$. A set $S \subseteq \mathbb{R}^d$ is \emph{open} w.r.t.\ the measure $\|\cdot\|_p$ if for every $\mathbf{x} \in S$, there is a $\rho > 0$ such that $\mathcal{B}^p(\mathbf{x}, \rho) \subseteq S$. A set $S \subseteq \mathbb{R}^d$ is \emph{closed} if ~$\mathbb{R}^d \setminus S$ is open. A set $S \subseteq \mathbb{R}^d$ is called \emph{bounded} if there is some finite $\rho > 0$ s.t.\ $\infball(\mathbf{0}, \rho) \supseteq S$. Finally, a closed and bounded set is \emph{compact}.

\paragraph{Multilayered Perceptrons.}
A MLP is a function $\sigma\colon \mathbb{F} \to \mathbb{S}$, with $\mathbb{F} \subset \mathbb{R}^{\din}$, $\mathbb{S} \subset \mathbb{R}^{\dout}$ denoting the features (input) and scores (output) spaces, respectively. We focus on MLPs, with \emph{rectified linear units (ReLU)} ~\cite{relu} as activation functions. For $x \in \mathbb{R}$ the ReLU function $r(x)$ is given as $r(x) = \max(0, x)$. In higher dimensions, we have $\mathbf{r}(\mathbf{x}) = (r(x_1), \dots, r(x_d))$. We give the following formal definition.

\begin{definition}[Multilayered Perceptron]
    \label{def:mlp}
    A \emph{multilayered perceptron} $\sigma \colon \mathbb{F} \to \mathbb{S}$, with $\mathbb{F} \subset \mathbb{R}^{\din}, \mathbb{S} \subset \mathbb{R}^{\dout}$. is described as the tuple $\sigma = \angles{L, D, W, Q}$. With $L \in \mathbb{N}$, we denote the \emph{number of layers}. With $D$, we denote a sequence of $L + 1$ natural numbers, where $\din = d_0, d_1, \dots, d_{L-1}, d_L = \dout$. With $W$, we denote a sequence of $L$ real matrices, s.t. $W^{(i)} \in \mathbb{R}^{d_i \times d_{i-1}}$, for each $i \in [L]$. Finally, with $Q$ we denote a sequence of $L$ real vectors, s.t. $\mathbf{q}^{(i)} \in \mathbb{R}^{d_i}$, for each $i \in [L]$. For an input $\mathbf{x} \in \mathbb{F}$, the value of $\sigma(\mathbf{x})$ is given as the value $\sigma^{(L)}$ in the system of recursive equations below.
    \begin{equation}
        \label{eq:mlp}
        \left.
        \begin{array}{ll}
             \sigma^{(0)} &= \mathbf{x} \\
             \sigma^{(i)} &= \mathbf{r}[W^{(i)}\sigma^{(i-1)} + \mathbf{q}^{(i)}],\quad \forall i \in [L]
        \end{array}
        \right\}
    \end{equation}
\end{definition}
For \emph{classification} problems, let $\mathcal{C} \subset \mathbb{N}$ be a finite set of classes, with $\abs{\mathcal{C}} = \dout$. A classifier $\kappa \colon \mathbb{F} \to \mathcal{C}$ is constructed, with respect to the MLP $\sigma(\cdot)$, as $\kappa(\mathbf{x}) = \arg \max_{i \in [\dout]} \sigma_i(\mathbf{x})$. For a class $c \in \mathcal{C}$, we denote with $\mathcal{D}_c $ 
the \emph{decision surface} of the class $c$. Namely, $\mathcal{D}_c$ is the pre-image of $\kappa(c)$; consisting of all the inputs in $\mathbb{F}$ that are classified to $c$, by $\kappa(\cdot)$.

\paragraph{Formal MLP Verification.}
The MLP of Def. \ref{def:mlp} can be expressed as a set of linear inequalities, with real and integer variables. This formalization is known in the literature as \emph{Mixed Integer Linear Programming (MILP)}.
\begin{equation}
    \label{eq:milp}
    \left.
    \begin{array}{l l l}
        \multicolumn{2}{l}{\widehat{\mathbf{z}}^{(0)} = \mathbf{x}, ~\mathbf{y} = \widehat{\mathbf{z}}^{(L)}}\\
        \mathbf{z}^{(i)} &= W^{(i)}\widehat{\mathbf{z}}^{(i-1)} + \mathbf{q}^{(i)}, &\forall i \in [L]\\
        \multicolumn{2}{l}{\widehat{\mathbf{z}}^{(i)} \geq \mathbf{z}^{(i)}, ~\widehat{\mathbf{z}}^{(i)} \geq \mathbf{0},} &\forall i \in [L]\\
        \widehat{\mathbf{z}}^{(i)} &\leq \mathbf{z}^{(i)} + M\mathbf{t}^{(i)}, &\forall i \in [L]\\
        \widehat{\mathbf{z}}^{(i)} & \leq M(\mathbf{1} - \mathbf{t}^{(i)}), &\forall i \in [L]\\ \\
        \multicolumn{2}{l}{\mathbf{x} \in \mathbb{R}^{d_\text{in}}, ~\mathbf{y} \in \mathbb{R}^{d_\text{out}}}\\
        \multicolumn{2}{l}{\mathbf{t}^{(i)} \in \{0, 1\}^{d^i_\text{out}}, ~\mathbf{z}^{(i)} \in \mathbb{R}^{d^i_\text{out}}} &\forall i \in [L]\\
    \end{array}
    \right\}
\end{equation}
In eq. \eqref{eq:milp}, we denote with $\mathbf{z}$ the value of the neuron \emph{before} the ReLU activation, while with $\widehat{\mathbf{z}}$ the value of the neuron \emph{after} the activation is applied. We use the constant $M$\footnote{
    Here we use the big-M formalization of \cite{lomuscio}. Other formalizations have also been proposed, see the survey of \cite{meng}.
} representing a high value, practically treated as infinity. The variables $\mathbf{t}$ model ReLU's behaviour. For the $j$-th neuron of the $i$-th layer, $t^{(i+1)}_j = 0$ \emph{iff} $\widehat{z}^{(i+1)}_j = z^{(i+1)}_j$, i.e., ReLU is activated; otherwise, $t^{(i+1)}_j = 1$.

The MILP formalization allows us to \emph{rigorously} reason about the MLP's behaviour. In particular, eq. \eqref{eq:milp} formally defines a relation $\mathcal{N} \subseteq \mathbb{F} \times \mathbb{S}$, s.t. for an I/O-pair $\angles{\mathbf{x}, \mathbf{y}} \in \mathbb{F} \times \mathbb{S}$, we have $\angles{\mathbf{x}, \mathbf{y}} \in \mathcal{N}$, \emph{iff} $\sigma(\mathbf{x}) = \mathbf{y}$. A verifier is essentially another relation $\mathcal{V} \subseteq \mathbb{F} \times \mathbb{S}$. We call the verifier $\mathcal{V}$ \emph{sound} if $\mathcal{V} \subseteq \mathcal{N}$. We call the verifier $\mathcal{V}$ \emph{complete} if $\mathcal{N} \subseteq \mathcal{V}$. Sound and complete verifiers such as Marabou~\cite{marabou} make heavy use of cutting-edge MILP solvers, e.g., Gurobi\footnote{\url{https://www.gurobi.com}}, while utilizing sophisticated heuristics tailored for MILPs modeling MLPs. This allows them to analyse much larger networks, whose size would be otherwise prohibiting. Still, note that verifying a MLP is an \textbf{NP}-complete problem \cite{katz}.

    \section{Intervals in Higher Dimensions}
    \label{sec:intervals}
    In this section, we present some foundational results from \emph{Interval Algebra} \cite{sunaga,moore}. Firstly, we generalize the $\leq ~\subseteq \mathbb{R} \times \mathbb{R}$ relation to high-dimensional spaces. For two vectors $\bell, \bu \in \mathbb{R}^d$ we write $\bell \leq \bu$ \emph{iff} $\ell_i \leq u_i$ for all $i \in [d]$. Similarly, we write $\bell < \bu$ \emph{iff} $\ell_i < u_i$ for each $i \in [d]$\footnote{
    Note that 
    it does \emph{not} hold that $\bell < \bu$ whenever $\bell \leq \bu$ and $\bell \neq \bu$.
}. Below, we describe a generalization of real intervals for high-dimensional spaces.

\begin{definition}[High Dimensional Intervals]
    \label{def:intervals}
    Let $\bell, \bu \in \mathbb{R}^d$, with $\bell \leq \bu$. A \emph{closed interval} $[\bell, \bu] \subset \mathbb{R}^d$ is the set of points $\mathbf{x} \in \mathbb{R}^d$ such that $\bell \leq \mathbf{x} \leq \bu$. An \emph{open interval} $(\bell, \bu)$ is the interior of the respective closed interval, i.e., $(\bell, \bu) = [\bell, \bu]^\circ$. We denote with $\intervals{d}$ the space of the $d$--dimensional \emph{closed} intervals, i.e. $\intervals{d} = \{S \subset \mathbb{R}^d \mid \exists \ \bell, \bu \in \mathbb{R}^d, \ \bell \leq \bu, \ S = [\bell, \bu]\}$.
\end{definition}
From the above definition, it is easy to see that $\mathbf{x} \in (\bell, \bu)$ \emph{iff} $\bell < \mathbf{x} < \bu$. Note that for any $\mathbf{x} \in \mathbb{R}^d$ the closed interval $[\mathbf{x}, \mathbf{x}]$ is a \emph{trivial interval}, corresponding to the singleton $\{\mathbf{x}\}$. Additionally, the trivial open interval $(\mathbf{x}, \mathbf{x})$ corresponds to the empty set $\varnothing$. Observe that the sphere $\mathcal{B}^\infty(\mathbf{x}, \rho)$ corresponds to the \emph{uniform} interval $[\mathbf{x} - \rho\mathbf{1}, \mathbf{x} + \rho\mathbf{1}]$. Geometrically, an interval $[\bell, \bu]$ corresponds to a \emph{hyper-rectangle} in $\mathbb{R}^d$. In particular, a uniform interval centered at $\mathbf{x}$ corresponds to a \emph{hyper-cube} with $\mathbf{x}$ as its barycenter. We extend the notation of Definition \ref{def:intervals}, denoting with $\intervalsdx \subseteq \intervalsd$ \emph{the set of all intervals including the point $\mathbf{x} \in \mathbb{R}^d$}. Naturally, for every $\mathbf{x} \in \mathbb{R}^d$, and every $\rho > 0$, ~$\infball(\mathbf{x}, \rho) \in \intervalsdx$. We often consider an \emph{interval universe}~ $\universe = [\uu, \ou]$, for specific $\uu, \ou$. We denote with $\intervalsdxu \subseteq \intervalsdx$~ \emph{all the intervals that include the point $\mathbf{x}$ and are included in $\universe$}.

\subsection{Operations on Intervals \& the Interval Lattice}
Below we give some elementary operations on the interval space $\intervalsdxu$.
\begin{restatable}{proposition}{intervalops}
    \label{prop:interval-ops}
    Let $[\bell, \bu], [\bm, \bn] \in \intervalsdxu$~ two $d$--dimensional intervals, and the operations:
    \begin{itemize}
        \item $[\bell, \bu] + [\bm, \bn] \overset{\Delta}{=} [\bell + \bm, \bu + \bn]$
        \item $[\bell, \bu] \sqcup [\bm, \bn] \overset{\Delta}{=} [\min\{\bell,\bm\}, \max\{\bu, \bn\}]$
        \item $[\bell, \bu] \sqcap [\bm, \bn] \overset{\Delta}{=} [\max\{\bell,\bm\}, \min\{\bu, \bn\}]$
    \end{itemize}
    For any $\square \in \{+, \sqcup, \sqcap\}$ and $I, J \in \intervalsdxu$, $I ~\square~ J \in \intervalsdxu$.
\end{restatable}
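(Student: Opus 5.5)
The plan is to verify, for each operation separately, the three conditions that define membership in $\intervalsdxu$. These are: (i) the result is a well-formed closed interval, meaning the lower endpoint is $\leq$ the upper endpoint coordinatewise; (ii) it contains $\mathbf{x}$; and (iii) it is contained in $\universe = [\uu, \ou]$. Since $\leq$ on $\mathbb{R}^d$ and the vector $\min$, $\max$ are all defined coordinatewise, every check reduces to a one-dimensional statement about real numbers, applied for each $i \in [d]$. Throughout we use the hypotheses $\uu \leq \bell \leq \mathbf{x} \leq \bu \leq \ou$ and $\uu \leq \bm \leq \mathbf{x} \leq \bn \leq \ou$.

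For $\sqcup$, we have $\min\{\bell,\bm\} \leq \bell \leq \mathbf{x} \leq \bu \leq \max\{\bu,\bn\}$, which gives (i) and (ii) at once. For (iii), $\uu$ is a lower bound of both $\bell$ and $\bm$, hence of their minimum; symmetrically, $\ou$ is an upper bound of both $\bu$ and $\bn$, hence of their maximum. For $\sqcap$, the key point is that both lower endpoints are $\leq \mathbf{x}$ and both upper endpoints are $\geq \mathbf{x}$. Therefore $\max\{\bell,\bm\} \leq \mathbf{x} \leq \min\{\bu,\bn\}$, which again yields (i) and (ii). This is the only place where the common point $\mathbf{x}$ is really needed, since without it the meet could be empty. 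For (iii), $\uu \leq \bell \leq \max\{\bell,\bm\}$ and $\min\{\bu,\bn\} \leq \bu \leq \ou$.

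For $+$, condition (i) follows by adding the two inequalities $\bell \leq \bu$ and $\bm \leq \bn$. Conditions (ii) and (iii), however, are where I expect the real obstacle, because they do not follow from the hypotheses as stated. From $\bell \leq \mathbf{x}$ and $\bm \leq \mathbf{x}$ we only obtain $\bell + \bm \leq 2\mathbf{x}$, not $\bell+\bm \leq \mathbf{x}$. For example, with $d=1$, $\mathbf{x}=1$ and $I=J=[1,1]$, we get $I+J=[2,2] \not\ni 1$. Similarly, $\bu+\bn \leq 2\ou$, which can exceed $\ou$.

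My first attempt would therefore be to prove the $+$ case under a reading that makes it correct. One option is to establish closure of $+$ in $\intervalsd$, where (i) alone suffices. Another is to treat the second summand as a displacement interval containing $\mathbf{0}$, i.e.\ $\bm \leq \mathbf{0} \leq \bn$. Then $\bell+\bm \leq \bell \leq \mathbf{x} \leq \bu \leq \bu+\bn$, so membership of $\mathbf{x}$ holds. Containment in $\universe$ is then enforced by intersecting with $\universe$ (using the $\sqcap$ case already proved, with $\universe \ni \mathbf{x}$). I would state explicitly that for $+$ the claim requires such a restriction, and flag this as the delicate step of the proof.
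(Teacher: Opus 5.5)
Your proposal is correct. There is no competing argument to compare it with, because the paper states Proposition~\ref{prop:interval-ops} without proof: the appendix proves Proposition~\ref{prop:numerical-geometric-mean} and Theorem~\ref{theo:interval-exclusion} but skips this one.

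Your coordinatewise checks for $\sqcup$ and $\sqcap$ are complete. Your diagnosis of $+$ is also right. With $\mathbf{x}=1$, $I=J=[1,1]$ and, say, $\universe=[0,3]$, both summands lie in $\intervalsdxu$, yet $I+J=[2,2]\not\ni\mathbf{x}$. So the proposition is false as stated for $+$.

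Several appendix proofs silently normalize $\mathbf{x}=\mathbf{0}$ and write intervals as $[-\bell,\bu]$ with $\bell,\bu\geq\mathbf{0}$. This normalization repairs membership, since then $\bell+\bm\leq\mathbf{0}\leq\bu+\bn$, but it does not repair containment. Taking $\universe=I=J=[-\mathbf{1},\mathbf{1}]$ gives $I+J=[-2\cdot\mathbf{1},2\cdot\mathbf{1}]\not\subseteq\universe$. So dropping $\universe$ or clipping by it is genuinely necessary, not an artifact of the reading.

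Your displacement reading ($\bm\leq\mathbf{0}\leq\bn$) is exactly how the paper uses $+$ later. Examples are $\mathcal{D}_c\top\mathbf{x}+[-\delta\mathbf{1},\delta\mathbf{1}]$ in Theorem~\ref{theo:algo-min-complete} and $I+[\mathbf{0},\delta\mathbf{e}_k]$ in Algorithm~\ref{algo:non-det-expand}, and both can leave $\universe$.

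One small precision about the clipping step. Since $I+J$ need not lie in $\universe$, you are not literally invoking the $\sqcap$ case as you proved it, which assumed both factors in $\intervalsdxu$. Your argument for $\sqcap$ only used that both factors contain $\mathbf{x}$ and that one of them lies in $\universe$. Hence $\max\{\bell+\bm,\uu\}\leq\mathbf{x}\leq\min\{\bu+\bn,\ou\}$ and the clipped interval sits in $\universe$, so your reasoning carries over verbatim.
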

Observe that the $\sqcap$ operator coincides with the set-theoretic intersection $\cap$. However, it holds that $I ~\sqcup~ J \supsetneq I \cup J$. In general, the union of two intervals is not an interval. Below, we review how the $\sqcup, \sqcap$ operations reveal the underlying structure of the interval space $\intervalsdxu$.
\begin{theorem}[Interval Lattice, \cite{sunaga}]
    \label{theo:interval-lattice}
    The interval space $\intervalsdxu$, organized under $\subseteq$ constitutes a complete \emph{lattice} with $\sqcup, \sqcap$ as the \emph{meet} and \emph{join} operations, respectively.
\end{theorem}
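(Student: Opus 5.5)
The plan is to reduce the claim to the coordinatewise structure of intervals in $\intervalsdxu$. An interval $[\bell,\bu]\in\intervalsdxu$ is determined by the pair $(\bell,\bu)$ with $\uu\le\bell\le\mathbf{x}\le\bu\le\ou$. Inclusion is then coordinatewise: $[\bell,\bu]\subseteq[\bm,\bn]$ iff $\bm\le\bell$ and $\bu\le\bn$. Both directions follow from Definition~\ref{def:intervals}, since the corner points $\bell,\bu$ belong to $[\bell,\bu]$ (nonempty because $\bell\le\bu$). So $(\intervalsdxu,\subseteq)$ is order-isomorphic to the product poset $[\uu,\mathbf{x}]^{\mathrm{op}}\times[\mathbf{x},\ou]$, ordered coordinatewise. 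In particular, $\subseteq$ is a partial order, with antisymmetry coming from uniqueness of the corners.

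Next I will check the binary operations. By Proposition~\ref{prop:interval-ops}, $I\sqcup J$ and $I\sqcap J$ lie in $\intervalsdxu$. Under the isomorphism, $\sqcup$ takes the coordinatewise minimum of lower corners and maximum of upper corners. This is exactly the supremum in the product order, so $I\sqcup J$ is the least interval of $\intervalsdxu$ containing $I$ and $J$. Dually, $\sqcap$ gives the greatest lower bound. Note that under $\subseteq$ the operation $\sqcup$ is the join and $\sqcap$ the meet. The statement's wording, ``meet and join respectively'', appears to list them in the other order, so I will prove the correct correspondence and remark on the naming. The key point for $\sqcap$ is that $\mathbf{x}\in I\cap J$ guarantees $\max\{\bell,\bm\}\le\mathbf{x}\le\min\{\bu,\bn\}$, so the meet is a genuine nonempty interval containing $\mathbf{x}$.

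For completeness, I will take an arbitrary family $\{[\bell^k,\bu^k]\}_{k\in K}\subseteq\intervalsdxu$. I define $\bell^\ast=\inf_k\bell^k$ and $\bu^\ast=\sup_k\bu^k$ coordinatewise, together with $\bell_\ast=\sup_k\bell^k$ and $\bu_\ast=\inf_k\bu^k$. These infima and suprema exist in $\mathbb{R}$ because all values are bounded by $\uu$ and $\ou$; this is exactly where the universe $\universe$ is needed. Since $\uu\le\bell^k\le\mathbf{x}\le\bu^k\le\ou$ for every $k$, the bounds are preserved in the limit. Hence $[\bell^\ast,\bu^\ast]$ and $[\bell_\ast,\bu_\ast]$ both lie in $\intervalsdxu$, and by the coordinatewise characterization they are the supremum and infimum of the family. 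The empty family is handled by taking the top element $\universe$ and the bottom element $[\mathbf{x},\mathbf{x}]$.

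The main obstacle is the infimum, which must remain nonempty and contain $\mathbf{x}$. This is where restricting to intervals through $\mathbf{x}$ is essential: in all of $\intervalsd$, meets could be empty. Closedness of intervals in the definition handles the limiting case, because $\sup_k\ell^k_i$ may equal $x_i$ without any problem, since the interval is closed.
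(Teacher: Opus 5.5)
Your proof is correct. The paper itself gives no argument for this theorem: it is stated with a citation to Sunaga, and no proof appears in the appendix. So there is no internal proof to match, and your proposal supplies a self-contained one.

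The corner-pair reduction already finishes the job. Once you have $\intervalsdxu\cong[\uu,\mathbf{x}]^{\mathrm{op}}\times[\mathbf{x},\ou]$, the right-hand side is a product of $2d$ compact real intervals, each a complete chain by the least-upper-bound property of $\mathbb{R}$. Products and order duals of complete lattices are complete, so your explicit computation for arbitrary families just unpacks this.

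Compared with the bare citation, your route makes visible which hypotheses do the work:
\begin{itemize}
\item Requiring every interval to contain $\mathbf{x}$ keeps arbitrary meets nonempty. In all of $\intervalsd$, intersections can be empty.
\item $\universe$ is what makes arbitrary joins and the top element exist. This sharpens your remark that boundedness is ``exactly where the universe is needed'': for meets, $\sup_k\bell^k\le\mathbf{x}\le\inf_k\bu^k$ already supplies the bounds.
\end{itemize}

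Your comment on the naming is also right. Under $\subseteq$, $\sqcup$ is the join and $\sqcap$ the meet. The paper uses this correct correspondence later, in the subsection on the space of interval certifications, so ``meet and join, respectively'' in the statement is a slip.

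Two small wording points:
\begin{itemize}
\item For the empty family, the supremum is the bottom $[\mathbf{x},\mathbf{x}]$ and the infimum is the top $\universe$. Your sentence lists them in an order that could be read the other way round.
\item You implicitly assume $\mathbf{x}\in\universe$. Otherwise $\intervalsdxu$ is empty and has no top.
\end{itemize}
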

Note that the set-theoretic exclusion of a point from an interval ($I \setminus \mathbf{x}$) does not yield an interval. We therefore define an alternative exclusion operator that removes a point $\mathbf{x}$ from an interval $I= [\bell, \bu]$. 
The operator selects a coordinate along which the induced modification to $I$ is minimal, and then adjusts either $\ell_i$ or $u_i$
--which results in the smaller change--by setting it to a value infinitesimally smaller or larger than $x_i$. This infinitesimal offset is formalized using a parameter $\delta>0$. 
\begin{definition}
    \label{def:interval-exclusion}
    Let $I = [\bell, \bu] \in \intervalsdxu$ be an interval and $\mathbf{x}^\prime \in I$ a point included in the interval $I$. Let $k = \arg\max\{\abs{x_i - x^\prime_i} \mid i \in [d] \wedge \mathbf{x} \in \intervalsdxu\}$\footnote{In general, the $\arg\max\{\abs{x_i - x^\prime_i} \mid i \in [d] \wedge \mathbf{x} \in \intervalsdxu\}$ may provide a set of indices, meaning that there are ties w.r.t. the smallest changes we can impose in the interval. Since, in any interval, any change in any dimension is orthogonal to any changes in any other dimension, we can apply any tie-breaking rule, e.g., lexicographic ordering.}. If $x_i - x^\prime_i > 0$, then $I \ExOp{\delta} \mathbf{x}^\prime = [\bell^\prime, \bu]$, where $\ell^\prime_i = \ell_i$, for every $i \neq k$ and $\ell_k = x^\prime_k + \delta$, for some $\delta > 0$. If $x_i - x^\prime_i < 0$, then $I \ExOp{\delta} \mathbf{x}^\prime = [\bell, \bu^\prime]$, where $u^\prime_i = u_i$, for every $i \neq k$ and $u_k = x^\prime_k - \delta$, for some $\delta > 0$.
\end{definition}
Despite its subtleties, we will see in Sec. \ref{sec:approximations} that operator $\ExOp{\delta}$ is natural. $I\ExOp{\delta}\mathbf{x}^\prime$ chooses a \emph{maximum} refinement of $I$ that excludes $\mathbf{x}^\prime$. Further, to simplify notation, we will drop the $\delta$ from the notation of $\ExOp{\delta}$ when it's clear from the context.

\subsection{Interval Objectives}\label{subsec:interval-measures}

To formulate our methodology, we examine the following family of measures on intervals. We call these quantities objectives, since they are optimized in the computation of maximal sound or minimal complete intervals.
\begin{definition}[Interval Objectives]
    \label{def:interval-objectives}
    Consider an interval $I = [\bell, \bu] \in \intervalsdxu$, with $\bell \leq \bu$. Then we define the following measures:
    \begin{center}
        \begin{tabular}{p{11em}  l l}
            Minimum Edge Length:&  $\alpha(I)$ &$= \min_{i \in [d]} u_i - \ell_i$\\
            Perimeter:& $\pi(I)$            &$ = \sum_{i \in [d]} u_i - \ell_i$\\
            Volume:&    $\vol(I)$           &$= \prod_{i \in [d]} u_i - \ell_i$.\\
            Diameter:&  $\mathcal{A}(I)$    &$= \max_{i \in [d]} u_i - \ell_i = \|u_i - \ell_i\|_\infty$
        \end{tabular}
    \end{center}
\end{definition}
These objectives are related through the arithmetic-geometric means inequality.
\begin{restatable}{proposition}{numericalgeometricmean}
    \label{prop:numerical-geometric-mean}
    For an interval $I \in \intervalsdxu$ and the measures of Def. \ref{def:interval-objectives} we have,
    \begin{equation}
        \label{eq:numerical-geometric-mean}
        \mathcal{A}(I) \geq \frac{1}{d}\cdot\pi(I) \geq \sqrt[d]{v(I)}\geq \alpha(I)
    \end{equation}
\end{restatable}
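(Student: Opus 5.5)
The plan is to reduce everything to a statement about the $d$ nonnegative edge lengths. Set $a_i = u_i - \ell_i$ for $i \in [d]$. Since $I = [\bell, \bu] \in \intervalsdxu$ we have $\bell \leq \bu$, so every $a_i \geq 0$. In this notation the four objectives of Def.~\ref{def:interval-objectives} become
\[
\mathcal{A}(I)=\max_i a_i,\qquad \pi(I)=\sum_i a_i,\qquad \vol(I)=\prod_i a_i,\qquad \alpha(I)=\min_i a_i .
\]
The chain \eqref{eq:numerical-geometric-mean} is then just the classical ordering of max, arithmetic mean, geometric mean and min for a finite family of nonnegative reals. I will prove the three inequalities separately.

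\emph{First and third inequalities.} Each $a_i$ satisfies $a_i \leq \max_j a_j$. Summing over $i$ and dividing by $d$ gives $\frac{1}{d}\pi(I) \leq \mathcal{A}(I)$. For the other end, each $a_i \geq \min_j a_j = \alpha(I) \geq 0$. Multiplying these $d$ inequalities between nonnegative numbers gives $\vol(I) \geq \alpha(I)^d$. Since $t \mapsto \sqrt[d]{t}$ is monotone on $[0,\infty)$, this yields $\sqrt[d]{\vol(I)} \geq \alpha(I)$.

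\emph{Middle inequality.} This is exactly the AM--GM inequality for nonnegative reals, $\frac{1}{d}\sum_i a_i \geq \left(\prod_i a_i\right)^{1/d}$, and I would quote it as standard.

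\emph{Degenerate case.} If one wants a self-contained argument, the only point that needs care is when some $a_i = 0$, which happens for intervals that are flat in some coordinate. Then $\vol(I) = 0$, and the inequality holds trivially because the left side is a sum of nonnegative terms. If instead all $a_i > 0$, take logarithms and apply Jensen's inequality to the concave function $\log$:
\[
\log\Bigl(\tfrac{1}{d}\sum_i a_i\Bigr) \;\geq\; \tfrac{1}{d}\sum_i \log a_i \;=\; \log \sqrt[d]{\vol(I)} .
\]
Exponentiating gives the claim.

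I expect no real obstacle. The only subtlety is making sure nonnegativity of the edge lengths, guaranteed by $\bell \leq \bu$, is invoked, so that the $d$-th root and the monotonicity steps are legitimate.
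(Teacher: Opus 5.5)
Your proof is correct and follows essentially the same route as the paper: bound each edge length above by the maximum for the first inequality, quote AM--GM for the middle one, and bound each factor below by the minimum for the last. Your Jensen argument, with the zero-edge case handled separately, only makes the AM--GM step self-contained, which the paper leaves as a citation.
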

Eq. \eqref{eq:numerical-geometric-mean} highlights the significance of the minimum edge length measure within the family of interval measures defined in Def. \ref{def:interval-objectives}, since it provides an explicit lower bound on all other measures. Thus, it \emph{suffices} to ensure the non-triviality, i.e., strict positivity, of $\alpha(I)$, to ensure the non-triviality of all the remaining objectives. This fact supports the choice of the exclusion operation of Def. \ref{def:interval-exclusion}, since it computes the \emph{optimal} exclusion w.r.t. the minimum edge length objective.
\begin{restatable}{theorem}{intervalexclusion}
    \label{theo:interval-exclusion}
    Consider an interval $I \in \intervalsdxu$ and a point $\mathbf{x}^\prime \in I$. For any interval $J \in \intervalsdxu$, with $\mathbf{x}^\prime \notin J$ and $J \subseteq I$, we have $\alpha(J) \leq \alpha(I/\mathbf{x}^\prime)$.
\end{restatable}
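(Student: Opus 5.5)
The plan is a direct comparison argument. Write $I=[\bell,\bu]$ and $J=[\bm,\bn]$, with $\mathbf{x}\in J\subseteq I$ and $\mathbf{x}'\notin J$. I would bound $\alpha(J)$ from above by two quantities and then show that $\alpha(I/\mathbf{x}')$ dominates both. Throughout, $\delta>0$ is treated as the infinitesimal offset of Def.~\ref{def:interval-exclusion}. All inequalities are meant up to this offset, i.e.\ in the limit $\delta\to 0^+$ (equivalently, comparing infima).

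\emph{Step 1: two upper bounds on $\alpha(J)$.}
\begin{itemize}
\item Since $J\subseteq I$, we have $m_i\ge \ell_i$ and $n_i\le u_i$ for every $i$. Hence $n_i-m_i\le u_i-\ell_i$, and taking minima gives $\alpha(J)\le\alpha(I)$.
\item Since $\mathbf{x}'\notin J$ while $\mathbf{x}\in J$, there is a coordinate $j$ with $x'_j\notin[m_j,n_j]$ and $x_j\in[m_j,n_j]$. In particular $x_j\neq x'_j$.
\item If $x'_j<x_j$, then $m_j>x'_j$, so $n_j-m_j< u_j-x'_j$. Symmetrically, if $x'_j>x_j$, then $n_j-m_j<x'_j-\ell_j$.
\end{itemize}
Thus $\alpha(J)\le\min\{\alpha(I),\,R_j\}$, where $R_j$ denotes the residual edge length of $I$ along $j$ after cutting at $x'_j$ on the side of $\mathbf{x}'$ away from $\mathbf{x}$.

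\emph{Step 2: compute $\alpha(I/\mathbf{x}')$.} By Def.~\ref{def:interval-exclusion}, only coordinate $k$ changes. This gives
\[
\alpha(I/\mathbf{x}')=\min\Bigl\{\min_{i\neq k}(u_i-\ell_i),\;R_k-\delta\Bigr\}.
\]
Here $R_k\ge |x_k-x'_k|$, because $\mathbf{x}$ remains in the cut interval. Since $R_k\le u_k-\ell_k$, this equals $\min\{\alpha(I),R_k\}$ up to $\delta$.

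\emph{Step 3: compare, the main obstacle.} It remains to show $\min\{\alpha(I),R_j\}\le\min\{\alpha(I),R_k\}$ for the coordinate $j$ from Step 1. Equivalently, whenever $R_k<\alpha(I)$ we need $R_j\le R_k$. This is where the choice of $k$ as an $\arg\max$ of $\abs{x_i-x'_i}$ must be used, and it is the step I expect to be delicate.

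My first attempt will decompose each residual as
\[
R_i=\abs{x_i-x'_i}+s_i,
\]
where $s_i$ is the slack of $I$ on the far side of $\mathbf{x}$ along $i$, that is, $u_i-x_i$ or $x_i-\ell_i$. I would then try to exploit that $J$ is only required to contain $\mathbf{x}$. If the slack terms alone do not close the gap, I would fall back on the following argument: any better $J$ could be modified by moving its cut to coordinate $k$ without decreasing $\alpha$. This reduces the claim to a single-coordinate comparison governed by the maximal displacement $\abs{x_k-x'_k}$.

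I also expect that this step may require reading the tie-breaking and side conventions of Def.~\ref{def:interval-exclusion} carefully, possibly in the regime where $\alpha(I)$ is the binding term. In that regime the inequality holds trivially from Step 1.
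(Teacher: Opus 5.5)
Your Steps 1 and 2 are correct. They reduce the theorem to the single claim in Step 3: $R_j\le R_k$ whenever $R_k<\alpha(I)$. But with $k=\arg\max_i\lvert x_i-x'_i\rvert$ that claim is false, so Step 3 cannot be completed by any argument. Your own decomposition $R_i=\lvert x_i-x'_i\rvert+s_i$ shows why: the choice of $k$ controls only the displacement term, never the slack $s_i$. For a concrete instance, take $d=2$, $I=[0,10]^2$, $\mathbf{x}=(9,5)$, $\mathbf{x}'=(6,3)$. The displacements are $3$ and $2$, so $k=1$. Then $I/\mathbf{x}'=[6+\delta,10]\times[0,10]$ and $\alpha(I/\mathbf{x}')=4-\delta$. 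Now take $J=[0,10]\times[3+\delta,10]$. It lies in $I$, contains $\mathbf{x}$, excludes $\mathbf{x}'$, and has $\alpha(J)=7-\delta$. Here $R_1=3+1=4<R_2=2+5=7<\alpha(I)=10$. This is exactly the regime you flagged as delicate, and the inequality fails there even in the limit $\delta\to0^+$. Your fallback (moving the cut of a better $J$ onto coordinate $k$) also fails on this instance, since it lowers $\alpha$ from $7-\delta$ to $4-\delta$. So the statement, read literally with Def.~\ref{def:interval-exclusion}, is false.

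The paper's proof has the same outline as yours and simply asserts the missing step. It writes $u'_i-\ell'_i\le\max_i\{x'_i-\ell_k,\,u_k-x'_i\}\le\max\{x'_k-\ell_k,\,u_k-x'_k\}$, which does not follow from the choice of $k$. It also uses the larger side at $k$ rather than the side containing $\mathbf{x}$. What your Steps 1--2 do prove is a corrected version. Choose $k$ to maximize the residual $R_i$; this matches the prose before the definition, ``the induced modification to $I$ is minimal''. Then Step 3 is immediate:
\[
\alpha(J)\le\min\{\alpha(I),R_j\}\le\min\{\alpha(I),R_k\}\le\alpha(I/\mathbf{x}')+\delta .
\]
The $+\delta$ cannot be dropped for fixed $\delta$. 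In the example, the corrected rule gives $I/\mathbf{x}'=[0,10]\times[3+\delta,10]$, which is beaten by $[0,10]\times[3+\delta/2,10]$. So your limit reading is needed even after the fix. The displacement rule agrees with the residual rule only when all slacks $s_i$ coincide, e.g.\ when $I$ is a cube centred at $\mathbf{x}$.
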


    \section{Sound \& Complete Interval Certifications}
    \label{sec:approximations}
    \begin{figure}[t!]
    \begin{center}
        \includegraphics[scale=0.25]{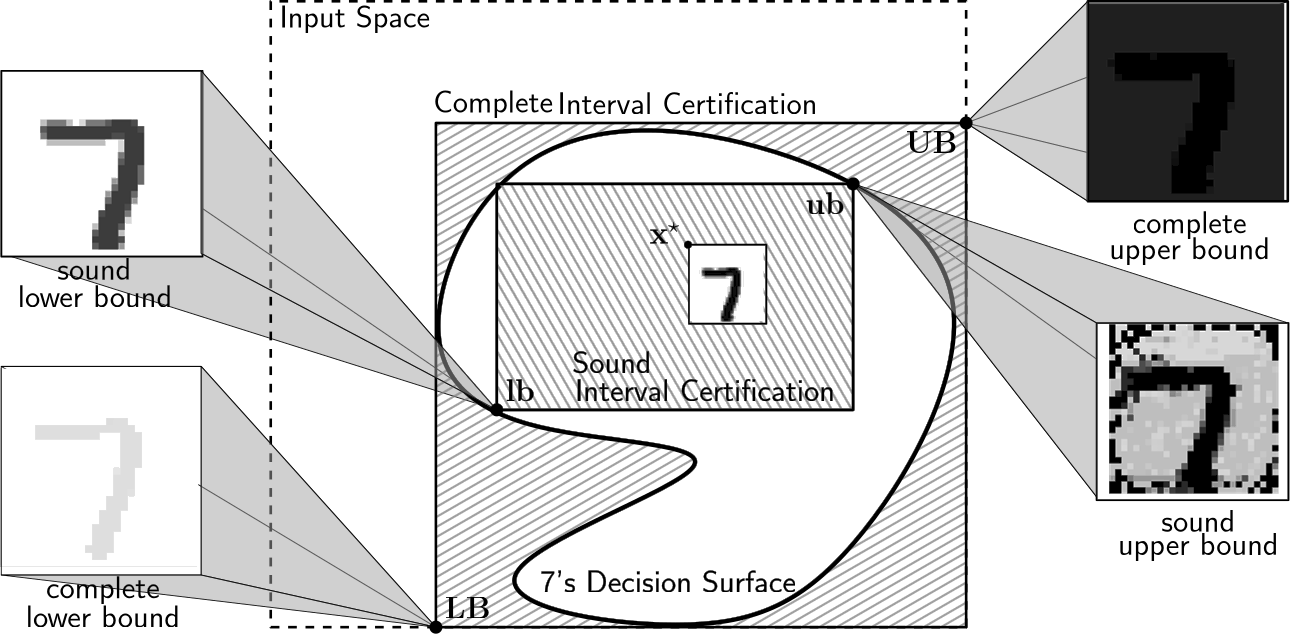}
    \end{center}
    \caption{A sound interval certification $[\mathbf{lb}, \mathbf{ub}]$, and a complete interval certification $[\mathbf{LB}, \mathbf{UB}]$, for a MNIST image of ``7''. The sound interval certification has been computed using the Algorithm \ref{algo:generic-traversal}
    , while its complete counterpart was computed by Algorithm \ref{algo:non-det-expand}
    in $\ell_\infty$-spheres.}
    \label{fig:sevens}
\end{figure}

\begin{definition}
    \label{def:interval-sound-complete}
    Consider a classifier $\kappa \colon \mathbb{F} \to \mathcal{C}$, with $\mathbb{F} \subset \mathbb{R}^{d}$, and $\mathbf{x} \in \mathbb{F}$ an input, s.t. $\kappa(\mathbf{x}) = c$. Moreover, let $I, J \in \mathbb{I}(d)$ two intervals, s.t.
    \begin{equation}
        \label{eq:interval-sound-complete}
    \resizebox{0.931\hsize}{!}{
    $
        \mathbf{x} \in I \land \left[\forall \mathbf{x}^\prime \in \mathbb{F}\colon \ \mathbf{x}^\prime \in I \to \kappa(\mathbf{x}^\prime) = c \right], \enspace
        \mathbf{x} \in J  \land \left[\forall \mathbf{x}^\prime \in \mathbb{F}\colon \ \mathbf{x}^\prime \notin J \to \kappa(\mathbf{x}^\prime) \neq c \right],
    $
    }
    \end{equation}
    then we call $I$ a \emph{sound} and $J$ a complete \emph{certification}.
\end{definition}
Intuitively, a sound certification $I$ assures that any perturbation of the input $\mathbf{x}$ inside $I$ will \emph{not} change its prediction. On the other hand, a complete certification $J$ asserts that if $\mathbf{x}$ is moved outside of $J$, then its prediction is \emph{guaranteed} to change (see Fig. \ref{fig:sevens}). Naturally, a \emph{trivial} sound certification is the input itself, expressed as an interval, i.e., $I = [\mathbf{x}, \mathbf{x}]$. Similarly, a trivial complete certification will be the entire input space $\mathbb{F}$ itself. Therefore, we are interested in \emph{maximally} sound and \emph{minimally} complete certifications. Since $I, J$ are subsets of $\mathbb{F}$, maximality and minimality are considered w.r.t.\ set inclusion.

\subsection{Verification Oracles}

In our algorithms, we will utilize two oracles that either verify the truth of a property on a given interval, or return a \emph{counterexample}, witnessing its falsehood. In particular, we are interested in \emph{soundness} and \emph{completeness} oracles, verifying an interval's respective property. These oracles can be constructed using a sound and complete MLP verifier\footnote{
    Note that soundness and completeness of a MLP verifier, w.r.t. 
    Sec.\ref{sec:preliminaries} differ from soundness and completeness of an interval certification w.r.t. Def. \ref{def:interval-sound-complete}.
}. Indeed, we make use of the Marabou verifier \cite{marabou}, where we \emph{extend} the set of constraints fed to the verifier. To that end, let $\mathcal{N}(\mathbf{x}, \mathbf{y})$ be the I/O-relation describing a MLP, as in eq. \eqref{eq:milp}.

For a given MLP, described by the relation $\mathcal{N}(\cdot, \cdot)$ and an interval $I = [\bell, \bu] \subseteq \mathbb{F}$, the \emph{soundness oracle} is given by the predicate $\mathscr{S}_{I, \mathcal{N}}$, where,
\begin{equation}
    \label{eq:sound-oracle}
    \resizebox{0.93\hsize}{!}{
    $
    \mathscr{S}_{I, \mathcal{N}}(\mathbf{x}^\prime) \equiv \exists\mathbf{y}.
        ~(\bell \leq \mathbf{x}^\prime \leq \bu)~ \land
        ~\mathcal{N}(\mathbf{x}^\prime, \mathbf{y})~ \land
        ~\left( \bigvee_{j \in [\dout] \setminus c} y_j - y_{c} > \epsilon \right).
    $
    }
\end{equation}
If the predicate $\mathscr{S}_{I, \mathcal{N}}(\mathbf{x}^\prime)$ is a \emph{contradiction}, i.e., $\models \lnot \mathscr{S}_{I, \mathcal{N}}(\mathbf{x}^\prime)$, the interval $I$ is a sound certification. However, any $\mathbf{x} \in \mathbb{F}$, s.t. $\mathbf{x}^\prime \models \mathscr{S}_{I, \mathcal{N}}(\mathbf{x}^\prime)$ is a \emph{counterexample}. In the literature, such an input is also called \emph{adversarial example} or \emph{adversarial attack}. Such a counterexample $\mathbf{x}^\prime$ can \emph{``fool''} the MLP, since it will be relatively close to a given input $\mathbf{x}$, but is classified to a different class from $c = \kappa(\mathbf{x})$. The latter is ensured by the last clause in eq. \eqref{eq:sound-oracle}, by asserting that a score different than $c$ is the highest in the output scores vector, w.r.t. a constant $\epsilon > 0$.

Conversely, the \emph{completeness oracle} is provided by predicate $\mathscr{C}_{I, \mathcal{N}}$, where,
\begin{equation}
    \label{eq:complete-oracle}
    \resizebox{0.93\hsize}{!}{
    $
    \mathscr{C}_{I, \mathcal{N}}(\mathbf{x}^\prime) \equiv 
        \left( \bigvee_{i \in [\din]} x_i^\prime < \ell_i  \lor u_i < x_i^\prime \right)~ \land ~\mathcal{N}(\mathbf{x}^\prime, \mathbf{y})~ \land ~\left( \bigwedge_{j \in [\dout] \setminus c} y_{c} - y_j > \epsilon \right).
    $
    }
\end{equation}
If the predicate $\mathscr{C}_{I, \mathcal{N}}(\mathbf{x}^\prime)$ is a \emph{contradiction}, i.e., $\models \lnot \mathscr{C}_{I, \mathcal{N}}(\mathbf{x}^\prime)$, then the interval $I$ is a complete certification. However, any $\mathbf{x} \in \mathbb{F}$, s.t.\ $\mathbf{x}^\prime \models \mathscr{C}_{I, \mathcal{N}}(\mathbf{x}^\prime)$ is a \emph{counterexample}. 
Essentially, the completeness predicate is the dual of $\mathscr{S}_{I, \mathcal{N}}$. 
Notably, since we use a sound and complete verification system, invoking both the oracles $\mathscr{S}_{I, \mathcal{N}}$ and $\mathscr{C}_{I, \mathcal{N}}$ inherits the computational hardness of MLP verification \cite{katz}.

Finally, observe that the formulas in eq. \eqref{eq:sound-oracle} and \eqref{eq:complete-oracle} can be use to \emph{``locally''} describe the MLP. Consider a sound certification $I$ and a complete verification $J$, then it holds $\lnot \mathscr{I}_{I, \mathcal{N}}(\mathbf{x}) \models \kappa(\mathbf{x}) = c \models \lnot \mathscr{C}_{J, \mathcal{N}}(\mathbf{x})$, since $J \supseteq \mathcal{D}_c \supseteq I$. Ensuring completeness minimality and soundness maximality minimizes the gap between $J, I$ and provides a \emph{tighter} local approximation of the underlying MLP.

\subsection{The Generic Traversal Algorithm}

In Algorithm \ref{algo:generic-traversal} we present a simple traversal algorithm that explores the interval lattice $\intervalsdxu$ introduced in Sec. \ref{sec:intervals}. Our algorithm is generic in the sense that it can compute interval certifications with different properties depending on the parameters passed as arguments. In the sequel, we show how to implement three \emph{big-step}\footnote{Borrowing static analysis terminology.} operators on intervals. Each big step operator is the result of iteratively applying one of the \emph{small-step} operators introduced in Sec. \ref{sec:intervals}, i.e., $\{+, \sqcup, \sqcap, /\}$.

\begin{algorithm}[t]
    \DontPrintSemicolon
    \caption{\textsc{GenericTraversal}}
    \label{algo:generic-traversal}
    \KwInput{$I_0 \in \intervalsdxu$, an initial interval, $\varphi_{I, \mathcal{N}}$, a property to be falsified, and $\square\colon \intervalsdxu \times \mathbb{F} \to \intervalsdxu$, a refinement operator.}
    \KwOutput{$I$,  an interval s.t. $\models \lnot \varphi_{I, \mathcal{N}}$}

    $I \gets I_0$ 
    
    \While{$\exists \mathbf{x}^\prime$, s.t. $\mathbf{x}^\prime \models \varphi_{I, \mathcal{N}}$}{
        $I \gets I ~\square~\mathbf{x}^\prime$
    }
\end{algorithm}

\subsection{Minimally Complete Certifications via the $\top$--Operator}\label{subsec:minimally-complete}

We begin our discussion on \emph{maximally complete certifications} by defining the $\top$-operator (see Def. \ref{def:top-operator}), between a \emph{compact} decision surface $\mathcal{D}_c \subseteq \mathbb{F}$ and an input point $\mathbf{x} \in \mathbb{F}$, s.t. $\kappa(\mathbf{x}) = c$. We see that we can obtain a maximally complete interval certification, w.r.t. $\mathcal{D}_c$ and $\mathbf{x}$ as a result of the $\top$-operator. Moreover, the complete interval certification is \emph{unique} for a particular class, and independent from the choice of the input $\mathbf{x}$. We show this fact in Th. \ref{theo:complete-certification}.
\begin{definition}[$\top$-Operator]
    \label{def:top-operator}
    Consider a classifier $\kappa\colon\mathbb{F} \to \mathcal{C}$, an input point $\mathbf{x} \in \mathbb{F}$, s.t. $\kappa(\mathbf{x}) = c$, and the compact decision surface $\mathcal{D}_c$. With $\mathcal{D}_c\top\mathbf{x}$ we denote the interval $[-\underline{\mathbf{R}}, \overline{\mathbf{R}}]$, s.t.,
    \begin{equation}
        \label{eq:Radii}
            \resizebox{0.93\hsize}{!}{
    $ \uRadi = \max\{x_i - x^\prime_i \mid x_i^\prime \leq x_i, \ \mathbf{x}^\prime \in \mathcal{D}_c\}, \ \oRadi = \max\{x^\prime_i - x_i \mid x_i^\prime > x_i, \ \mathbf{x}^\prime \in \mathcal{D}_c\}.$
    }
    \end{equation}
\end{definition}
\begin{restatable}{theorem}{completecertification}
    \label{theo:complete-certification}
    The interval $\mathcal{D}_c\top\mathbf{x}$ is the \emph{unique} complete certification. Moreover, for any two $\mathbf{x}_1, \mathbf{x}_2 \in \mathcal{D}_c$, we have $\mathcal{D}_c\top\mathbf{x}_1 = \mathcal{D}_c\top\mathbf{x}_2$.
\end{restatable}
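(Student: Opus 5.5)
The plan is to identify $\mathcal{D}_c\top\mathbf{x}$ with the \emph{interval hull} (bounding box) of $\mathcal{D}_c$. We read the statement as saying that $\mathcal{D}_c\top\mathbf{x}$ is the unique \emph{minimally} complete certification. We also read the notation $[-\underline{\mathbf{R}}, \overline{\mathbf{R}}]$ as the interval $[\mathbf{x}-\underline{\mathbf{R}}, \mathbf{x}+\overline{\mathbf{R}}]$ anchored at $\mathbf{x}$, as the definition of the radii in eq. \eqref{eq:Radii} suggests.

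The first step is to compute the radii explicitly. Set $m_i = \min\{x'_i \mid \mathbf{x}'\in\mathcal{D}_c\}$ and $M_i = \max\{x'_i \mid \mathbf{x}'\in\mathcal{D}_c\}$. Both extrema are attained because $\mathcal{D}_c$ is compact and the coordinate projections are continuous. Since $\mathbf{x}\in\mathcal{D}_c$, the set defining $\uRadi$ contains $\mathbf{x}' = \mathbf{x}$, so it is nonempty and $\uRadi = x_i - m_i \ge 0$. For $\oRadi$, the strict inequality $x'_i > x_i$ may leave the set empty. In that case $M_i = x_i$, and we adopt the convention $\max\varnothing = 0$, giving $\oRadi = M_i - x_i$ in every case. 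Hence
\[
\mathcal{D}_c\top\mathbf{x} = [\mathbf{m}, \mathbf{M}].
\]
This expression does not involve $\mathbf{x}$ at all, so the second claim of the theorem ($\mathcal{D}_c\top\mathbf{x}_1 = \mathcal{D}_c\top\mathbf{x}_2$ for all $\mathbf{x}_1,\mathbf{x}_2\in\mathcal{D}_c$) follows immediately. Handling this edge case and the compactness argument is the only genuinely delicate point; everything else is bookkeeping.

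The second step characterizes complete certifications set-theoretically. By Def. \ref{def:interval-sound-complete}, an interval $J\ni\mathbf{x}$ is complete iff every $\mathbf{x}'\in\mathbb{F}\setminus J$ satisfies $\kappa(\mathbf{x}')\neq c$. Equivalently, $(\mathbb{F}\setminus J)\cap\mathcal{D}_c = \varnothing$, that is, $\mathcal{D}_c\subseteq J$. So the complete certifications are exactly the intervals in $\intervalsdx$ that contain $\mathcal{D}_c$.

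The third step shows that $[\mathbf{m},\mathbf{M}]$ is the least element of this family. It contains $\mathcal{D}_c$ by the definition of $m_i, M_i$, and it contains $\mathbf{x}$ since $\mathbf{x}\in\mathcal{D}_c$, so it is complete. Conversely, let $J=[\bell,\bu]\supseteq\mathcal{D}_c$. For each $i$, choose $\mathbf{x}'\in\mathcal{D}_c$ attaining $m_i$; then $\ell_i\le x'_i = m_i$, and symmetrically $u_i\ge M_i$. Hence $[\mathbf{m},\mathbf{M}]\subseteq J$. Equivalently, $[\mathbf{m},\mathbf{M}]$ is the meet $\sqcap$ of all complete certifications in the lattice of Th. \ref{theo:interval-lattice}. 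Being the minimum under $\subseteq$, it is the unique minimal complete certification: any other minimal one would contain it and therefore equal it.
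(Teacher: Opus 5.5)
Your proof is correct and follows essentially the paper's route. The paper argues that $\mathcal{D}_c\top\mathbf{x}$ is complete because no point of $\mathcal{D}_c$ lies beyond its extremal coordinates, that every complete $J=[\bell,\bu]$ has $\ell_i\le\min_{\mathbf{x}'\in\mathcal{D}_c}x'_i$ and $u_i\ge\max_{\mathbf{x}'\in\mathcal{D}_c}x'_i$, and that the endpoints do not depend on $\mathbf{x}$. Your explicit identification $\mathcal{D}_c\top\mathbf{x}=[\mathbf{m},\mathbf{M}]$, together with the empty-set convention for $\oRadi$, makes that last point cleaner than the paper's ``shifting'' remark.

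Your reading of ``unique'' as ``unique minimal'' is also the right one. The paper tries to conclude $J=\mathcal{D}_c\top\mathbf{x}$ by claiming that a point of $J\setminus\mathcal{D}_c$ would violate completeness, which it does not; the paper itself notes that $\mathbb{F}$ is a complete certification. So the least-element statement you prove is exactly what can be established.
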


The interval $\mathcal{D}_c\top\mathbf{x}$ can be computed using Algorithm \ref{algo:generic-traversal}, with the parameters $I_0 = [\mathbf{x}, \mathbf{x}]$, $\varphi_{I, \mathcal{N}} = \mathscr{C}_{I, \mathcal{N}}$, and $I ~\square~ \mathbf{x}^\prime = I \sqcup [\mathbf{x}^\prime -\delta\mathbf{1}, \mathbf{x}^\prime +\delta\mathbf{1}]$, where $\delta > 0$ is a precision constant. Intuitively, with these parameters Algorithm \ref{algo:generic-traversal} implements a \emph{bottom-up} search policy (\textsc{BUS}), beginning from the trivial interval $[\mathbf{x}, \mathbf{x}]$, proceeding to greater intervals, by including members of the surface $\mathcal{D}_c$. The precision parameter $\delta > 0$ regulates both the outcome's accuracy and the convergence rate of our algorithm.
\begin{restatable}{theorem}{algomincomplete}
    \label{theo:algo-min-complete}
    Consider a compact decision surface $\mathcal{D}_c \subset \mathbb{F}$. Let $I \in \intervalsdxu$ the interval returned by Algorithm \ref{algo:generic-traversal}. It holds that $\mathcal{D}_c\top\mathbf{x} \subseteq I \subseteq \mathcal{D}_c\top\mathbf{x} + [-\delta\mathbf{1}, \delta\mathbf{1}]$. Moreover, Algorithm \ref{algo:generic-traversal} terminates after $O(d\cdot\mathcal{A}(\mathcal{D}_c\top\mathbf{x})/\delta)$ steps.
\end{restatable}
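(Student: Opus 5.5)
We prove the two inclusions and the step bound separately, using the invariant that every interval produced by Algorithm \ref{algo:generic-traversal} with $I ~\square~ \mathbf{x}^\prime = I \sqcup [\mathbf{x}^\prime-\delta\mathbf{1},\mathbf{x}^\prime+\delta\mathbf{1}]$ lies inside $T_\delta := \mathcal{D}_c\top\mathbf{x} + [-\delta\mathbf{1},\delta\mathbf{1}]$. Throughout we read $\mathcal{D}_c\top\mathbf{x}$ as the smallest interval containing $\mathcal{D}_c$, i.e.\ the coordinatewise hull $[\mathbf{x}-\underline{\mathbf{R}}, \mathbf{x}+\overline{\mathbf{R}}]$. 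This reading is what Theorem \ref{theo:complete-certification} expresses, and compactness of $\mathcal{D}_c$ makes all maxima in eq.~\eqref{eq:Radii} attained.

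\textbf{Upper inclusion.} We argue by induction on the iterations. Initially $I_0=[\mathbf{x},\mathbf{x}]\subseteq T_\delta$. Whenever the loop body executes, the counterexample $\mathbf{x}^\prime$ satisfies $\mathscr{C}_{I,\mathcal{N}}(\mathbf{x}^\prime)$. By eq.~\eqref{eq:complete-oracle}, this means $\kappa(\mathbf{x}^\prime)=c$ with margin $\epsilon$, so $\mathbf{x}^\prime\in\mathcal{D}_c\subseteq \mathcal{D}_c\top\mathbf{x}$. Consequently the box $[\mathbf{x}^\prime-\delta\mathbf{1},\mathbf{x}^\prime+\delta\mathbf{1}]$ lies in $T_\delta$. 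Since $T_\delta$ is an interval, it is an upper bound in the lattice of Theorem \ref{theo:interval-lattice}, so the join of $I$ with this box stays in $T_\delta$. Proposition \ref{prop:interval-ops} guarantees that every iterate remains in $\intervalsdxu$.

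\textbf{Lower inclusion.} At termination the oracle reports $\models\lnot\mathscr{C}_{I,\mathcal{N}}$, so $I$ is a complete certification: every point of $\mathcal{D}_c$ lies in $I$. Because $I$ is an interval containing $\mathcal{D}_c$, it contains the hull $\mathcal{D}_c\top\mathbf{x}$. Equivalently, Theorem \ref{theo:complete-certification} identifies $\mathcal{D}_c\top\mathbf{x}$ as the minimal complete interval. One subtlety must be handled here: the oracle only detects points whose margin exceeds $\epsilon$. We will state the identification of $\mathcal{D}_c$ with the $\epsilon$-margin region as the standing convention implicit in eq.~\eqref{eq:complete-oracle}.

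\textbf{Termination.} This is the main obstacle, since a naive argument gives no progress measure. We will use the potential $\pi(I)$, the perimeter. Each counterexample $\mathbf{x}^\prime$ lies outside $I=[\bell,\bu]$, so for some coordinate $i$ we have $x^\prime_i<\ell_i$ or $x^\prime_i>u_i$. After the join, the new bound in that coordinate is $x^\prime_i-\delta<\ell_i-\delta$, or symmetrically $u_i+\delta$. Hence some edge grows by at least $\delta$, while no edge ever shrinks, since join is monotone. By the upper inclusion, each edge length is bounded by $\mathcal{A}(\mathcal{D}_c\top\mathbf{x})+2\delta$, so $\pi(I)\le d(\mathcal{A}(\mathcal{D}_c\top\mathbf{x})+2\delta)$. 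Each step adds at least $\delta$ to $\pi$ starting from $0$, so the number of iterations is at most $d(\mathcal{A}(\mathcal{D}_c\top\mathbf{x})+2\delta)/\delta = O(d\cdot\mathcal{A}(\mathcal{D}_c\top\mathbf{x})/\delta)$. When $\mathcal{A}$ is small compared to $\delta$, the additive $O(d)$ term must be absorbed into the $O$-notation.
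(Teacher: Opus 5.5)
Your proof is correct and follows the same route as the paper's. The lower inclusion comes from completeness of the final iterate plus minimality of $\mathcal{D}_c\top\mathbf{x}$ (Theorem~\ref{theo:complete-certification}). The upper inclusion comes from every witness lying in $\mathcal{D}_c$, so each join adds at most $\delta$ of slack per side. The step bound comes from the perimeter potential (the paper's $\Phi$), capped by $d(\mathcal{A}(\mathcal{D}_c\top\mathbf{x})+2\delta)$.

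Your version is actually more careful than the paper's in two respects. You prove containment in $\mathcal{D}_c\top\mathbf{x}+[-\delta\mathbf{1},\delta\mathbf{1}]$ for every iterate, so the potential argument genuinely establishes termination rather than presupposing a final index $k$. You also use the needed direction, an increase of more than $\delta$ per step, where the paper writes $\Phi(I_j)\le\Phi(I_{j-1})+\delta$.
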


\subsection{Edge Length Non-Triviality in Sound Certifications via the $\bot$-Operator}

Before we proceed to maximally sound certifications, we present the $\bot$-operator that returns a special kind of sound certifications. These certifications are guaranteed (under certain assumptions) to have strictly positive minimum edge length. Let $\mathbf{x} \in \mathbb{F}$ be an input point. We consider the sequence of polyhedral cones\footnote{
    \label{foot:geometry}
    A polyhedral cone is a set of the form $C = \{\mathbf{x} \in \mathbb{R}^n \mid A\mathbf{x} \leq \mathbf{0}\}$, where $A \in \mathbb{R}^{m \times n}$. We briefly explore the geometry of the cones and their intersections in Appendix~\ref{app:geometry}.
} $\angles{\underline{V}_i, \overline{V}_i}$ for each $i \in [d]$, in eq. \eqref{eq:cones}. We have $\underline{V}_i \cap \overline{V}_i = \{\mathbf{0}\}$, for every $i \in [d]$, and $(\cup_{i \in [d]} \underline{V}_i) \cup (\cup_{i \in [d]} \overline{V}_i) = \mathbb{R}^d$. Using the cone partition of eq. \eqref{eq:cones} we define the $\bot$-operator.
\begin{equation}
    \label{eq:cones}
    \begin{split}
    & \underline{V}_i = \left\{\mathbf{x}^\prime \in \mathbb{F}~\big\vert~ x^\prime_i \leq x_i, \forall j \in [d] ~\abs{x^\prime_i} \geq \abs{x^\prime_j} \right\}, \\
    & \overline{V}_i = \left\{\mathbf{x}^\prime \in \mathbb{F}~\big\vert~ x^\prime_i \geq x_i, \forall j \in [d] ~\abs{x^\prime_i} \geq \abs{x^\prime_j} \right\}.
    \end{split}
\end{equation}

\begin{definition}[$\bot$-Operator]
    \label{def:bot-operator}
    Consider a classifier $\kappa\colon\mathbb{F} \to \mathcal{C}$, an input point $\mathbf{x} \in \mathbb{F}$, s.t. $\kappa(\mathbf{x}) = c$, and the compact decision surface $\mathcal{D}_c$. With $\mathcal{D}_c\bot\mathbf{x}$ we denote the interval $[-\underline{\mathbf{r}}, \overline{\mathbf{r}}]$, s.t.,
    \begin{equation}
        \label{eq:radii}
        \uradi = \inf\{x_i - x^\prime_i \mid \mathbf{x}^\prime \in \underline{V_i} \setminus \mathcal{D}_c\}, \quad \oradi = \inf\{x^\prime_i - x_i \mid \mathbf{x}^\prime \in \overline{V_i} \setminus \mathcal{D}_c\}.
    \end{equation}
\end{definition}
In Prop. \ref{prop:maximal-sound-approx} we establish the soundness of $\mathcal{D}_c\bot\mathbf{x}$.

\begin{restatable}{proposition}{propmaximalsoundapprox}
    \label{prop:maximal-sound-approx}
    Let $\mathcal{D}_c \subseteq \mathbb{F}$ be a compact decision surface, s.t. $\mathbf{x} \in \mathcal{D}_c$. It holds $\mathcal{D}_c\bot\mathbf{x} \subseteq \mathcal{D}_c$.
\end{restatable}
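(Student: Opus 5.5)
The plan is a direct argument by contradiction on a single point, combining the cone partition of eq. \eqref{eq:cones} with the infimum definition of the radii in eq. \eqref{eq:radii}. Throughout, I read the cones and the interval $\mathcal{D}_c\bot\mathbf{x}$ as being centred at $\mathbf{x}$. So $\mathcal{D}_c\bot\mathbf{x} = [\mathbf{x}-\underline{\mathbf{r}}, \mathbf{x}+\overline{\mathbf{r}}]$, and $\mathbf{x}'\in\overline{V}_i$ means $x'_i \geq x_i$ and $\abs{x'_i - x_i}\geq \abs{x'_j-x_j}$ for all $j$; $\underline{V}_i$ is defined symmetrically. I also assume $\mathbb{F}$ is star-shaped around $\mathbf{x}$ (e.g.\ convex, such as an interval universe), so that the segment from $\mathbf{x}$ to any $\mathbf{x}'\in\mathbb{F}$ stays in $\mathbb{F}$.

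Take an arbitrary $\mathbf{x}' \in \mathcal{D}_c\bot\mathbf{x}$ and suppose, for contradiction, that $\mathbf{x}'\notin\mathcal{D}_c$. Since $\mathbf{x}\in\mathcal{D}_c$, we have $\mathbf{x}'\neq\mathbf{x}$. Because the cones cover the space, $\mathbf{x}'$ lies in some cone; by symmetry take $\mathbf{x}'\in\overline{V}_i$, with $i$ an index attaining $\|\mathbf{x}'-\mathbf{x}\|_\infty$. This forces $s := x'_i - x_i = \|\mathbf{x}'-\mathbf{x}\|_\infty > 0$. Membership in the interval gives $s \leq \oradi$. On the other hand, $\mathbf{x}'\in\overline{V}_i\setminus\mathcal{D}_c$, so by eq. \eqref{eq:radii} we get $s\geq\oradi$. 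Hence $s=\oradi$, which is not yet a contradiction.

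To close the gap I will use compactness of $\mathcal{D}_c$. Since $\mathcal{D}_c$ is closed, its complement is open (relative to $\mathbb{F}$), so there is $\rho>0$ with $\mathcal{B}^\infty(\mathbf{x}',\rho)\cap\mathbb{F}$ disjoint from $\mathcal{D}_c$. Now shrink towards the centre: set $\mathbf{x}'' = \mathbf{x} + t(\mathbf{x}'-\mathbf{x})$ with $t<1$ close enough to $1$ that $(1-t)s<\rho$. Then three things hold.
\begin{itemize}
\item $\mathbf{x}''\in\mathcal{B}^\infty(\mathbf{x}',\rho)$, and $\mathbf{x}''\in\mathbb{F}$ by the star-shapedness assumption, so $\mathbf{x}''\notin\mathcal{D}_c$.
\item $\mathbf{x}''\in\overline{V}_i$, because the cone conditions are invariant under positive scaling of $\mathbf{x}'-\mathbf{x}$.
\item $x''_i - x_i = ts < \oradi$.
\end{itemize}
This contradicts the definition of $\oradi$ as an infimum. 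The case $\mathbf{x}'\in\underline{V}_i$ is identical with $\uradi$. Therefore $\mathbf{x}'\in\mathcal{D}_c$, and so $\mathcal{D}_c\bot\mathbf{x}\subseteq\mathcal{D}_c$.

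The main obstacle is exactly this boundary case, where the infimum is attained by a point outside $\mathcal{D}_c$. It is resolved by the openness of the complement together with the conic (scale-invariant) shape of the regions $\overline{V}_i,\underline{V}_i$. A secondary point to state carefully is the interpretation of the cones relative to $\mathbf{x}$ and the mild geometric assumption on $\mathbb{F}$ needed for $\mathbf{x}''\in\mathbb{F}$. If some $\overline{V}_i\setminus\mathcal{D}_c$ is empty, its infimum is $+\infty$ and no point of the interval can lie in that cone outside $\mathcal{D}_c$, so that case is vacuous.
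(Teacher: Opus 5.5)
Your argument is correct and has the same skeleton as the paper's proof. Like the paper, you take $\mathbf{x}'\in\mathcal{D}_c\bot\mathbf{x}\setminus\mathcal{D}_c$, place it in the cone of its dominant coordinate, and compare the interval bound with the infimum in eq.~\eqref{eq:radii}.

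The difference is the final step, and there your version is the rigorous one. The paper (working with $\mathbf{x}=\mathbf{0}$) simply asserts the strict inequality $\underline{r}_j<-x^a_j$. The infimum only yields $\leq$, and the paper's proof never uses compactness. Your sliding argument is exactly what rules out the infimum being attained outside $\mathcal{D}_c$: it uses openness of the complement of the closed set $\mathcal{D}_c$ and positive homogeneity of the cones. Closedness cannot be dropped. With $\mathbb{F}=[-1,2]$, $\mathcal{D}_c=[0,1)$ and $\mathbf{x}=1/2$, both radii equal $1/2$, so $\mathcal{D}_c\bot\mathbf{x}=[0,1]\not\subseteq\mathcal{D}_c$.

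One point to tighten concerns your claim that the empty case is vacuous. With the cones taken inside $\mathbb{F}$ as in eq.~\eqref{eq:cones}, an empty $\overline{V}_i\setminus\mathcal{D}_c$ gives $\overline{r}_i=+\infty$. The interval then leaves $\mathbb{F}$, and its points outside $\mathbb{F}$ lie in no cone. So ``vacuous'' holds only once the interval is understood as lying inside the universe. Alternatively, read the cones in all of $\mathbb{R}^d$. Then boundedness of $\mathcal{D}_c$ makes every radius finite, and closedness of $\mathcal{D}_c$ in $\mathbb{R}^d$ lets you drop the star-shapedness assumption on $\mathbb{F}$.
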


The interval $\mathcal{D}_c\bot\mathbf{x}$ can be computed using Algorithm \ref{algo:generic-traversal}, with the parameters $I_0 = \mathbb{F}$, $\varphi_{I, \mathcal{N}} = \mathscr{S}_{I, \mathcal{N}}$, and $I ~\square~ \mathbf{x}^\prime = I \ExOp{\delta} \mathbf{x}^\prime$, as in Def. \ref{def:interval-exclusion}. 
Algorithm \ref{algo:generic-traversal} implements a \emph{top-down} search policy (\textsc{TDS}), beginning from the whole input space $\mathbb{F}$, proceeding to smaller intervals, by excluding counterexamples $\mathbf{x}^\prime \in \mathbb{F} \setminus \mathcal{D}_c$. Similar to Subsec.~\ref{subsec:minimally-complete}, the precision parameter $\delta$ regulates both the outcome's accuracy and the convergence rate of our algorithm.

\begin{restatable}{theorem}{theoalgomaxsound}
    \label{theo:algo-max-sound}
    Consider a compact decision surface $\mathcal{D}_c \subset \mathbb{F}$. Let $I \in \intervals{d}\vert^\universe_\mathbf{x}$ the interval returned by Algorithm \ref{algo:generic-traversal}. It holds that $\mathcal{D}_c\bot\mathbf{x} - [-\delta\mathbf{1}, \delta\mathbf{1}] \subseteq I \subseteq \mathcal{D}_c$. Moreover, the algorithm terminates after $O(d[\mathcal{A}(\universe) - \mathcal{A}(\mathcal{D}_c\bot\mathbf{x})]/\delta)$ steps.
\end{restatable}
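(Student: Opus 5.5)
We argue in three parts: an invariant giving the right inclusion $I \subseteq \mathcal{D}_c$, a monotonicity argument giving the left inclusion, and a potential-function bound giving termination. Throughout we read $\mathcal{D}_c\bot\mathbf{x}$ as the interval $[\mathbf{x}-\underline{\mathbf{r}}, \mathbf{x}+\overline{\mathbf{r}}]$ centered at the input, which is the reading under which Prop.~\ref{prop:maximal-sound-approx} makes sense. We also read $\mathcal{D}_c\bot\mathbf{x} - [-\delta\mathbf{1},\delta\mathbf{1}]$ as the interval obtained by shrinking each face of $\mathcal{D}_c\bot\mathbf{x}$ inward by $\delta$.

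\textbf{Right inclusion.} This is immediate from the loop guard. Algorithm~\ref{algo:generic-traversal} only returns an $I$ for which $\models \lnot\mathscr{S}_{I,\mathcal{N}}$, i.e., no point of $I$ is classified outside $c$, so every point of $I$ lies in $\mathcal{D}_c$. We must also check that $\mathbf{x}$ is never excluded. A counterexample $\mathbf{x}'$ satisfies $\kappa(\mathbf{x}')\neq c$, so $\mathbf{x}' \neq \mathbf{x}$. The exclusion in Def.~\ref{def:interval-exclusion} moves a single face $k$ to $x'_k \pm \delta$, on the side of $\mathbf{x}'$ opposite to $\mathbf{x}$. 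For $\delta$ below the precision scale (i.e., $\delta < |x_k - x'_k|$), the point $\mathbf{x}$ therefore stays inside, and every iterate remains in $\intervalsdxu$.

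\textbf{Left inclusion.} We prove by induction on the iterations that every iterate contains the shrunken box $B = [\mathbf{x}-\underline{\mathbf{r}}+\delta\mathbf{1}, \mathbf{x}+\overline{\mathbf{r}}-\delta\mathbf{1}]$. The base case is $I_0 = \universe \supseteq \mathcal{D}_c\bot\mathbf{x}$. For the step, let $\mathbf{x}'$ be the counterexample at the current iteration and $k$ the coordinate selected by $\ExOp{\delta}$. The choice of $k$ maximizes $|x_i - x'_i|$, which is the defining condition of the cones in eq.~\eqref{eq:cones} with coordinates taken relative to $\mathbf{x}$. Hence $\mathbf{x}'$ lies in $\underline{V}_k$ or $\overline{V}_k$ according to the sign of $x_k - x'_k$. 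Since $\mathbf{x}' \notin \mathcal{D}_c$, Def.~\ref{def:bot-operator} gives $|x_k - x'_k| \geq \underline{r}_k$ (respectively $\overline{r}_k$). The new face $x'_k \pm \delta$ is then at distance at least $r_k - \delta$ from $x_k$, so it still encloses the $k$-th face of $B$. All other faces are unchanged, so the invariant is preserved.

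\textbf{Termination.} Each step moves exactly one face of $I$ inward. The step size is at least of order $\delta$: either the counterexample is found at distance at least $\delta$ inside the current face, or we enforce this by the standard discretisation assumption on the verifier's counterexamples. The $d$ faces start at the boundary of $\universe$, and by the left inclusion each can only travel until it meets the corresponding face of $B$. The total inward travel available per coordinate is therefore at most $\mathcal{A}(\universe) - \mathcal{A}(\mathcal{D}_c\bot\mathbf{x})$ plus $O(\delta)$. Summing over the $d$ coordinates and dividing by the minimal step $\delta$ gives $O(d[\mathcal{A}(\universe)-\mathcal{A}(\mathcal{D}_c\bot\mathbf{x})]/\delta)$ iterations.

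\textbf{Main obstacle.} The delicate point is the lower bound $\delta$ on each face displacement. A counterexample may lie arbitrarily close to the current face, in which case the update moves that face by barely more than $\delta$ minus that gap. We will handle this by noting that the new face position is at least $\delta$ inside $x'_k$, and so at least $\delta$ inside every previously excluded point on that side. We then charge iterations to disjoint $\delta$-slabs of excluded space, one new slab per iteration. If this charging fails, we will state the bound under the assumption that the oracle returns counterexamples on a $\delta$-grid.
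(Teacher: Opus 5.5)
Your decomposition matches the paper's proof. The loop guard gives $I\subseteq\mathcal{D}_c$. Cone membership of each counterexample, together with the definition of $\underline{r}_k,\overline{r}_k$, gives the left inclusion. A perimeter potential gives termination. Your invariant $B\subseteq I_j$ is a cleaner version of the paper's $\underline{\tau}^j_i,\overline{\tau}^j_i$ bookkeeping. You also rightly flag a condition the paper never states: $\mathbf{x}$ stays in $I$ only if $\delta\le\|\mathbf{x}-\mathbf{x}'\|_\infty$ for every returned counterexample, which $\delta\le\underline{\rho}$ guarantees.

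Your ``main obstacle'', however, is not one. The soundness oracle only returns $\mathbf{x}'$ with $\bell\le\mathbf{x}'\le\bu$. So the moved face goes from $\ell_k$ to $x'_k+\delta\ge\ell_k+\delta$, and symmetrically for $u_k$. Every iteration therefore lowers $\Phi(I)=\sum_i(u_i-\ell_i)=\pi(I)$ by at least $\delta$, exactly as in the paper. No grid assumption or slab charging is needed. (Also, there are $2d$ faces, not $d$.)

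\textbf{The genuine gap is in the counting step.} You bound the inward travel in coordinate $i$ by $\mathcal{A}(\universe)-\mathcal{A}(\mathcal{D}_c\bot\mathbf{x})+O(\delta)$. Your invariant only gives $u_i-\ell_i\ge\underline{r}_i+\overline{r}_i-2\delta$. Here $\underline{r}_i+\overline{r}_i$ is the $i$-th edge of $\mathcal{D}_c\bot\mathbf{x}$, which can be as small as $\alpha(\mathcal{D}_c\bot\mathbf{x})$, not $\mathcal{A}(\mathcal{D}_c\bot\mathbf{x})$. Summed correctly, your argument yields $T\le[\pi(\universe)-\pi(\mathcal{D}_c\bot\mathbf{x})]/\delta+2d\le d[\mathcal{A}(\universe)-\alpha(\mathcal{D}_c\bot\mathbf{x})]/\delta+2d$.

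This cannot be sharpened to the stated form, because the stated bound fails for elongated $\bot$-boxes. Take $d=2$, $\universe=[0,1]^2$, $\mathbf{x}=(\tfrac12,\tfrac12)$, and $0<\delta<\beta<\tfrac14$, $\eta<\tfrac14$. Let $\mathcal{D}_c$ be $\universe$ minus the relatively open set $\{\mathbf{y}: |y_2-\tfrac12|>\max\{\beta,|y_1-\tfrac12|\}\}$ and minus small $\eta$-neighbourhoods of $(0,\tfrac12)$ and $(1,\tfrac12)$. Then $\underline{r}_1=\overline{r}_1=\tfrac12-\eta$ and $\underline{r}_2=\overline{r}_2=\beta$, so $\mathcal{A}(\universe)-\mathcal{A}(\mathcal{D}_c\bot\mathbf{x})=2\eta$. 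Yet an oracle that keeps returning $(\tfrac12,u_2)$ or $(\tfrac12,\ell_2)$ moves a face by exactly $\delta$ per call. That forces about $(1-2\beta)/\delta$ iterations, which is unbounded relative to $\eta/\delta$ as $\eta\to 0$.

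The paper's proof makes the same slip when it asserts $\Phi(I_k)\ge d\cdot\mathcal{A}(\mathcal{D}_c\bot\mathbf{x})-\delta$. What actually holds is $\Phi(I_k)\ge\pi(\mathcal{D}_c\bot\mathbf{x})-2d\delta$, and since $\pi\le d\,\mathcal{A}$, the paper's inequality points the wrong way. So your argument, like the paper's, establishes both inclusions and the perimeter bound above, but not the complexity claim as stated.
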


The following result guarantees a computed certification with non-trivial edge lengths, provided that the 
input belongs to the interior of the decision surface. 
\begin{restatable}{proposition}{nontrivialitylb}
    \label{prop:non-triviality-lb}
    If $\mathbf{x} \in \mathcal{D}^\circ_c$, then $\alpha(\mathcal{D}_c\bot\mathbf{x}) > 0$.
\end{restatable}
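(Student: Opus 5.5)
If $\mathbf{x}\in\mathcal{D}_c^\circ$, the interior point gives an $\ell_\infty$ ball around $\mathbf{x}$ that avoids the complement of $\mathcal{D}_c$. Every point of $\underline{V}_i\setminus\mathcal{D}_c$ or $\overline{V}_i\setminus\mathcal{D}_c$ must then lie outside that ball. Since the cone condition makes coordinate $i$ the dominant one, that coordinate alone must be displaced by at least the ball radius, which bounds each infimum in \eqref{eq:radii} from below.

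\textbf{Step 1: choose the ball.} Since $\mathbf{x}\in\mathcal{D}_c^\circ$, by the definition of interior (equivalently, $\mathbf{x}\notin\partial\mathcal{D}_c$) there is $\rho>0$ with $\infball(\mathbf{x},\rho)\subseteq\mathcal{D}_c$. All $\ell_p$ norms on $\mathbb{R}^d$ are equivalent, so the choice of norm in the boundary definition does not matter.

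\textbf{Step 2: bound each radius.} Fix $i\in[d]$ and take $\mathbf{x}'\in\underline{V}_i\setminus\mathcal{D}_c$. Then $\mathbf{x}'\notin\infball(\mathbf{x},\rho)$, so $\|\mathbf{x}'-\mathbf{x}\|_\infty>\rho$.

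I read the cones in \eqref{eq:cones} as centered at $\mathbf{x}$, i.e.\ with the dominance condition $\abs{x'_i-x_i}\ge\abs{x'_j-x_j}$ for all $j$. This is consistent with the claimed properties $\underline{V}_i\cap\overline{V}_i=\{\mathbf{0}\}$ (after translation) and $\bigcup_i(\underline{V}_i\cup\overline{V}_i)=\mathbb{R}^d$. Under this reading the $\ell_\infty$ distance is attained at coordinate $i$, so
\[
x_i-x'_i=\abs{x'_i-x_i}=\|\mathbf{x}'-\mathbf{x}\|_\infty>\rho .
\]
Taking the infimum gives $\uradi\ge\rho$. The symmetric argument on $\overline{V}_i$ gives $\oradi\ge\rho$.

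If one of these sets is empty, its infimum is $+\infty$. Within the universe $\universe$ this should be capped at the distance to the boundary of $\universe$. I would note that, when $\mathbf{x}$ is interior, this distance is also positive, possibly after shrinking $\rho$.

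\textbf{Step 3: conclude.} The interval $\mathcal{D}_c\bot\mathbf{x}$ is meant to be $[\mathbf{x}-\underline{\mathbf{r}},\mathbf{x}+\overline{\mathbf{r}}]$; the notation $[-\underline{\mathbf{r}},\overline{\mathbf{r}}]$ in Def.~\ref{def:bot-operator} is shorthand relative to $\mathbf{x}$. Its $i$-th edge length is $\uradi+\oradi\ge 2\rho$. Hence $\alpha(\mathcal{D}_c\bot\mathbf{x})\ge 2\rho>0$.

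\textbf{Main obstacle.} The main difficulty is notational rather than mathematical: reconciling the literal cone definition, which uses $\abs{x'_i}\ge\abs{x'_j}$ and is not centered at $\mathbf{x}$, with the centered reading. I would state that translation explicitly at the start.

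If the literal uncentered definition must be kept, Step 2 breaks, because the dominant coordinate of $\mathbf{x}'$ need not be the displaced one. In that case I would fall back on a weaker argument. Any $\mathbf{x}'\in\underline{V}_i\setminus\mathcal{D}_c$ lies outside $\infball(\mathbf{x},\rho)$. I would then try to show, through the sign and dominance constraints, that $x_i-x'_i$ is bounded below by a positive constant depending on $\rho$ and $\mathbf{x}$. This could fail near $\mathbf{x}'=\mathbf{x}$, so the centered interpretation is the one I would commit to.
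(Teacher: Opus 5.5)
Your proof is correct and follows the paper's argument. Like the paper, you pick $\rho>0$ with $\infball(\mathbf{x},\rho)\subseteq\mathcal{D}_c$ and conclude $\uradi,\oradi\ge\rho$ for every $i$; you also make explicit the coordinate-dominance step in the cone, read as centered at $\mathbf{x}$, as the paper implicitly does by working with $\mathbf{x}=\mathbf{0}$, and you are right that the literal uncentered definition would break the argument.
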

Recall that, from Prop. \ref{prop:numerical-geometric-mean}, several objectives are lower bounded by the minimum edge length. Thus, the interval $\mathcal{D}_c\bot\mathbf{x}$ provides a non-trivial lower bound to these quantities, as well. Notably, Prop. \ref{prop:non-triviality-lb} suggests that we can verify a minimum coordinate-wise perturbation tolerance. Our experiments in Sec. \ref{sec:implementation} show that this also holds in practice. We discuss optimization aspects of interval certifications at the end of this section.

\subsection{The Maximal Closure Operator $[I]$}

Computing maximally sound certifications share similarities with their complete counterparts, but also exhibit important differences. The primary difference is that, in general, there will be multiple maximal sound certifications. Therefore, for a sound interval $I$, we consider its maximal closure, denoted with $[I]$, that contains all the supersets of $I$ that are maximally sound.

\begin{definition}[Maximal Closure Operator]
    \label{def:max-closure}
     Consider a classifier $\kappa\colon\mathbb{F} \to \mathcal{C}$, an input point $\mathbf{x} \in \mathbb{F}$, s.t. $\kappa(\mathbf{x}) = c$, and the compact decision surface $\mathcal{D}_c$. Let $I \subseteq \mathcal{D}_c$ be a sound interval certification, s.t. $\mathbf{x} \in I$. With $[I]$ we denote the \emph{maximal closure} of $I$, where, $[I] = \{J \in \intervalsdxu \mid I \subseteq J \subseteq \mathcal{D}_c\}.$
\end{definition}

Lem. \ref{lem:maximal-soundness} characterises all the maximally sound interval certifications. Intuitively, an interval certification is maximally sound if expanding unilaterally any of its coordinates results in the inclusion of a counterexample. Based on Lem. \ref{lem:maximal-soundness}, we present a non-deterministic algorithm, which \emph{chooses} a coordinate of each of the two interval's endpoints to expand in each step. If the expansion leads to a non-sound certification, the coordinate is discarded, not to be expanded any further. The pseudocode of the algorithm is presented in App. \ref{app:dichotomic}, while its correctness is established in Th. \ref{theo:non-det-expand}.

\begin{restatable}{lemma}{lemmaximalsoundness}
    \label{lem:maximal-soundness}
     Consider a classifier $\kappa\colon\mathbb{F} \to \mathcal{C}$, an input point $\mathbf{x} \in \mathbb{F}$, s.t. $\kappa(\mathbf{x}) = c$, and the compact decision surface $\mathcal{D}_c$. Moreover, let $I = [\bell, \bu]  \subseteq \mathcal{D}_c$. The interval $I$ is \emph{maximally sound}, iff for every $i \in [d]$, and every $\delta > 0$, we have $[\bell- \delta\mathbf{e}_i, \bu] \setminus \mathcal{D}_c \neq \varnothing$ and $[\bell, \bu + \delta\mathbf{e}_i] \setminus \mathcal{D}_c \neq \varnothing$.
\end{restatable}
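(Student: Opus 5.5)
We argue both implications directly from the definition of maximal soundness: $I \subseteq \mathcal{D}_c$ is maximally sound iff there is no interval $J \in \intervalsdxu$ with $I \subsetneq J \subseteq \mathcal{D}_c$, i.e., iff $[I] = \{I\}$ in the sense of Def.~\ref{def:max-closure}.

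\emph{($\Rightarrow$)} Assume $I = [\bell,\bu]$ is maximally sound and fix $i \in [d]$ and $\delta > 0$. The set $[\bell - \delta\mathbf{e}_i, \bu]$ is a closed interval, since $\bell - \delta\mathbf{e}_i \leq \bell \leq \bu$. It contains $\mathbf{x}$ and strictly contains $I$, because its $i$-th lower endpoint is strictly smaller. By maximality it cannot lie inside $\mathcal{D}_c$, so $[\bell - \delta\mathbf{e}_i,\bu] \setminus \mathcal{D}_c \neq \varnothing$. The upper endpoint is handled symmetrically. One caveat: if we insist that the enlarged interval stays in the universe $\universe$, we either read $\mathcal{D}_c \subseteq \universe$ (so points outside $\universe$ are outside $\mathcal{D}_c$), or restrict to endpoints not already on $\partial\universe$. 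I will state this convention explicitly.

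\emph{($\Leftarrow$)} We argue by contraposition. Suppose $I$ is sound but not maximal, so some $J = [\bm,\bn]$ satisfies $I \subsetneq J \subseteq \mathcal{D}_c$. Then $\bm \leq \bell$ and $\bu \leq \bn$, and since $J \neq I$ at least one inequality is strict in some coordinate $i$. Say $m_i < \ell_i$ and set $\delta = \ell_i - m_i > 0$. Then $\bm \leq \bell - \delta\mathbf{e}_i$ coordinatewise, because only coordinate $i$ changes and there it equals $m_i$. Hence $[\bell - \delta\mathbf{e}_i, \bu] \subseteq [\bm,\bn] = J \subseteq \mathcal{D}_c$, so $[\bell - \delta\mathbf{e}_i,\bu] \setminus \mathcal{D}_c = \varnothing$, contradicting the hypothesis. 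The case $n_i > u_i$ is symmetric, using $\bu + \delta\mathbf{e}_i$.

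The only delicate point is the key step of the ($\Leftarrow$) direction: a strictly larger interval always contains a \emph{unilateral single-coordinate} enlargement of $I$. This holds because intervals are products of one-dimensional intervals, so enlarging one face while keeping the others fixed stays inside any interval dominating $I$ on that face. Compactness of $\mathcal{D}_c$ is not needed for this lemma; it matters only later for attaining suprema when computing such intervals.
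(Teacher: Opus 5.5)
Your proof is correct and takes essentially the same approach as the paper. In one direction, a single-coordinate enlargement contained in $\mathcal{D}_c$ contradicts maximality; in the other, any strictly larger interval contains such an enlargement. The paper argues the first direction by contraposition and the second directly, the reverse of your choice, and your explicit $\delta = \ell_i - m_i$ and universe-boundary remark (resolved because $\mathcal{D}_c \subseteq \mathbb{F}$) make explicit details the paper leaves unstated.
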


\begin{restatable}{theorem}{nondetexpand}
    \label{theo:non-det-expand}
    Consider a compact decision surface $\mathcal{D}_c \subset \mathbb{F}$, and $\mathbf{x} \in \mathbb{F}$ an input. Let $X \in \intervalsdxu$ be the trivial sound certification $X = [\mathbf{x}, \mathbf{x}]$. For every maximally sound interval $I \in [X]$ there is a choice of indices in the operation of Algorithm \ref{algo:non-det-expand}, resulting to an interval $J$, s.t. $I - [-\delta\mathbf{1}, \delta\mathbf{1}] \subseteq J \subseteq I$. Moreover, such $J$ is computed in $O(d\cdot\mathcal{A}(\mathbb{F})/\delta)$ \emph{non-deterministic} steps.
\end{restatable}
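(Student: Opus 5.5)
We have to guess the shape of Algorithm \ref{algo:non-det-expand} from the text, since its pseudocode is in App.~\ref{app:dichotomic}. We take it to maintain a current sound interval $J=[\bell,\bu]$, starting from $X=[\mathbf{x},\mathbf{x}]$, together with a set of \emph{active} endpoint coordinates: the $2d$ pairs $(i,\text{lower})$ and $(i,\text{upper})$. At each step it non-deterministically picks an active coordinate and tentatively moves it outward by $\delta$, i.e.\ replaces $\ell_i$ by $\ell_i-\delta$ or $u_i$ by $u_i+\delta$. It then queries the soundness oracle $\mathscr{S}$. If the new interval is sound, it keeps it; otherwise it discards that coordinate permanently. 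It halts when no active coordinate remains.

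The plan is to fix a target maximally sound $I=[\bm,\bn]\in[X]$ and exhibit a schedule of choices that keeps the invariant $J\subseteq I$. The schedule is simple: while some lower coordinate satisfies $\ell_i-\delta\ge m_i$, expand it, and symmetrically for upper coordinates with $u_i+\delta\le n_i$. When no such move remains for coordinate $i$ on a given side, choose it once more so that it is discarded.

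The key steps are as follows.
\begin{itemize}
\item[(1)] \emph{Invariant.} Every accepted expansion stays inside $I\subseteq\mathcal{D}_c$, so the oracle certifies soundness and the expansion is indeed accepted. Hence $X\subseteq J\subseteq I$ throughout.
\item[(2)] \emph{Discarding.} When coordinate $i$ on the lower side is finally chosen with $\ell_i-\delta<m_i$, the tentative interval leaves $I$ in that direction. We need the oracle to reject it. Here we invoke Lem.~\ref{lem:maximal-soundness}: since $I$ is maximally sound, pushing $m_i$ outward by any positive amount in that coordinate meets $\mathbb{F}\setminus\mathcal{D}_c$.
\item[(3)] \emph{Accuracy.} When a coordinate is discarded, its endpoint satisfies $m_i\le \ell_i< m_i+\delta$, and similarly $n_i-\delta<u_i\le n_i$. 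Since each coordinate is only ever touched through this schedule, at termination we get $I-[-\delta\mathbf{1},\delta\mathbf{1}]\subseteq J\subseteq I$.
\item[(4)] \emph{Counting.} Each coordinate side is accepted at most $(n_i-m_i)/\delta\le \mathcal{A}(\mathbb{F})/\delta$ times and discarded once. Over the $2d$ sides this gives $O(d\cdot\mathcal{A}(\mathbb{F})/\delta)$ steps.
\end{itemize}

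The main obstacle is step (2). The tentative interval $J'$ extends $J$, not $I$, and $J$ may be strictly smaller than $I$ in other coordinates, so Lem.~\ref{lem:maximal-soundness} applied to $I$ does not directly give a counterexample inside $J'$. In fact $J'$ may well be sound. In that case the algorithm accepts the move, leaves $I$, and ends at some other maximal interval.

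We would repair this by ordering the schedule: first expand all coordinates up to the $\delta$-grid approximation of $I$, and only then attempt the discarding moves. At that point $J$ is within $\delta$ of $I$ in every coordinate. Compactness of $\mathcal{D}_c$ (closedness of $\mathbb{F}\setminus\mathcal{D}_c$ is not available, so we would argue via the oracle's $\epsilon$-margin) would let us transfer the counterexample witnessing non-extendability of $I$ to $J'$, at least up to the $\delta$ tolerance.

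If this transfer fails, the fallback is to let the discarding test use the extension of $I$'s face directly. Equivalently, we note that any accepted overshoot beyond $m_i$ is strictly less than $\delta$. We would then weaken the claim, interpreting the sandwich $I-[-\delta\mathbf{1},\delta\mathbf{1}]\subseteq J$ together with $J\subseteq I$ modulo this $\delta$-overshoot, which is presumably how the stated tolerance is meant to absorb it.
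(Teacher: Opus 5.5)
\textbf{There is a genuine gap at your step (2), and it cannot be closed.} Your approach is the same as the paper's. The paper also steers the choices so that the current interval stays inside $I^\ast$ and counts $O(1/\delta)$ moves per endpoint coordinate. It then settles your step (2) in one sentence: by Lem.~\ref{lem:maximal-soundness}, any unilateral expansion beyond $I^\ast$ yields a counterexample. That sentence holds for expansions of $I^\ast$. It does not hold for the $\delta$-expansions of the current interval, which are what Algorithm~\ref{algo:non-det-expand} actually tests. So the obstacle you isolate is exactly the gap in the paper's proof too, and $J\subseteq I$ fails in general.

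Take $d=2$, $\mathbf{x}=\mathbf{0}$, $\delta=0.1$, $\mathcal{D}_c=A\cup B$ with $A=[-1,1]\times[-1,0.99]$ and $B=[-2,-1]\times[-1,0.95]$, and let $I=A$.
\begin{itemize}
\item $I$ is maximally sound. Pushing any face of $A$ outward by $\delta'>0$ meets one of $(-1-\delta'/2,\,0.97)$, $(1+\delta'/2,\,0)$, $(0,\,-1-\delta'/2)$, $(0,\,0.99+\delta'/2)$, and none of these lies in $\mathcal{D}_c$.
\item In every run, $u_2\le 0.9$, because the test with $u_2=1$ contains $(0,0.995)\notin\mathcal{D}_c$. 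Similarly $\ell_2\ge -1$ and $u_1\le 1$.
\item Hence, while $\ell_1\ge -1$, the tentative expansion of the lower endpoint of the first coordinate lies in $[-1.1,1]\times[-1,0.9]\subseteq\mathcal{D}_c$ and is accepted.
\item That side is therefore discarded only after it has left $I$; in fact every run ends with $\ell_1=-2$.
\end{itemize}
So no order of choices yields $J\subseteq I$. Rescaling the picture gives the same conclusion for every $\delta>0$.

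The example also shows why your two repairs fail.
\begin{itemize}
\item \emph{The transfer argument.} The witnesses that the face $\ell_1=-1$ of $I$ cannot be extended lie only in the strip $0.95<y_2\le 0.99$. No accepted interval ever reaches that strip, so neither compactness nor the $\epsilon$-margin can move those witnesses into $J'$.
\item \emph{The overshoot fallback.} An accepted overshoot is not below $\delta$. Once an expansion is accepted, that side stays active and keeps growing; here it overshoots by $10\delta$.
\end{itemize}

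\textbf{What your argument does prove.} Steps (1), (3), (4) establish the lower half of the sandwich, $I-[-\delta\mathbf{1},\delta\mathbf{1}]\subseteq J$: expand along the grid inside $I$ first, and all later moves only enlarge $J$. They also give the bound $O(d\cdot\mathcal{A}(\mathbb{F})/\delta)$, which in fact holds for every run, since accepted intervals stay inside $\mathcal{D}_c\subseteq\mathbb{F}$. The upper inclusion $J\subseteq I$ needs a different discard rule (for instance one that knows $I$, as in your fallback) or a weaker statement. A sharper argument for the algorithm as written cannot supply it.
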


\subsection{The Space of Interval Certifications}

We close our study on interval certifications by examining the structure of the interval certifications' space and how our earlier discussion is reflected in that space. As we saw in Sec. \ref{sec:intervals}, the structure $\angles{\intervalsdxu, \subseteq}$ forms an \emph{innumerably infinite} lattice \cite{sunaga}, with $\sqcup, \sqcap$ as the join and meet operators respectively. The \emph{bottom element} of this space is the trivial interval $[\mathbf{x}, \mathbf{x}]$ and the top element is the universe $\universe$. Consider a compact decision surface $\mathcal{D}_c \subset \mathbb{F}$, s.t. $\mathbf{x} \in \mathcal{D}_c$, i.e. $\kappa(\mathbf{x}) = c$. The following (set-theoretic) inequality holds: $[\mathbf{x}, \mathbf{x}] \subseteq \mathcal{D}_c\bot\mathbf{x} \subseteq \mathcal{D}_c \subseteq \mathcal{D}_c\top\mathbf{x} \subseteq \universe$. Naturally, every subset of $\mathcal{D}_c\bot\mathbf{x}$ is a sound certification. The interval $\mathcal{D}_c\bot\mathbf{x}$ is not maximal, but every set in its maximal closure $[\mathcal{D}_c\bot\mathbf{x}]$ is. Moreover, \emph{every} complete certification includes $\mathcal{D}_c\top \mathbf{x}$. Fig. \ref{fig:intervals-structure} illustrates the structure of the $\angles{\intervalsdxu, \subseteq}$ lattice. Observe that the hierarchy in Fig. \ref{fig:intervals-structure} collapses when $\mathcal{D}_c$ is an interval, i.e., when $\mathcal{D}_c \in \intervalsdxu$. This fact follows from 
Th.s \ref{theo:complete-certification} and \ref{theo:non-det-expand}.
\begin{corollary} \label{cor:collapsed-hierarchy}
    $[\mathcal{D}_c\bot\mathbf{x}] = \{\mathcal{D}_c\top\mathbf{x}\}$ \emph{iff} $\mathcal{D}_c \in \intervalsdxu$.
\end{corollary}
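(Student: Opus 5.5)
The plan is to prove the two directions separately. Throughout, we use the chain $[\mathbf{x},\mathbf{x}] \subseteq \mathcal{D}_c\bot\mathbf{x} \subseteq \mathcal{D}_c \subseteq \mathcal{D}_c\top\mathbf{x}$ and the fact that $\mathcal{D}_c\top\mathbf{x}$ is the smallest interval containing $\mathcal{D}_c$. The latter is the content of Th.~\ref{theo:complete-certification} together with Def.~\ref{def:top-operator}: the radii $\uRadi,\oRadi$ are the coordinatewise extremes of the compact set $\mathcal{D}_c$, so the interval they define is its interval hull.

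\emph{($\Leftarrow$)} Assume $\mathcal{D}_c\in\intervalsdxu$, say $\mathcal{D}_c=[\bm,\bn]$. First, the coordinatewise extremes of $\mathcal{D}_c$ are exactly $\bm,\bn$, so $\mathcal{D}_c\top\mathbf{x}=\mathcal{D}_c$. Next, take any $J\in[\mathcal{D}_c\bot\mathbf{x}]$. By Def.~\ref{def:max-closure} we have $J\subseteq\mathcal{D}_c$, and $J$ is maximally sound. Since $\mathcal{D}_c$ is itself a sound interval containing $\mathbf{x}$ and $\mathcal{D}_c\bot\mathbf{x}$ (Prop.~\ref{prop:maximal-sound-approx}), maximality of $J$ forces $J=\mathcal{D}_c$. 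Alternatively, Lem.~\ref{lem:maximal-soundness} shows directly that any proper subinterval of $[\bm,\bn]$ can be expanded along some coordinate without leaving $\mathcal{D}_c$. We also need $[\mathcal{D}_c\bot\mathbf{x}]$ to be nonempty, which holds because $\mathcal{D}_c$ itself belongs to it. Hence $[\mathcal{D}_c\bot\mathbf{x}]=\{\mathcal{D}_c\}=\{\mathcal{D}_c\top\mathbf{x}\}$.

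\emph{($\Rightarrow$)} Assume $[\mathcal{D}_c\bot\mathbf{x}]=\{\mathcal{D}_c\top\mathbf{x}\}$. Then $\mathcal{D}_c\top\mathbf{x}$ is an element of the closure, so by Def.~\ref{def:max-closure} it satisfies $\mathcal{D}_c\top\mathbf{x}\subseteq\mathcal{D}_c$. Combined with $\mathcal{D}_c\subseteq\mathcal{D}_c\top\mathbf{x}$, which holds by completeness (Th.~\ref{theo:complete-certification}), we get $\mathcal{D}_c=\mathcal{D}_c\top\mathbf{x}\in\intervalsdxu$. This direction is essentially immediate.

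The main obstacle is making the containment $\mathcal{D}_c\subseteq\mathcal{D}_c\top\mathbf{x}$ and the identity $\mathcal{D}_c\top\mathbf{x}=\mathcal{D}_c$ for interval $\mathcal{D}_c$ rigorous from Def.~\ref{def:top-operator}. The definition is written as $[-\underline{\mathbf{R}},\overline{\mathbf{R}}]$, which should be read relative to $\mathbf{x}$, i.e.\ as $[\mathbf{x}-\underline{\mathbf{R}},\mathbf{x}+\overline{\mathbf{R}}]$. The strict inequality in the definition of $\oRadi$ also needs care: if no point of $\mathcal{D}_c$ satisfies $x'_i>x_i$, the maximum should be taken as $0$. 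With these conventions, compactness guarantees that the maxima are attained, so the endpoints are exactly $\min_{\mathbf{x}'\in\mathcal{D}_c}x'_i$ and $\max_{\mathbf{x}'\in\mathcal{D}_c}x'_i$, and both claims follow coordinatewise. The remaining subtlety is the maximality step in ($\Leftarrow$): it requires that maximality refers to soundness within $\intervalsdxu$. This holds because $\mathcal{D}_c$ is itself a sound interval, so no sound interval strictly contains it.
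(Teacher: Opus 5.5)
Your proof is correct, but it takes a different route from the paper. The paper justifies the corollary only by citing Th.~\ref{theo:complete-certification} (uniqueness of $\mathcal{D}_c\top\mathbf{x}$) and Th.~\ref{theo:non-det-expand} (every maximally sound interval above $[\mathbf{x},\mathbf{x}]$ is reachable by the non-deterministic expansion). You never use the expansion theorem:
\begin{itemize}
\item for ($\Leftarrow$) you observe that an interval $\mathcal{D}_c$ is itself the greatest sound interval in $\intervalsdxu$, hence the only maximal one;
\item for ($\Rightarrow$) you need only $\mathcal{D}_c\subseteq\mathcal{D}_c\top\mathbf{x}$ together with the inclusion $J\subseteq\mathcal{D}_c$ built into the closure.
\end{itemize}
This makes your argument exact and self-contained. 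Th.~\ref{theo:non-det-expand} locates maximal intervals only up to a $\delta$-margin ($I-[-\delta\mathbf{1},\delta\mathbf{1}]\subseteq J\subseteq I$), so by itself it cannot give an equality of sets. You also use only the interval-hull property of $\mathcal{D}_c\top\mathbf{x}$, not the uniqueness assertion of Th.~\ref{theo:complete-certification}. That assertion is too strong as stated: every interval containing $\mathcal{D}_c$, e.g.\ $\universe$, satisfies the completeness condition of Def.~\ref{def:interval-sound-complete}. What the paper's citation adds is only a link to the algorithmic picture, which the set-theoretic statement does not need.

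One point you should make explicit. The fact that every $J\in[\mathcal{D}_c\bot\mathbf{x}]$ is maximally sound comes from the prose of Def.~\ref{def:max-closure} and the later remark that every set in $[\mathcal{D}_c\bot\mathbf{x}]$ is maximal. It does not come from the displayed formula $\{J\in\intervalsdxu\mid I\subseteq J\subseteq\mathcal{D}_c\}$. Under the literal formula, your ($\Leftarrow$) direction, and the corollary itself, would fail. Take $\mathbf{x}=\mathbf{0}$, $\universe=[-3,3]^2$ and $\mathcal{D}_c=[-1,1]\times[-2,2]$. Then $\mathcal{D}_c\bot\mathbf{x}=[-1,1]^2$, which belongs to the closure but differs from $\mathcal{D}_c=\mathcal{D}_c\top\mathbf{x}$. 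So state that you adopt the maximality reading. Your ($\Rightarrow$) direction is valid under either reading.
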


\begin{figure}[t]
    \centering
    \includegraphics[scale=0.2]{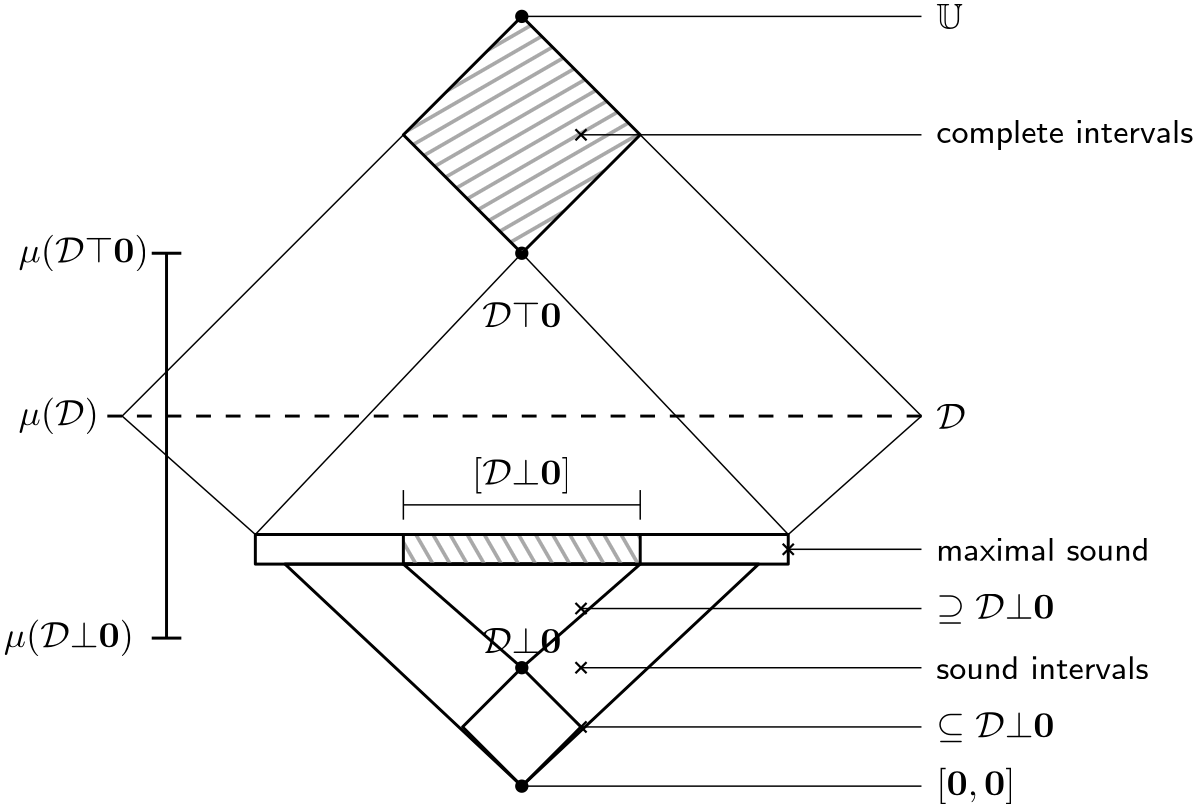}
    \caption{The structure of $\intervals{d}\vert^\universe_{\mathbf{x}}$. We adopt the Hasse diagram convention depicting each element higher from the elements it dominates. $\mathcal{D}_c$ is shown in the image with a dashed line, because, in general, $\mathcal{D}_c \notin \intervals{d}\vert^\universe_{\mathbf{x}}$, and thus, technically, $\mathcal{D}$ is not part of the diagram.  
    Moreover, $\metric(\mathcal{D}_c\bot\mathbf{x}) \leq \metric(\mathcal{D}_c) \leq \metric(\mathcal{D}_c\top\mathbf{x})$, for an increasing objective $\metric(\cdot)$.}
    \label{fig:intervals-structure}
\end{figure}

    \section{Optimization over Interval Certifications}
    \label{sec:optimization}
    We are able to exploit the structure of the interval certification space to compute an optimal certification, w.r.t a certain objective measure and under certain assumptions. To that end, consider an arbitrary objective $\metric\colon\intervalsdxu \to \mathbb{R}_{\geq 0}$, respecting the following properties:
\begin{enumerate}
    \item $\metric([\mathbf{x}, \mathbf{x}]) = 0$
    \item For every $I_1, I_2 \in  \intervalsdxu$, if $I_1 \subseteq I_2$, then $\metric(I_1) \leq \metric(I_2)$.
\end{enumerate}
Note that the objectives of Sec. \ref{sec:intervals} follow these properties. In the sequel, we implicitly assume that $\omega \in \{\alpha, \pi, \vol, \mathcal{A}\}$. Consider the interval certifications $I, J \in \intervalsdxu$, where $I$ is sound, and $J$ is complete, w.r.t. a decision surface $D_c$. For an objective satisfying the above properties, it holds $\metric(I) \leq \metric(\mathcal{D}_c) \leq \metric(J)$. Now, consider the following \emph{dual} optimization problems.
\begin{subequations}
    \begin{align}
        & \underline{\metric} = \max \left\{\metric(I) \mid I \subseteq \mathcal{D}_c \wedge  I \in \intervalsdxu\right\}, \label{eq:interval-optimisation-sim} \\
        & \overline{\metric} = \min \left\{\metric(I) \mid I \supseteq \mathcal{D}_c \wedge  I \in \intervalsdxu\right\}. \label{eq:interval-optimisation-cim}
    \end{align}
\end{subequations}
We call eq. \eqref{eq:interval-optimisation-sim}, the \textsc{Sound Interval Maximization (SIM)} problem, while eq. \eqref{eq:interval-optimisation-cim} \textsc{Complete Interval Minimization (CIM)} problem. Using the $\top$-operator we can \emph{exactly} compute the \emph{optimal} solution $\overline{\metric}$ of the complete minimization problem, i.e. $\overline{\metric} = \metric(\mathcal{D}_c\top\mathbf{x})$, for \emph{any} $\mathbf{x} \in \mathcal{D}_c$. However, the $\bot$-operator only provides a lower bound to the solution of sound maximization, i.e., $\underline{\metric} \geq \metric(\mathcal{D}_c\bot\mathbf{x})$. A better lower bound can be obtained by taking the maximal closure $[\mathcal{D}_c\bot\mathbf{x}]$. Nevertheless, finding the maximum sound interval certification is computationally intractable. Subsequently, we discuss the computational hardness of the sound maximization problem.

\subsection{On the Intractability of Sound Maximization}\label{subsec:intractability of sound maximization}

Recent results provide evidence on the intractability of sound maximization. In particular, recall that \cite{katz} establishes the \emph{NP-completeness} of the MLP verification problem. Since we use verification oracles, this hardness is inherited to our problem. Moreover, \cite{fast-lin} demonstrates the \emph{in-approximability} of sound maximization, even when restricted to uniform interval certifications, i.e., $\ell_\infty$-spheres. Building on these existing results, we further show that the decision version of the 
 \textsc{SIM} problem is \textbf{NP}-hard. Specifically, even when the verification oracle cost is ignored, the problem admits no polynomial-time algorithm, unless $\mathbf{P} = \mathbf{NP}$.

\begin{restatable}{theorem}{theosoundmaxhard}
    \label{theo:sound-max-hard}
    Consider a compact decision surface $\mathcal{D}_c$ and an input point $\mathbf{x} \in \mathcal{D}_c$. Then, the existence of a sound interval $I \in \intervalsdxu$, with $I \subseteq \mathcal{D}_c$ and $\vol(I) \geq \gamma$, $\gamma > 0$ \emph{cannot} be decided in $\mathsf{poly}(d, n, t_{\text{oracle}})$ time, unless \textbf{P}=\textbf{NP}.
\end{restatable}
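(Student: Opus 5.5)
We would prove the statement by a polynomial-time reduction from a known \textbf{NP}-complete problem, constructing a decision surface $\mathcal{D}_c$ for which the volume question is combinatorially hard even when oracle calls are free. The natural source problem is one in which a maximum-volume axis-aligned box is sought inside a union or complement of boxes, e.g.\ the \textsc{Maximum Empty Box} problem (given $n$ points in $[0,1]^d$, decide whether some empty axis-aligned box has volume at least $\gamma$), which is \textbf{NP}-hard when $d$ is part of the input. Our plan is to encode each obstacle point $\mathbf{p}^j$ as a tiny forbidden region of another class. Then $\mathcal{D}_c = \universe \setminus \bigcup_j B^j$, where $B^j$ is a small open $\ell_\infty$-neighbourhood of $\mathbf{p}^j$. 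This $\mathcal{D}_c$ is compact, being a closed subset of the compact universe. We would place $\mathbf{x}$ at a chosen point, handling the anchoring issue below, so that sound intervals $I \ni \mathbf{x}$ with $I \subseteq \mathcal{D}_c$ correspond exactly to empty boxes.

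The key steps, in order, are as follows. \emph{(i)} Realise this $\mathcal{D}_c$ as the decision surface of a ReLU MLP of size $\mathsf{poly}(d,n)$. An $\ell_\infty$-box indicator is a conjunction of $2d$ half-space tests, and each test is computable by two ReLUs. The union over $j$ is obtained by a max or sum followed by a threshold, giving two output scores $\sigma_c$ and $\sigma_{c'}$ with $\kappa=c$ exactly off $\bigcup_j B^j$, up to the $\epsilon$-margin convention of eq.~\eqref{eq:sound-oracle}. \emph{(ii)} Show the correspondence of the reduction: an empty box of volume $\ge\gamma$ in the original instance yields a sound $I$ with $\vol(I)\ge\gamma'$, and conversely. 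The thickening of the points into boxes $B^j$ perturbs volumes by at most a factor controlled by the box width $\eta$. Choosing $\eta$ polynomially small in $1/n$, $1/d$, with bit-length polynomial, separates yes and no instances, using that the source problem's thresholds have polynomial encoding. \emph{(iii)} Handle the requirement $\mathbf{x}\in I$. We would either guess $\mathbf{x}$ by running the decision procedure on $\mathsf{poly}$ many candidate anchor points, e.g.\ grid points induced by the coordinates of the $\mathbf{p}^j$ (this is a Turing reduction, which suffices for the ``unless $\mathbf{P}=\mathbf{NP}$'' conclusion), or reduce from a variant of the source problem that is anchored at a given point. \emph{(iv)} Conclude: a $\mathsf{poly}(d,n,t_{\text{oracle}})$ algorithm would decide the source problem in polynomial time, because $t_{\text{oracle}}$ itself is at most exponential. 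To make the oracle cost genuinely irrelevant, we would note that in our construction oracle queries can be answered directly in polynomial time by checking box intersections against the $B^j$, so $t_{\text{oracle}}=\mathsf{poly}$.

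The main obstacle is step (iii) together with the choice of a source problem whose hardness survives anchoring at $\mathbf{x}$ and the thickening in step (ii). If \textsc{Maximum Empty Box} proves awkward, we would instead reduce directly from \textsc{Independent Set} or \textsc{Max-Clique}. Here $\mathbf{x}=\mathbf{0}$, and each coordinate $i$ may extend to length $1$ (vertex chosen) or stay short. Each edge $\{i,k\}$ becomes a forbidden box near $\mathbf{e}^i+\mathbf{e}^k$ that blocks extending both coordinates. The volume then equals roughly $\beta^{d-s}$ for an independent set of size $s$ with short side $\beta$, and the threshold $\gamma=\beta^{d-K}$ decides whether an independent set of size $K$ exists.
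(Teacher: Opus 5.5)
Your direct anchored reduction in the last paragraph is correct, and its combinatorial core is the same as the paper's. The paper reduces from q-\textsc{MER}, whose hardness it gets by observing that the \textsc{Independent Set} reduction of \cite{empty-boxes} yields optimal boxes anchored at $\mathbf{q}=\mathbf{0}$; it then sets $\mathcal{D}_c=\mathbb{F}\setminus\mathcal{F}$ and simply declares the forbidden points to be counterexamples.

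Your extra care is needed, not cosmetic. The set $\mathbb{F}\setminus\mathcal{F}$ is not closed, since each removed point is a limit of points of $\mathbb{F}\setminus\mathcal{F}$, so the paper's compactness claim fails. Moreover, a finite point set can never be the counterexample set $\{\mathbf{x}' \mid \sigma_j(\mathbf{x}')-\sigma_c(\mathbf{x}')>\epsilon \text{ for some } j\neq c\}$ of the soundness oracle, because that set is open for any ReLU network. Three steps in your plan turn the ``hard even with cheap oracles'' reading of the theorem into an actual proof, which the paper leaves implicit:
\begin{itemize}
\item thickening each obstacle into an open box;
\item building the MLP explicitly;
\item observing that each oracle call reduces to coordinate-wise box-intersection tests, so $t_{\text{oracle}}$ is polynomial.
\end{itemize}

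Two parts of your plan should be discarded, though.

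\emph{The grid-anchor Turing reduction in step (iii) is not polynomial.} The grid induced by the obstacle coordinates has up to $(n+2)^d$ points. The dimension $d$ must be part of the input, since \textsc{MER} is polynomial in every fixed dimension.

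\emph{The thickening argument of step (ii) for general \textsc{MER} needs an explicit gap bound.} In general this forces $\eta$ to be exponentially rather than polynomially small (still of polynomial bit-length).

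Neither issue arises if you commit to the anchored reduction, where the thresholds are exact. Take $\mathbb{F}=[0,1]^d$, $\mathbf{x}=\mathbf{0}$, and for each edge $\{i,k\}$ the open $\ell_\infty$-box of radius $\eta<1$ around $\mathbf{e}^i+\mathbf{e}^k$.
\begin{itemize}
\item An interval $[\mathbf{0},\mathbf{u}]$ misses that box iff $\min(u_i,u_k)\le 1-\eta$, so $\{i \mid u_i>1-\eta\}$ is independent.
\item Hence $\vol([\mathbf{0},\mathbf{u}])\le(1-\eta)^{d-|S|}$ for that independent set $S$.
\item Conversely, an independent set of size $K$ yields an admissible interval of volume exactly $(1-\eta)^{d-K}$.
\end{itemize}
So $\gamma=(1-\eta)^{d-K}$ decides whether an independent set of size $K$ exists, with no approximation argument needed.
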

In Th. \ref{theo:sound-max-hard}, with $\mathsf{poly}(d, n, t_{\text{oracle}})$ we denote the set of all polynomials on $d, n, t_{\text{oracle}}$, where $n$ is the number of oracle calls, and $t_{\text{oracle}}$ is the worst-case time consumed by the soundness verification oracle. Note that Th. \ref{theo:sound-max-hard} constitutes a strong intractability result, since it states that the complexity-significant parameters \emph{cannot} be related polynomially. In other words, it does not suffice to keep the number of dimensions $d$ small, make few oracle calls (i.e., keep $n$ small), or implement faster verifiers (i.e., keep $t_{\text{oracle}}$ small); to efficiently and optimally solve maximum soundness. All three parameters must be kept bounded for an optimal sound certification, to be computationally possible. Thusly, we chose to examine a more efficient \emph{greedy} variant in Sec. \ref{sec:approximations}, based on the 
exclusion operator in Def. \ref{def:interval-exclusion}. Unfortunately, making the optimal decision in every (small) step does not imply overall (big-step) optimality.

The intractability of the maximum soundness problem follows from the intractability of the \textsc{Maximum Empty Rectangle (MER)} problem \cite{chan,empty-boxes,chazelle,naamad}. In this problem, we assume a bounding interval $\mathbb{F}$ and a finite set of \emph{forbidden} points $\mathcal{F} \subset \mathbb{F}$. Our goal is to find an interval $I \subseteq \mathbb{F}$, s.t. $I \cap \mathcal{F} = \varnothing$. Namely, the interval $I$ will exclude all the forbidden points. Moreover, we require $I$ to be the maximum, w.r.t. the volume $v(\cdot)$. \cite{empty-boxes} prove that this problem is \textbf{NP}-hard for arbitrary number of dimensions. However, \textsc{MER} cannot be \emph{directly} reduced to \textsc{SIM}. Note that in \textsc{SIM}, we require the given input $\mathbf{x}$ to be \emph{included} in the returned interval $I$. In literature, the latter problem is known as q-\textsc{MER} \cite{q-mer}, where we demand the maximum empty rectangle to include a query point. Nevertheless, \cite{q-mer} does not provide any intractability result. \cite{empty-boxes} prove \textsc{MER}'s intractability, by reducing from the \textsc{Independent Set (IS)}. The \emph{same} reduction can be applied to show, in Lem. \ref{lem:sound-max-hard}, that \textsc{IS} is \emph{also} reduced to q-\textsc{MER}, by setting $\mathbf{q} = \mathbf{0}$.
\begin{restatable}{lemma}{lemsoundmaxhard}
    \label{lem:sound-max-hard}
    Consider an interval universe $\mathbb{F} = [\underline{\mathbf{U}}, \overline{\mathbf{U}}]$. Moreover assume a finite set of forbidden points $\mathcal{F} \subset \mathbb{F}$, and a query point $\mathbf{q} \in \mathbb{F} \setminus \mathcal{F}$. The existence of an interval $I \subseteq \mathbb{F}$, s.t. $\mathbf{q} \in I$, $I \cap \mathcal{F} = \varnothing$, and $\vol(I) \geq \gamma$, for given $\gamma > 0$, \emph{cannot} be decided in polynomial time, unless \textbf{P}=\textbf{NP}.
\end{restatable}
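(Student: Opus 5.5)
We plan to prove the lemma by a polynomial-time reduction from \textsc{Independent Set}, adapting the reduction of \cite{empty-boxes} for \textsc{MER} so that the optimal empty box is forced to contain the query point $\mathbf{q} = \mathbf{0}$. Given a graph $G = (V, E)$ with $V = [d]$ and a target $k$, we work in dimension $d$ (one coordinate per vertex). We take the universe $\mathbb{F} = [-\mathbf{1}, \mathbf{1}]$, or a slightly larger symmetric box if the construction needs one. Each vertex $i$ gets a forbidden point $\mathbf{p}^i = \tfrac12\mathbf{e}^i$ (together with its mirror $-\tfrac12\mathbf{e}^i$ if that helps symmetry). Each edge $\{i,j\}$ gets a forbidden point $\mathbf{p}^{ij} = \tfrac12(\mathbf{e}^i + \mathbf{e}^j)$, again with mirrored copies as needed. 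We never place a forbidden point at the origin, so $\mathbf{q} = \mathbf{0} \in \mathbb{F}\setminus\mathcal{F}$. The instance has $O(|V|+|E|)$ points with polynomially-sized rational coordinates.

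The first key step is a structural normalization. Take any empty box $I = [\bell, \bu] \ni \mathbf{0}$. A forbidden point is excluded exactly when some coordinate lies outside $[\ell_i, u_i]$. For a vertex point $\tfrac12\mathbf{e}^i$, all coordinates other than $i$ are $0 \in [\ell_j, u_j]$, so exclusion forces $u_i < \tfrac12$. Hence every coordinate is "cut" on the positive side, and the volume contributed by the positive halves is at most $(1/2)$ per coordinate. An edge point $\tfrac12(\mathbf{e}^i+\mathbf{e}^j)$ is automatically excluded once $u_i<\tfrac12$ or $u_j<\tfrac12$. To make edges bite, we shift the geometry: put the vertex points at $\tfrac12\mathbf{e}^i$ but the edge points closer, at $\tfrac14(\mathbf{e}^i+\mathbf{e}^j)$, and let $u_i$ take essentially two meaningful values, namely just below $\tfrac12$ ("selected") or at most $\tfrac14$ ("unselected"). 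An edge point is then excluded iff not both endpoints are selected. This gives the correspondence: boxes with selected set $S$ are empty iff $S$ is independent. We can set $\ell_i = -1$ freely, since no forbidden point has a negative coordinate. Pushing each $u_i$ up to its supremum, volume is $\prod_i(1+u_i)$, approaching $(3/2)^{|S|}(5/4)^{d-|S|}$. We therefore choose $\gamma$ between the value for $|S|=k$ and the value for $|S| = k-1$, taking care of the open boundary by using closed boxes just below the thresholds, e.g.\ $u_i = \tfrac12-\eta$ with $\eta$ polynomially small.

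The second step is the two directions. If $S$ is independent with $|S|\ge k$, the explicit box above is empty, contains $\mathbf{0}$, and has volume at least $\gamma$. Conversely, if an empty box containing $\mathbf{0}$ has volume at least $\gamma$, define $S = \{i : u_i > \tfrac14\}$. Emptiness forces $S$ to be independent (an edge point inside $S$ would lie in the box) and $u_i<\tfrac12$ for all $i$. The volume bound then gives $|S|\ge k$ by the choice of $\gamma$.

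The main obstacle is the quantitative gap. We must pick $\eta$ and $\gamma$ so that $(3/2-\eta)^{k}(5/4)^{d-k} \ge \gamma > (3/2)^{k-1}(5/4)^{d-k+1}$ with polynomial bit-length. The ratio $(3/2)/(5/4)=6/5$ is a constant, so $\eta = 1/(10d)$ should suffice: then $(1-\eta/(3/2))^k \ge 1 - O(1)\cdot k/(10d)$ stays above $5/6$. We will verify this carefully, and we will also check that boxes with $\ell_i>-1$ can only lose volume. NP-hardness of \textsc{IS} then yields the claim unless \textbf{P}=\textbf{NP}.
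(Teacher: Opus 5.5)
Your reduction works, modulo one boundary slip described below, and it takes a different route from the paper. The paper builds no gadget of its own. It uses the \textsc{Independent Set} reduction for \textsc{MER} from \cite{empty-boxes} as a black box and quotes their structural lemma: a maximum-volume empty box in $[0,1]^d$ is anchored at $\mathbf{0}$ with side lengths in $\{w,1\}$. From this it concludes that, with $\mathbf{q}=\mathbf{0}$, the \textsc{MER} and q-\textsc{MER} instances have the same answer.

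You instead design the instance so that anchoring is automatic. Since $\mathbf{0}\in I$ and every forbidden point lies in the nonnegative orthant, no point can be excluded through a lower bound, so $\bell=-\mathbf{1}$ costs nothing. The per-coordinate ratio $(3/2)/(5/4)=6/5$ then separates $|S|\ge k$ from $|S|\le k-1$. The paper's argument is shorter but rests entirely on the cited lemma. It also speaks of a maximum-volume box, although for closed $I$ with $I\cap\mathcal{F}=\varnothing$ such a maximum need not exist. Yours is self-contained, gives $\gamma$ explicitly with polynomial bit-length, and handles strict exclusion through $\eta$. Do drop the optional mirrored copies from your first paragraph, because the normalization $\bell=-\mathbf{1}$ needs all forbidden points to be nonnegative.

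The slip: an unselected coordinate must satisfy $u_j<\tfrac14$ strictly, not $u_j\le\tfrac14$. Suppose you set $u_j=\tfrac14$ for $j\notin S$ and $u_i=\tfrac12-\eta$ for $i\in S$. Then every coordinate of every edge point $\tfrac14(\mathbf{e}^i+\mathbf{e}^j)$ lies within the corresponding side of the box, so the box contains all edge points. The correct criterion is that an edge point is excluded iff $\min(u_i,u_j)<\tfrac14$.

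So your witness box should use $u_j=\tfrac14-\eta$ for $j\notin S$, and the threshold condition becomes
\begin{equation*}
(3/2-\eta)^k(5/4-\eta)^{d-k}\ \ge\ \gamma\ >\ (3/2)^{k-1}(5/4)^{d-k+1}.
\end{equation*}
This still holds for $\eta=1/(10d)$. The left side is at least $(1-\eta)^d\ge 9/10$ times $(3/2)^k(5/4)^{d-k}$, while the right side is $5/6$ times it. You can therefore take $\gamma:=(3/2-\eta)^k(5/4-\eta)^{d-k}$.

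Your converse direction is correct as written. It uses $S=\{i: u_i>\tfrac14\}$, independence from emptiness, and the bound $\vol(I)\le\prod_i(1+u_i)\le(3/2)^{|S|}(5/4)^{d-|S|}$.
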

Th. \ref{theo:sound-max-hard} is derived from Lem. \ref{lem:sound-max-hard} by considering as $\mathcal{F}$ the set of \emph{counterexamples} returned by the soundness verification oracle $\mathscr{S}_{I, \mathcal{N}}$. Finally, from Prop. \ref{prop:numerical-geometric-mean} for each interval it holds $\sqrt[d]{\vol(I)} \geq \alpha(I)$, therefore Lem. \ref{lem:sound-max-hard} and Th. \ref{theo:sound-max-hard} can be modified to use 
$\alpha(\cdot)$, as the desired targeted objective.

\subsection{Sound Maximization on Uniform Intervals}\label{subsec:sound maximiization on symmetric intervals}

We call the \textsc{SIM} problem, when constrained to uniform intervals ($\ell_\infty$-spheres) $\mathbb{B}$-\textsc{SIM}. For disambiguation, we call $\mathbb{I}$-\textsc{SIM}, the \textsc{SIM} problem for general intervals. As we mentioned earlier, $\mathbb{B}$-\textsc{SIM}'s intractability persists, even if constrained to uniform intervals, \cite{katz,fast-lin}. However, we \emph{can} compute an optimal solution in $\mathsf{poly}(d, n, t_{\text{oracle}})$ time. In particular, we can compute a maximum sound uniform interval $B = [\mathbf{x} - \rho\mathbf{1}, \mathbf{x} + \rho\mathbf{1}]$, s.t. $\vol(B) \geq \gamma, \gamma > 0$, in $\log(\mathcal{A}(\mathbb{F}/\delta))\cdot t_{\text{oracle}}$ time. We find the uniform interval's radius $\rho > 0$, by applying \emph{dichotomic search} on the real interval $[0, \mathcal{A}(\mathbb{F})]$. Formally, this is provided by the following result.
\begin{restatable}{theorem}{soundmaxcyclic}
    \label{theo:sound-max-cyclic}
    Consider a compact decision surface $\mathcal{D}_c$ and an input point $\mathbf{x} \in \mathcal{D}_c$. Then, the existence of a sound uniform interval $B = [\mathbf{x} - \rho\mathbf{1}, \mathbf{x} + \rho\mathbf{1}]$, with $\mathbf{x} \in B$, $B \subseteq \mathcal{D}_c$ and $v(B) \geq \gamma$, $\gamma > 0$ \emph{can} be decided in polynomial verification oracle calls.
\end{restatable}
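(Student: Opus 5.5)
The plan is to reduce the decision problem to a one-dimensional search over the radius $\rho$, using monotonicity of soundness under inclusion. Uniform intervals centred at $\mathbf{x}$ form a chain: if $\rho_1 \leq \rho_2$ then $B_{\rho_1} = [\mathbf{x}-\rho_1\mathbf{1}, \mathbf{x}+\rho_1\mathbf{1}] \subseteq B_{\rho_2}$. Hence if $B_{\rho_2} \subseteq \mathcal{D}_c$ then $B_{\rho_1} \subseteq \mathcal{D}_c$. So the predicate $P(\rho) \equiv \models \lnot\mathscr{S}_{B_\rho, \mathcal{N}}$ is downward closed in $\rho$. Moreover $\vol(B_\rho) = (2\rho)^d$ is strictly increasing in $\rho$. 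Therefore the condition $\vol(B) \geq \gamma$ is equivalent to $\rho \geq \rho_\gamma := \tfrac12 \gamma^{1/d}$.

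By downward closure, a sound uniform interval with $\vol(B)\geq\gamma$ exists iff $P(\rho_\gamma)$ holds, which is one call to the soundness oracle $\mathscr{S}$ of eq.~\eqref{eq:sound-oracle} on $B_{\rho_\gamma}$. If $\rho_\gamma$ exceeds the radius for which $B_\rho\subseteq\universe$, we answer no directly, intersecting with $\universe$ as needed.

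To also produce the maximum radius $\rho^\star = \sup\{\rho \mid P(\rho)\}$, we run dichotomic search on $[0, \mathcal{A}(\universe)]$. We keep a lower endpoint where $P$ holds (initially $0$, since $[\mathbf{x},\mathbf{x}]$ is sound) and an upper endpoint where $P$ fails or which is the universe bound. At each step we query the oracle at the midpoint and keep the half consistent with monotonicity. After $\lceil\log_2(\mathcal{A}(\universe)/\delta)\rceil$ oracle calls the bracket has width at most $\delta$. This yields a certified sound $B_{\underline\rho}$ with $\rho^\star - \delta \leq \underline\rho \leq \rho^\star$, and total time $O(\log(\mathcal{A}(\universe)/\delta)\cdot t_{\text{oracle}})$. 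Both bounds are polynomial in the oracle calls and the input size.

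The main obstacle is the boundary behaviour of $P$ at $\rho^\star$, and the strictness $\epsilon$ in the oracle. Compactness of $\mathcal{D}_c$ gives closedness, so $B_{\rho^\star}\subseteq\mathcal{D}_c$: it is the union of the $B_\rho$ for $\rho<\rho^\star$ plus limit points, and each limit point lies in the closed set $\mathcal{D}_c$. The supremum is therefore attained, and the exact decision at $\rho_\gamma$ is unaffected. I would also note that the $\epsilon$-margin in eq.~\eqref{eq:sound-oracle} only makes the oracle test $\epsilon$-robust misclassification. This is consistent with the paper's treatment elsewhere, so I take the oracle as an exact test of $B\subseteq\mathcal{D}_c$.
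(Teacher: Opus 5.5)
Your argument is correct, but its core is not the paper's. The paper's proof consists only of Algorithm~\ref{algo:cyclic-dichotomic}, a dichotomic search for the radius on $[0,\mathcal{A}(\mathbb{F})]$ using $O(\log(\mathcal{A}(\mathbb{F})/\delta))$ oracle calls. That algorithm solves the optimization version ($\mathbb{B}$-\textsc{SIM}) up to precision $\delta$, and implicitly decides the threshold question from the approximate optimum.

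You instead combine two observations: soundness is downward closed along the chain $\{B_\rho\}$, and $\vol(B_\rho)=(2\rho)^d$ in closed form. Together they reduce the decision question to a single query at $\rho_\gamma=\tfrac12\gamma^{1/d}$, after the oracle-free check $B_{\rho_\gamma}\subseteq\mathbb{F}$. This gives an exact decision with one call. Bisection, by contrast, can only resolve the question up to $\delta$ when $\rho^\star$ is close to $\rho_\gamma$.

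Your second part recovers what the paper's route provides, namely an approximately maximal certified ball. Your version keeps the certified lower endpoint and returns $B_{\underline\rho}$. That is cleaner than Algorithm~\ref{algo:cyclic-dichotomic}, which returns the last queried ball (possibly unsound) and whose midpoint should be $\mathtt{low}+(\mathtt{high}-\mathtt{low})/2$ rather than $(\mathtt{high}-\mathtt{low})/2$.

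Two small remarks:
\begin{itemize}
\item Your closedness argument for attainment of $\rho^\star$ is correct but not needed for the decision.
\item You should drop ``intersecting with $\universe$'', since intersection destroys uniformity. What you need is to cap $\rho$ at $\min_i\min\{x_i-\underline{U}_i,\ \overline{U}_i-x_i\}$. The oracle only ranges over $\mathbb{F}$, so $\models\lnot\mathscr{S}_{B_\rho,\mathcal{N}}$ is equivalent to $B_\rho\subseteq\mathcal{D}_c$ only for such $\rho$.
\end{itemize}
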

We close our discussion on $\mathbb{B}$-\textsc{SIM}'s complexity, with some remarks on its relation with \textsc{MER}. In particular, \cite{empty-boxes} discusses a variant of \textsc{MER}, restricted to axis-aligned \emph{hyper-cubes}, called \textsc{Maximum Empty Square (MES)}. They also show that \textsc{MES} is \textbf{NP}-hard, for an arbitrary number of dimensions. However, in $\mathbb{B}$-\textsc{SIM}, we are \emph{given} the center of the $\ell_\infty$-sphere. Following our terminology thus far, we would call the latter problem q-\textsc{MES}. Observe that despite \textsc{MER} and q-\textsc{MER} being computationally equal (both \textbf{NP}-hard), q-\textsc{MES} is \emph{properly} easier than \textsc{MES}. Indeed, the dichotomic search method we described in Th. \ref{theo:sound-max-cyclic} would also work for q-\textsc{MES}.

\subsection{The Space of Uniform Interval Certifications}

Let $\mathbb{B}(d)\vert_{\mathbf{x}}^{\mathbb{F}}$ be the set of all uniform intervals centered at $\mathbf{x}$. First we observe that  $\mathbb{B}(d)\vert_{\mathbf{x}}^{\mathbb{F}} \subset \intervalsdxu$, since every uniform interval, is also an interval containing $\mathbf{x}$. However, the opposite does \emph{not} hold. Moreover, $\mathbb{B}(d)\vert_{\mathbf{x}}^{\mathbb{F}}$ exhibits a more straightforward structure than $\intervalsdxu$. Indeed, $\mathbb{B}(d)\vert_{\mathbf{x}}^{\mathbb{F}}$ form a \emph{totally ordered set}, under set inclusion. Namely, for every $B_1, B_2 \in \mathbb{B}(d)\vert_{\mathbf{x}}^{\mathbb{F}}$, we have $B_1 \subseteq B_2$, iff $\rho_1 \leq \rho_2$, where $\rho_1, \rho_2$ the radii of $B_1, B_2$, respectively. Intuitively, this simpler structure is the reason for the differences in computational hardness between the two problems. Nevertheless, the richer solution space of $\intervalsdxu$ provides non-trivial certifications to a \emph{strictly} larger class of inputs.

For uniform intervals we can define $\bot_{\mathbb{B}}, \top_{\mathbb{B}}$-operations for $\mathbb{B}(d)\vert_{\mathbf{x}}^{\mathbb{F}}$, analogous to Sec. \ref{sec:approximations}. To that end, consider a compact decision surface $\mathcal{D}_c \subseteq \mathbb{F}$ and an input $\mathbf{x} \in \mathcal{D}_c$. Let $\underline{\rho} > 0$ be the \emph{smaller} distance between $\mathbf{x}$ and a counterexample $\mathbf{x}^\prime \notin \mathcal{D}_c$, and $\overline{\rho} > 0$ the \emph{greatest} distance between $\mathbf{x}$ and another point $\mathbf{x}^{\prime\prime} \in \mathcal{D}_c$. Namely,
\begin{equation}
    \label{eq:cyclic-radii}
    \underline{\rho} = \inf \{\|\mathbf{x} - \mathbf{x}^\prime\|_\infty \mid \mathbf{x}^\prime \notin \mathcal{D}_c\}, ~~\overline{\rho} = \sup \{\|\mathbf{x} - \mathbf{x}^{\prime\prime}\|_\infty \mid \mathbf{x}^{\prime\prime} \in \mathcal{D}_c\}
\end{equation}
We define $\mathcal{D}_c\bot_{\mathbb{B}}\mathbf{x} = [\mathbf{x} - \underline{\rho}\mathbf{1}, \mathbf{x} + \underline{\rho}\mathbf{1}]$ and $\mathcal{D}_c\top_{\mathbb{B}}\mathbf{x} = [\mathbf{x} - \overline{\rho}\mathbf{1}, \mathbf{x} + \overline{\rho}\mathbf{1}]$. The following (set-theoretic) inequality relates the $\bot, \top$-operators in uniform and general intervals. For disambiguation, here we denote these operations with $\bot_{\mathbb{I}}, \top_{\mathbb{I}}$ for general intervals.
\begin{equation}
    \label{eq:bot-top-inequality}
    \mathcal{D}_c\bot_{\mathbb{B}}\mathbf{x} \subseteq \mathcal{D}_c \bot_{\mathbb{I}} \mathbf{x} \subseteq \mathcal{D}_c \subseteq \mathcal{D}_c \top_{\mathbb{I}} \mathbf{x} \subseteq \mathcal{D}\top_{\mathbb{B}}\mathbf{x}
\end{equation}
Noteworthy, for uniform intervals we have $\mathcal{D}_c\bot_{\mathbb{B}}\mathbf{x} = [\mathcal{D}_c\bot_{\mathbb{B}}\mathbf{x}]$. Since an objective $\metric(\cdot)$ respects set inclusion, $\metric(\mathcal{D}_c\bot_{\mathbb{B}}\mathbf{x}) \leq \metric(\mathcal{D}_c \bot_{\mathbb{I}} \mathbf{x}) \leq \metric(\mathcal{D}_c) \leq \metric(\mathcal{D}_c \top_{\mathbb{I}} \mathbf{x}) \leq \metric(\mathcal{D}\top_{\mathbb{B}}\mathbf{x})$. In Sec. \ref{sec:implementation}, we implement the $\bot, \top$-operators both for uniform and general interval certifications. We observe that the minimum edge length of a general interval certification is, on average, about twice greater than the same objective in uniform intervals.

    \section{Implementation}
    \label{sec:implementation}
    Here, we examine the implementation and practical implications of the theoretical framework discussed earlier. We developed the open source system \texttt{Parallele-\ pipedoNN}
. This system implements the operators discussed in Sec. \ref{sec:intervals}, \ref{sec:approximations}, and Appendix \ref{app:dichotomic}. In particular, Algorithm \ref{algo:generic-traversal} with \textsc{BUS} policy implements the $\top$-operator, while the $\bot$-operator is implemented in the Algorithm \ref{algo:generic-traversal} with \textsc{TDS} policy method. Recall that \textsc{BUS} policy is obtained by applying the parameters $I_0 = [\mathbf{x}, \mathbf{x}]$, $\varphi_{I, \mathcal{N}} = \mathscr{C}_{I, N}$, and $\square = \sqcup$. 
On the other hand, \textsc{TDS} is obtained for the parameters $I_0 = \mathbb{F}$, $\varphi_{I, \mathcal{N}} = \mathscr{S}_{I, N}$, and $\square = /$. Moreover, since Algorithm \ref{algo:non-det-expand} is non-deterministic, we implemented a deterministic \textsc{Sequential Dichotomic Expansion (SDE)} method, presented in Appendix \ref{app:dichotomic}. \textsc{SDE} ensures maximality. Additionally, we include in our analysis evaluations regarding Algorithm \ref{algo:generic-traversal} with either \textsc{BUS} or \textsc{TDS} policies for the case of uniform intervals. We denote these special cases of the algorithms as $\mathbb{B}$-\textsc{BUS} and $\mathbb{B}$-\textsc{TDS}, respectively. 

In Tbl. \ref{tab:algorithms}, we overview the presented algorithms. For minimal complete certifications, we presented two algorithms, \textsc{BUS} and $\mathbb{B}$-\textsc{BUS}, for general and uniform intervals, respectively. Recall that the minimal complete uniform interval certification is \emph{not} complete. \textsc{SDE} computes maximally sound interval certifications. \textsc{TDS} ensures voluminosity, under the assumption of Prop. \ref{prop:non-triviality-lb}. Combining these two practices, we have a voluminous and maximally sound certification. Finally, $\mathbb{B}$-\textsc{TDS} computes maximally complete certifications for uniform intervals. Naturally, $\mathbb{B}$-\textsc{TDS} also ensures voluminosity, under the same conditions of Prop. \ref{prop:non-triviality-lb}. For complete certifications, voluminosity is irrelevant. A complete interval will always have volume if $\mathcal{D}_c$ has volume. Our experiments provide counterexamples for voluminosity in \textsc{SDE}.

\begin{table}[t]
    \centering
    \begin{adjustbox}{width=\textwidth}
    \begin{tabular}{l c c c c c c}
        \toprule\toprule
         Algorithm~ & ~Operator~ & ~\thead{\textbf{S}ound/\\ \textbf{C}omplete}~ & ~\thead{Min./\\Max.}~ & ~Vol.~ & ~Complexity~ & ~Theory  \\
         \midrule\midrule
         \textsc{BUS} & $\mathcal{D}_c\top_{\mathbb{I}}\mathbf{x}$ & C & Min. & -- & $O[d\cdot\mathcal{A}(\universe)/\delta]$ & Th. \ref{theo:algo-min-complete}\\
         \textsc{TDS} & $\mathcal{D}_c\bot_{\mathbb{I}}\mathbf{x}$ & S & -- & (\checkmark) & $O[d\cdot\mathcal{A}(\universe)/\delta]$ & Th. \ref{theo:algo-max-sound} \\
         \textsc{SDE} & $[I]$ & S & Max. & \xmark & $O[d\cdot\log(\mathcal{A}(\universe))]$ & Prop. \ref{prop:dichotomic}\\
         TDS+SDE & $[\mathcal{D}_c\bot_{\mathbb{I}}\mathbf{x}]$ & S & Max. & (\checkmark) & $O[d\cdot\mathcal{A}(\universe)/\delta]$ & Th. \ref{theo:algo-max-sound}, Prop. \ref{prop:dichotomic}\\
         \midrule
         $\mathbb{B}$--BUS & $\mathcal{D}_c\top_{\mathbb{B}}\mathbf{x}$ & C & Min. & -- & $O[\log(\mathcal{A}(\universe))]$ & Th. \ref{theo:sound-max-cyclic}\\
         $\mathbb{B}$--TDS & $\mathcal{D}_c\bot_{\mathbb{B}}\mathbf{x}$ & S & Max. & (\checkmark) & $O[\log(\mathcal{A}(\universe))]$ & Th. \ref{theo:sound-max-cyclic}\\
         \bottomrule\bottomrule
    \end{tabular}
    \end{adjustbox}
    \caption{Algorithms \& Properties. With ``(\checkmark)'' we denote a property that is proven to hold (under the assumption $\mathbf{x} \in \mathcal{D}_c^\circ$). With ``\xmark'' we denote a disproved property. With ``--'' we denote irrelevant properties. In the rightmost column, we refer to the theoretical result (theorem or proposition) proving the algorithm's properties. In $\mathbb{B}$-\textsc{BUS}, we support our claim using the \emph{proof} of Th. \ref{theo:sound-max-cyclic}. The complexity/correctness proofs of $\mathbb{B}$-\textsc{BUS} and $\mathbb{B}$-\textsc{TDS} are practically identical. For $\mathbb{B}$-\textsc{BUS}, $\mathbb{B}$-\textsc{TDS} maximality/minimality is examined for uniform intervals.}
    \label{tab:algorithms}
\end{table}

\subsection{Experimental Evaluation}

Below we describe our experimental setup.

\begin{description}
    \item[Hardware.] The experiments were performed in \emph{parallel} on an Ubuntu 18.04 machine, with Intel Xeon E5-2640 v4 CPU, at 2.394GHz, with 38 cores, with 128GB RAM. The experiments ran in parallel, utilizing 35 cores.
    
    \item[Software Dependencies.] Our software is written in Python v3.8.16. The soundness and completeness of the oracles of eq. \eqref{eq:sound-oracle} and eq. \eqref{eq:complete-oracle} respectively, are implemented using the Marabou v2.0%
    \footnote{
        See \url{https://github.com/NeuralNetworkVerification/Marabou} and \cite{marabou-2}.
    } NN verifier. Our implementation takes as input a multilayered perceptron in open neural network exchange (ONNX) v1.16.0\footnote{
        See \url{https://github.com/onnx/onnx}.
    } format. For linear algebra computations, we used the NumPy v1.23.5\footnote{
        See \url{https://github.com/numpy/numpy}.
    } library. For visualization, we used the Matplotlib v3.7.2\footnote{
        See \url{https://github.com/matplotlib/matplotlib}.
    } library. The MLPs used in the experiments were trained from scratch, using TensorFlow v2.12.0\footnote{
        See \url{https://github.com/tensorflow/tensorflow}.
    }.

    \item[Algorithms' Parameters.] We evaluated all the algorithms of Tbl. \ref{tab:algorithms} using the same parameters. We set the precision constant $\delta$ to $0.1$. We also set a \emph{timeout} variable to 1 hour. For TDS+SDE the timeout is 2 hours, 1 hour for each component. The maximum number of iterations was set to 10,000.

    \item[Training Dataset.] We use 2 datasets, namely MNIST \cite{mnist} and Fashion MNIST \cite{fashion-mnist}. Both datasets consist of 28$\times$28, grayscale images, belonging to 10 classes. However, Fashion MNIST images are \emph{significantly} more complex than MNIST.
    
    \item[Neural Networks.] We consider the following MLP architecture. applied our algorithms to 2 MLPs, of the same architecture, trained on the MNIST and Fashion MNIST datasets, respectively. Including the input and output layers, we have the architecture $\angles{784, 32, 10, 10}$, w.r.t. Def. \ref{def:mlp}. This corresponds to 25,450 trainable parameters. For training, we used the Adam \cite{adam} algorithm, Glorot \cite{glorot} weight initialization, and the Categorical Crossentropy loss. By training on the 2 datasets above, this results in two MLPs, achieving $94\%$ and $82\%$ test-set accuracy for the MNIST and Fashion MNIST, respectively.
    \item[Inputs.] For each MLP, we randomly choose 5 images of the 10 classes of the test set (a total of 50 images per MLP).
\end{description}

\begin{table}[t]
    \centering
    \begin{adjustbox}{width=\textwidth}
    \begin{tabular}{l l c c c c}
        \toprule\toprule
        \multirow{2}{*}{\thead{\\Dataset}}~~ & \multirow{2}{*}{\thead{\\Algorithm}}~~~~ & \multicolumn{4}{c}{Average}\\ \cline{3-6}
        &  & \thead{Time\\ \textbf{s}ec./\textbf{m}in.}~~ & \thead{\# Verif. Calls}~~ & \thead{Min. Edge\\ Len. $\alpha(\cdot)$}~~ & \thead{Timeouts}
        \\ \midrule\midrule
        \multirow{6}{*}{MNIST} & \textsc{BUS} & $38.54$m & 2085.73 & 0.99 & $0$\\
        & \textsc{TDS} & $51.13$m & 3753.02 & 0.13 & $28$\\ 
        & \textsc{SDE} & $32.78$m & 2963.86 & 0.0 & $13$\\
        & TDS+SDE & $+42.04$m & +1030.55 & 0.13 & 25\\
        \cline{2-6}
        & $\mathbb{B}$--BUS & $3.92$s & 4 & 0.94 & $0$\\
        & $\mathbb{B}$--TDS & $21.06$s & 4 & 0.07 & $0$\\
        \midrule\midrule
        \multirow{6}{*}{\thead{Fashion\\ MNIST}} & \textsc{BUS} & 22.92m & 1235.26 & 0.1 & 0\\
        & \textsc{TDS} & 50.17m & 3272.76 & 0.18 & 32\\ 
        & \textsc{SDE} & 22.82m & 3469.48 & 0.0 & 3\\
        & TDS+SDE & +23.91m & +1374.45 & 0.18 & 1\\
        \cline{2-6}
        & $\mathbb{B}$--BUS & 3.5s & 4 & 0.94 & 0\\
        & $\mathbb{B}$--TDS & 5.15s & 4 & 0.12 & 0 \\
        \bottomrule\bottomrule
    \end{tabular}
    \end{adjustbox}
    \caption{Experimental evaluation on the MNIST and Fashion MNIST datasets.}
    \label{tab:experiments}
\end{table}

Tbl.~\ref{tab:experiments} reports descriptive statistics on the CPU time, the number of verification oracle calls, and the interval's minimum edge length $\alpha(\cdot)$. We evaluate all six algorithms presented in this paper on both datasets. Overall, the empirical results align with our theoretical analysis. In particular, $\mathbb{B}$-\textsc{BUS} and $\mathbb{B}$-\textsc{TDS} are approximately an order of magnitude faster than their general-intervals counterpart, as expected, since they do \emph{not} scale with the input dimension. This speed-up, however, comes at a cost, namely, $\mathbb{B}$-\textsc{TDS} attains only about half of the minimum edge length achieved by \textsc{TDS} (or roughly two-thirds on Fashion-MNIST). Finally, we observe that the Fashion MNIST MLP appears, on average, more robust that the MNIST MLP, despite achieving lower classification accuracy. This is consistent with prior findings, obtained via different methodologies, on the robustness-accuracy trade-off \cite{madry-2018,zhang-2019,tsipras-2019,raghunathan-2020}. More detailed statistical results are provided in Appendix \ref{app:experiments}.


    \section{Conclusions \& Future Work}
    \label{sec:conclusions}
    This work develops a framework for computing maximally sound and minimally complete interval certifications for MLPs.
In Sec.~\ref{sec:approximations}, we explore the interval certifications lattice, defining the $\top,\bot,[\cdot]$ operators. We develop algorithms that guarantee minimum completeness (Th.~\ref{theo:algo-min-complete}) and maximal soundness (Th.~\ref{theo:algo-max-sound}, \ref{theo:non-det-expand}). In Sec.~\ref{sec:optimization}, we study optimization on interval certifications. We observe intriguing asymmetries. The minimum complete certification can be computed in polynomial oracle calls. However, we extend previous results, by showing a stronger intractability result for sound maximization (Th.~\ref{theo:sound-max-hard}). Nevertheless, when optimization problems, are restricted to uniform intervals ($\ell_\infty$--spheres) become solvable in \emph{logarithmic} number of oracle calls. Finally, we implement our theoretical insights in the \texttt{ParallelelpipedoNN} system, (Sec.~\ref{sec:implementation}), which we evaluate on MNIST and Fashion MNIST. As future work, we plan to extend our analysis to more general \emph{polyhedral} certifications.
    \bibliographystyle{splncs04}
    \bibliography{formal-nn-verif}

     \appendix

    \section{Omitted proofs}
    \label{app:proofs}
    \subsection{Proofs of Section \ref{sec:intervals}}

\numericalgeometricmean*

\begin{proof}
    Applying straightforward computations we take,
    \begin{align*}
            \mathcal{A}(I) & = \|b_i - \ell_i\|_{\infty} = \frac{1}{d}\sum_{i \in [d]}\max_{i \in [d]}\{u_i - \ell_i\} &&& \\
            & \geq \frac{1}{d}\sum_{i \in [d]} u_i - \ell_i && \left[ \text{ equals } \frac{1}{d} \pi(I)\right]  \\
            & \geq \sqrt[d]{\prod_{i \in [d]} u_i - \ell_i} && \left[\begin{array}{l}
            \text{by arithmetic-geometric} \\
            \text{mean inequality. Equals } \sqrt[d]{v(I)}
            \end{array}
            \right]\\
            & \geq \sqrt[d]{\left(\min_{i \in [d]} \{u_i - \ell_i\}\right)^d} && \left[ \text{ equals } \alpha(I))\right].
    \end{align*}

    $\hfill\Box$
\end{proof}

\intervalexclusion*

\begin{proof}
    Consider any interval $J = [\bell', \bu'] \subseteq I$ such that $\mathbf{x}' \notin J$. Since $\mathbf{x}' \notin J$, there exists at least one coordinate $i \in [d]$ such that either $u'_i < x'_i$ or $\ell'_i > x'_i$. In either case, the length of $J$ along coordinate $i$ satisfies $u'_i - \ell'_i \leq \max\{x'_i - \ell_i, \ u_i - x'_i\}$.

    Let $k \in [d]$ be the coordinate selected in Definition \ref{def:interval-exclusion}, that is $k = \arg\max_{i \in [d]} |x_i - x'_i|$. By construction, the interval $I/\mathbf{x}'$ is obtained by shrinking $I$ only along coordinate $k$. Therefore, we have
    \[
        \alpha(I/\mathbf{x}') = \min \left\{ \min_{i \neq k}(u_i - \ell_i), \ \max\{x'_k - \ell_k, \ u_k - x'_k\} \right\}.
    \]
    Since $J \subseteq I$, we have $u'_i - \ell'_i \le u_i - \ell_i$ for all $i \in [d]$, and since $J$ excludes $\mathbf{x}'$, at least one coordinate must satisfy
    \[
        u'_i - \ell'_i \leq \max_{i \in [d]}\{x'_i - \ell_k, \ u_k - x'_i\} \leq \max\{x'_k - \ell_k, \ u_k - x'_k\}.
    \]
    Therefore, we have $\alpha(J) = \min_{i \in [d]}\{u'_i - \ell'_i\} \leq \alpha(I/\mathbf{x}')$, and the proof is complete.\\

    \hfill$\Box$
\end{proof}

\subsection{Proofs of Section \ref{sec:approximations}}

\completecertification*

\begin{proof}
    Let $\mathbf{y} \not\in I$. Then there exists a coordination $i$ such that either $y_i < x_i - \underline{R}_i$ or $y_i > x_i + \overline{R}_i$. By formulas \eqref{def:top-operator}, no point of $\mathcal{D}_c$ extends further in that direction. Hence, $\mathbf{y} \not\in \mathcal{D}_c$, and therefore $\kappa(\mathbf{y}) \neq c$. Since exiting the interval necessarily the prediction changes, we establish completeness.

    Let $J  = [\bell, \bu]$ be any complete certification containing $\mathbf{x}$, implying that for each coordinate $i$, we have
    \[
        \ell_i \leq \min \{x'_i \mid \mathbf{x}' \in \mathcal{D}_c\}, \quad u_i \geq \max \{x'_i \mid \mathbf{x}' \in \mathcal{D}_c\}.
    \]
    Thus, it holds $\mathcal{D}_c \top \mathbf{x} \subseteq J$. If the inclusion were strict, then there would exist a point outside $\mathcal{D}_c$ still inside $J$, violating completeness. Therefore, we conclude that $J = \mathcal{D}_c \top \mathbf{x}$.

    Let $\mathbf{x}_1, \mathbf{x}_2 \in \mathcal{D}_c$. Observe that the radii $\underline{\mathbf{R}}$ and $\overline{\mathbf{R}}$ depend only on the extremal coordinates of $\mathcal{D}_c$, not on the choice of $\mathbf{x}$. Changing the reference point $\mathbf{x}$ shifts both endpoints equally, leaving the interval invariant. Hence, it holds $\mathcal{D}_c\top\mathbf{x}_1 = \mathcal{D}_c\top\mathbf{x}_2$, and the proof is complete.

    $\hfill\Box$
\end{proof}

\algomincomplete*

\begin{proof}
    Let $I_j = [-\bell^j, \bu^j]$, $\bell^j, \bu^j \geq \mathbf{0}$, $j \in \mathbb{N}$, be the interval in the $j$-th iteration of the Algorithm \ref{algo:generic-traversal}, when we apply the property $\mathscr{C}_{\mathcal{N}, I}(\mathbf{x}^\prime)$ and the operator $\sqcup$. Consider some $k \in \mathbb{N}$, s.t. $I_k = I_{k+1}$. W.l.o.g., let $k$ be the smallest natural with the latter property. For $I_k$ holds that $I_k \supseteq \mathcal{D}$, otherwise the $\mathscr{C}_{\mathcal{N}, I}(\mathbf{x}^\prime)$ oracle would find a counterexample $\mathbf{x}^a$, and $I_k \neq I_k \sqcup \mathbf{x}^a$, for each $\mathbf{x}^a \in I_k$. Under this configuration, the Algorithm \ref{algo:generic-traversal} terminates after a finite number of steps, with $I_k \supseteq \mathcal{D}$. Moreover, since $\mathcal{D}_c\top\mathbf{x}$ is the unique complete certification it holds $I_k \supseteq \mathcal{D}_c\top\mathbf{x}$.

    Moreover, for each $j \in [k]$, let $\mathbf{x}^j$ be the $j$-th counterexample returned from the $\mathscr{C}_{I, \mathcal{N}}(\mathbf{x}^\prime)$ oracle, s.t. $I_{j - 1} \sqcup \mathbf{x}^j = I_j$, and $I_0 = [\mathbf{0}, \mathbf{0}]$. For each $j \in [k]$, we have a strictly increasing sequence of intervals $I_0 \subset I_1 \subset \cdots \subset I_k$, since $I_{j - 1} \subset I_{j - 1} \sqcup \mathbf{x}^j = I_j$. Consider the set of the first $j$ witnesses returned by the oracle, $\mathcal{X}_j = \{\mathbf{x}^i \mid i \in [j]\}$. Naturally, $\mathcal{X}_1 \subset \mathcal{X}_2 \subset \cdots \subset \mathcal{X}_k$. For the $j$-th iteration, and the $i$-th coordinate, consider the values
    \begin{equation}\label{eq:algo-min-complete}
        \underline{\theta}^j_i = \max \{-x^a_i \mid \mathbf{x}^a \in \mathcal{X}_j\}, \quad \overline{\theta}^j_i = \max \{x^a_i \mid \mathbf{x}^a \in \mathcal{X}_j\}.
    \end{equation}
    Clearly, it holds $\underline{\theta}^j_i, \overline{\theta}^j_i \geq 0$. Since, $\mathcal{X}_j \subset \mathcal{D}$, for each $j \in [k]$, there exist at least one index $i \in [d]$, s.t. $\underline{R}_i \leq \underline{\theta}^j_i $ and $\overline{\theta}^j_i \leq \overline{R}_i$. Next, for the $j$-th iteration, and for every $i \in [d]$, the $\sqcup$ implies that $\ell^j_i = \underline{\theta}^j_i + \delta$ and $u^j_i = \overline{\theta}^j_i + \delta$. Therefore, for each $j \in [k]$ we have $\ell^j_i \geq \underline{R}_i + \delta$ and $u^j_i = \overline{R}_i + \delta$, concluding that $I_k \subseteq \mathcal{D}_c\top\mathbf{x} + [-\delta \mathbf{1}, \delta \mathbf{1}]$.

    Finally, we show the complexity of the Algorithm \ref{algo:generic-traversal}, when we apply the property $\mathscr{C}_{I, \mathcal{N}}(\mathbf{x}^\prime)$ and the operator $\sqcup$. Consider the potential function $\Phi([-\bell, \bu]) = \sum_{i\in [d]} \ell_i + u_i$, for $\bell, \bu \geq \mathbf{0}$. For each $j \in [k]$, the coordinate $i$, s.t. $\ell^j_i \leq \ell^{j - 1}_i + \delta$ or $u^j_i \leq u^{j - 1}_i + \delta$. Thus, $\Phi(I_j) \leq \Phi(I_{j-1}) + \delta$, and $\Phi(I_0) = \Phi([\mathbf{0}, \mathbf{0}]) = 0$. Now, from Prop. \ref{prop:numerical-geometric-mean} it holds, $\Phi(I_k) = \sum_{i\in [d]} \ell^k_i + u^k_i \leq d \cdot \mathcal{A}(I_k)$. Hence, we take $\Phi(I_k) \leq d \cdot \mathcal{A}(\mathcal{D}_c\top\mathbf{x} + [-\delta \mathbf{1}, \delta \mathbf{1}]) \leq d \cdot \left(\mathcal{A}(\mathcal{D}_c\top\mathbf{x}) + 2\delta \right)$. So, we have $T = [\Phi(I_k) - \Phi(I_0)]/\delta = O(d \cdot \left(\mathcal{A}(\mathcal{D}_c\top\mathbf{x}) + 2\delta - 0 \right)/\delta) = O(d \cdot \left(\mathcal{A}(\mathcal{D}_c\top\mathbf{x})\right)/\delta)$.

    $\hfill\Box$
\end{proof}

\propmaximalsoundapprox*

\begin{proof}
    For sake of contradiction, let $\mathbf{x}^a \in \mathcal{D}\bot\mathbf{0} \setminus \mathcal{D}$. Since $\mathbf{x}^a \in \mathcal{D}\bot\mathbf{0}$, for each $i \in [d]$, it holds that ~$-\uradi \leq x^a_i \leq \oradi$. Consider some $j \in [d]$, s.t. $\mathbf{x}^a \in \underline{V}_j$ or $\mathbf{x}^a \in \overline{V}_j$~ (but not both, since $\mathbf{x}^a \neq \mathbf{0}$). Let $\mathbf{x}^a \in \underline{V}_j$. Then, $\uradj_j < -x^a_j$, or $-\uradj_j > x^a_j$. A contradiction. When $\mathbf{x}^a \in \overline{V}_j$, we work similarly.

    $\hfill\Box$
\end{proof}

\theoalgomaxsound*

\begin{proof}
    Let $I_j = [-\bell^j, \bu^j]$, $\bell^j, \bu^j \geq \mathbf{0}$, $j \in \mathbb{N}$, be the interval in the $j$-th iteration of the Algorithm \ref{algo:generic-traversal}, when we apply the property $\mathscr{S}_{\mathcal{N}, I}(\mathbf{x}^\prime)$ and the operator $\sqcap$. W.l.o.g., let $k$ be the smallest natural with the latter property. For $I_k$ holds that $I_k \subseteq \mathcal{D}$. Otherwise, the soundness oracle $\mathscr{S}_{\mathcal{N}, I}(\mathbf{x}^\prime)$ would find a counterexample $\mathbf{x}^a$. And $I_k \neq I_k \sqcap\mathbf{x}^a$, for each $\mathbf{x}^a \in I_k$. Therefore, Algorithm \ref{algo:generic-traversal} terminates after a finite number of steps, with $I_k \subseteq \mathcal{D}$.

    Moreover, for each $j \in [k]$, let $\mathbf{x}^j$ be the $j$-th counterexample returned by the $\mathscr{S}_{I, \mathcal{N}}(\mathbf{x}^\prime)$ oracle, s.t. $I_{j-1} \sqcap \mathbf{x}^j = I_{j}$, and $I_0 = \universe$. For each $j \in [k]$, we have a strictly decreasing sequence of intervals $I_0 \supset I_1 \supset \cdots \supset I_k$, since $I_{j-1} \supset I_{j-1} \sqcap \mathbf{x}^j = I_{j}$. Consider the set of the first $j$ witnesses returned by the oracle, $\mathcal{X}_j = \{\mathbf{x}^i \mid i \in [j]\}$. Naturally, $\mathcal{X}_1 \subset X_2 \subset \cdots \subset \mathcal{X}_k$. For the $j$--th iteration, and the $i$--th coordinate, consider the values 
    \begin{equation}
        \label{eq:algo-max-sound}
        \underline{\tau}^j_i = \min\{x^a_i \mid \mathbf{x}^a \in \underline{V}_i \cap \mathcal{X}_j\}, \quad
        \overline{\tau}^j_i = \min\{x^a_i \mid \mathbf{x}^a \in \overline{V}_i \cap \mathcal{X}_j\}
    \end{equation}
    Now, applying the operator $\sqcap$ to intervals $[-\bell, \bu]$ and $[\mathbf{x}^a, \mathbf{x}^a]$, for any $\mathbf{x}^a \in \mathbb{R}^d$, we have
    \begin{equation}
        \label{eq:specification-op}
        \begin{cases}
            \ell^\prime_i = x^a_i - \delta, & \text{if}~~ \mathbf{x}^a \in \underline{V}_i \cap [-\bell, \bu],\\
            u^\prime_i = x^a_i - \delta, & \text{if}~~ \mathbf{x}^a \in \overline{V}_i \cap [-\bell, \bu].
        \end{cases}
    \end{equation}
    Otherwise, $[-\bell, \bu] =[-\bell, \bu] \sqcap \mathbf{x}^a$, when $\mathbf{x}^a \notin [-\bell, \bu]$. If $\mathbf{x}^a \in \underline{V}_i$, we update the $i$-th value of $\bell$ to be $x^a_i - \delta$. Symmetrically, if $\mathbf{x}^a \in \overline{V}_i$, we update the $i$-th value of $\bu$ to be $x^a_i - \delta$.

    Since, $\mathcal{X}_j \subset \universe \setminus \mathcal{D}$, we have $\uradi \leq \underline{\tau}_i^j$ and $\oradi \leq \overline{\tau}_i^j$, for each $j \in [k]$. From eq. \eqref{eq:specification-op}, for the $j$-th iteration, and the $i$-th coordinate, we have $\ell^j_i = \underline{\tau}^j_i - \delta$ and $u^j_i = \overline{\tau}^j_i - \delta$. Thus, for each $j \in [k]$ we have $\ell^j_i \geq \uradi - \delta$ and $u^j_i \geq \oradi - \delta$. Therefore, $I_k \supseteq \mathcal{D}\bot\mathbf{x} - \delta\mathbf{1}$.

    Finally, we show the \textsc{GenericTraversal} algorithm's complexity. Consider the potential function $\Phi([-\bell, \bu]) = \sum_{i \in [d]} \ell_i + u_i$, for $\bell, \bu \geq \mathbf{0}$. For each $j \in [k]$, there is a coordinate $i$, s.t. $\ell^j_i \leq \ell^{j-1}_i - \delta$ or $u^j_i \leq u^{j-1}_i - \delta$. Thus, $\Phi(I_j) \leq \Phi(I_{j-1}) - \delta$. Additionally, $\Phi(I_0) = d\cdot\mathcal{A}(\universe)$, and $\Phi(I_k) \geq d\cdot\mathcal{A}(\mathcal{D}\bot\mathbf{x}) - \delta$. Hence, for the complexity of the Algorithm \ref{algo:generic-traversal}, we have $T = [\Phi(I_0) - \Phi(I_k)]/\delta = O(d[\mathcal{A}(\universe) - \mathcal{A}((\mathcal{D}\bot\mathbf{x}) + \delta]/\delta) = O(d[\mathcal{A}(\universe) - \mathcal{A}(\mathcal{D}\bot\mathbf{x})]/\delta)$.

    $\hfill\Box$
\end{proof}

\nontrivialitylb*

\begin{proof}
    If $\mathbf{x} \in \mathcal{D}^\circ_c$, then there is some $\rho > 0$, s.t. $\infball(\mathbf{x}, \rho) \subseteq \mathcal{D}$. Choose $\rho$ to be the greatest real number s.t. the previous statement holds. Then, for each $i \in [d]$ holds $\uradi, \oradi \geq \rho > 0$. Thus, $\alpha(\mathcal{D}_c\bot\mathbf{x}) > 0$.

    $\hfill\Box$
\end{proof}

\lemmaximalsoundness*

\begin{proof}
    $[\Rightarrow]$ We prove the contrapositive. Let some $i \in [d]$ and some $\delta > 0$, s.t. either $[-(\bell + \delta\mathbf{e}^i), \bu] \setminus \mathcal{D}_c = \varnothing$ \emph{or} $[-\bell, \bu + \delta\mathbf{e}^i] \setminus \mathcal{D}_c = \varnothing$ (or both). W.l.o.g. let $[-(\bell + \delta\mathbf{e}^i), \bu] \setminus \mathcal{D}_c = \varnothing$. Since $[-(\bell + \delta\mathbf{e}^i), \bu] \supset [-\bell, \bu]$, then $[-\bell, \bu]$ is not maximal.

    $[\Leftarrow]$ Let for every $i \in [d]$, and every $\delta > 0$, we assume that holds, $[-(\bell + \delta\mathbf{e}^i), \bu] \setminus \mathcal{D}_c \neq \varnothing$ \emph{and} $[-\bell, \bu + \delta\mathbf{e}^i] \setminus \mathcal{D}_c \neq \varnothing$. Now assume some $I^\prime \supset I = [-\bell, \bu]$. There are $i \in [d]$ and $\delta > 0$, s.t. $I^\prime \supseteq [-(\bell + \delta\mathbf{e}^i), \bu]$ or $I^\prime \supseteq [-\bell, \bu + \delta\mathbf{e}^i]$ (or both). Therefore, $I^\prime \setminus \mathcal{D}_c \neq \varnothing$. Hence, $I$ is maximal.

    $\hfill\Box$
\end{proof}

\nondetexpand*
\begin{proof}
    We start from the trivial sound certification $X = [\mathbf{x}, \mathbf{x}] \in \intervalsdxu$. Let, $I^\ast = [\bell^\ast, \bu^\ast] \in [X]$ be an arbitrary maximally sound interval. Observe that Algorithm \ref{algo:non-det-expand}, at each iteration, expands exactly one coordinate of either the lower or upper endpoint by $\delta$. Therefore, it performs monotone expansions of the current interval $I$. Thus, the algorithm defines a monotone path in the interval lattice $\angles{\intervalsdxu, \subseteq}$, starting from $X$.

    Next, we consider the following non-deterministic strategy: i) if the algorithm expands a coordinate $k$ of the lower endpoint, then it chooses $k$ s.t. $\ell_k < \ell^\ast_k$, and ii) if the algorithm expands a coordinate $k$ of the upper endpoint, then it chooses $k$ s.t. $u_k < u^\ast_k$. As long as the current interval $I$ satisfies $I \subseteq I^\ast$, all expansions are sound. Since each coordinate is expanded in increments of size $\delta$, after at most
    \[
        \left\lceil \frac{u^\ast - x_k}{\delta} \right\rceil \quad \text{ or } \quad \left\lceil \frac{x_k - \ell^\ast}{\delta} \right\rceil
    \]
    steps per coordinate, the algorithm reaches an interval $J$ s.t. $I - [-\delta\mathbf{1}, \delta\mathbf{1}] \subseteq J \subseteq I$. Observe that, by Lem. \ref{lem:maximal-soundness}, for every coordinate $i$ and every $\delta > 0$, any further unilateral expansion beyond $I^\ast$ results in a counterexample.

    Finally, for the complexity, there are $2d$ coordinates of all endpoints, and they can be expanded at most $O(1/\delta)$ times. Each expansion requires one call in the soundness oracle $\mathscr{S}_{I, \mathcal{N}}(\mathbf{x}^\prime)$, costing $\mathcal{A}(\mathbb{F})$. Therefore, the total number of non-deterministic steps is
    \[
        O \left( d \cdot \frac{\mathcal{A}(\mathbb{F})}{\delta}\right),
    \]
    and the proof is complete.

    $\hfill\Box$
\end{proof}

\subsection{Proofs of Section \ref{sec:optimization}}

\theosoundmaxhard*

\begin{proof}
    The proof relies on a reduction from q-\textsc{MER} to \textsc{SIM}. Given an instance $\mathcal{I}$ in q-\textsc{MER}, we can construct an instance $\mathcal{I}'$ in \textsc{SIM} as follows. Let $\mathbf{x} := \mathbf{q}$, we define the decision surface $\mathcal{D}_c := \universe \setminus  \mathcal{F}$. Since the set $\mathcal{F}$ is finite and $\universe$ is compact, the set $\mathcal{D}_c$ is compact as well. Next, we interpret every forbidden point $\mathbf{f} \in \mathcal{F}$ as a counterexample, that is, given the classifier $\kappa(\cdot)$, it holds $\kappa(\mathbf{f}) \neq c$. Using the terminology of the soundness oracle, eq. \eqref{eq:sound-oracle}, we have that $\mathbf{f} \models \mathscr{I}_{I, \mathcal{N}}(\mathbf{f})$ for all points in $\mathcal{F}$. Contrarily, sound intervals are precisely those that exclude $\mathcal{F}$. Therefore, an interval $I \in \intervalsdxu$, with $I \subseteq \mathcal{D}_c$, is sound iff $I \cap \mathcal{F} = \emptyset$. Since we have set $\mathbf{x} := \mathbf{q}$, and the objective constraint $v(I) \geq \gamma$, for $\gamma > 0 $, is identical in both problems, the reduction is established. Hence, the $\mathcal{I}$ is feasible iff the $\mathcal{I}'$ is feasible. Since the problem q-\textsc{MER} can not be decided in polynomial time, the problem \textsc{SIM} can not be decided in polynomial time, as well, and the proof is complete.

    $\hfill\Box$
\end{proof}

\lemsoundmaxhard*

\begin{proof}
    Let $G = (V, E)$ a simple undirected graph with $|V| = d$. We fix any constant $w > 0$ and wlog we work on $[0, 1]^d$. From Lem. 2 in \cite{empty-boxes} if $I \subseteq [0, 1]^d$ is a maximum-volume feasible interval, then $\partial I$ contains $\mathbf{0}$ and the $i$-th dimension of $I$ is either $w$ or $1$. Therefore, the maximum-volume interval $I$ has $\ell_i = 0$ for every $i$, and each $u_i \in \{w, 1\}$, meaning that $I = [\mathbf{0}, \mathbf{u}]$. The interval $I$ has volume $v(I) = \prod_{i \in [d]} (u_i - \ell_i)$. Applying Th. 2 from \cite{empty-boxes}, graph $G$ has an independent set of size at least $k$ iff there exists an interval $I \in \universe$ of volume at least $w^k$. Consequently, deciding whether there exists any empty interval of volume at least $w^k$ is \textbf{NP}-hard. Taking $\gamma = w^k$ and $\mathbf{q} = \mathbf{0}$, every interval of the form $[\mathbf{0}, \mathbf{u}]$ contains $\mathbf{q}$, so deciding the existence of an empty rectangle that contains the query point $\mathbf{q}$ with $v(I) \geq \gamma$ is \textbf{NP}-hard. Therefore, the q-\textsc{MER} decision problem is \textbf{NP}-hard, and the proof is complete.

    $\hfill\Box$
\end{proof}

\soundmaxcyclic*

\begin{proof}
    We present an algorithm with executed in a logarithmic number of oracle calls in Algorithm \ref{algo:cyclic-dichotomic}. Essentially, we do a \emph{dichotomic} search in the real interval $[0, \mathcal{A}(\mathbb{F})]$.

    \begin{algorithm}[t]
    \DontPrintSemicolon
    \caption{$\mathbb{B}$--\textsc{TopDown Search}}
    \label{algo:cyclic-dichotomic}
    \KwInput{$\mathbf{x} \in \mathbb{F}$, the center of the uniform interval certification, and a percision constant $\delta > 0$.}
    \KwOutput{A maximally sound uniform interval certification}

    $\mathtt{low} \gets 0$; $\mathtt{high} \gets \mathcal{A}(\mathbb{F})$; 
    
    \While{$\mathtt{high} - \mathtt{low} > \delta$}{
        $\rho \gets (\mathtt{high} - \mathtt{low}) / 2$

        $B \gets \mathcal{B}(\mathbf{x}, \rho)$
        
        \uIf{$\exists \mathbf{x}^\prime$, s.t. $\mathbf{x}^\prime \models \mathscr{S}_{B, \mathcal{N}}$}{
            $\mathtt{high} \gets \rho$
        }
        \Else{
            $\mathtt{low} \gets \rho$
        }
    }
    
    \Return{$B$}
\end{algorithm}

    $\hfill\Box$
\end{proof}

\noindent Note that Algorithm \ref{algo:cyclic-dichotomic} can easily be modified to compute complete uniform certifications. We simply use the $\mathscr{C}_{B, \mathcal{N}}$ predicate, instead of $\mathscr{S}_{B, \mathcal{N}}$, and switching steps 6, 8.
%
%
%

    \section{Geometry of the $\mathbb{R}^d$ Cone Partition}
    \label{app:geometry}
    In this section, we briefly discuss the geometry involved in the cone decomposition of $\mathbb{R}^d$, introduced in eq. \eqref{eq:cones} of Sec. \ref{sec:approximations}. To that end, we firstly introduce some elementary concepts of affine geometry. Then we proceed to examine the intersections of these cones.

\subsection{Elements of Affine Geometry}

We begin with convex bodies. For $\bell, \bu \in \mathbb{R}^d$, the line segment with endpoints $\bell$ and $\bu$ is defined as $\text{L}(\bell, \bu) = \{\mathbf{x} \in \mathbb{R}^d \mid x = \bell + t\cdot(\bu - \bell), ~t\in [0, 1]\}$. A set $S \subseteq \mathbb{R}^d$ is said to be convex if $L(\bell, \bu) \subseteq S$ for every $\bell, \bu \in S$. A convex body is a convex and compact subset of $\mathbb{R}^d$ with nonempty interior. An important class of convex sets is the \emph{polyhedra}. Let $A \in \mathbb{R}^{m \times n}$ and $\bu \in \mathbb{R}^m$. The set $P = \{\mathbf{x} \in \mathbb{R}^n \mid A\mathbf{x} \leq \bu\}$ is called a (convex) polyhedron. If $\bu = \mathbf{0}$, we obtain a polyhedral cone $C = \{\mathbf{x} \in \mathbb{R}^m \mid A\mathbf{x} \leq \mathbf{0}\}$.

A set of vectors $\mathbf{v}_1, \dots, \mathbf{v}_n \in \mathbb{R}^d$ is \emph{affinely independent} if the vectors $\left[ \mathbf{v}_1 \atop 1 \right]$, $\dots$, $\left[ \mathbf{v}_n \atop 1 \right]$ are \emph{linearly independent}. The \emph{dimension} of a polyhedron $P$, denoted $\mathsf{dim}(P)$, is the cardinality of the largest affinely independent subset of $P$ \emph{minus one}. Given a polyhedron $P = \{\mathbf{x} \in \mathbb{R}^m \mid A\mathbf{x} \leq \bu\}$ a \emph{face} $F$ is another polyhedron $F = \{\mathbf{x} \in \mathbb{R}^n \mid A^\prime\mathbf{x} \leq \bu^\prime\}$, where $A^\prime$ is a submatrix of $A$ and $\bu^\prime$ is the respective subvector of $\bu$. A \emph{facet} is a face of $P$ with dimention $\mathsf{dim}(P) - 1$.

\subsection{Intersections of Cones}

\begin{figure}[h]
    \begin{minipage}{.5\textwidth}
        \begin{center}
            \includegraphics[scale=0.2]{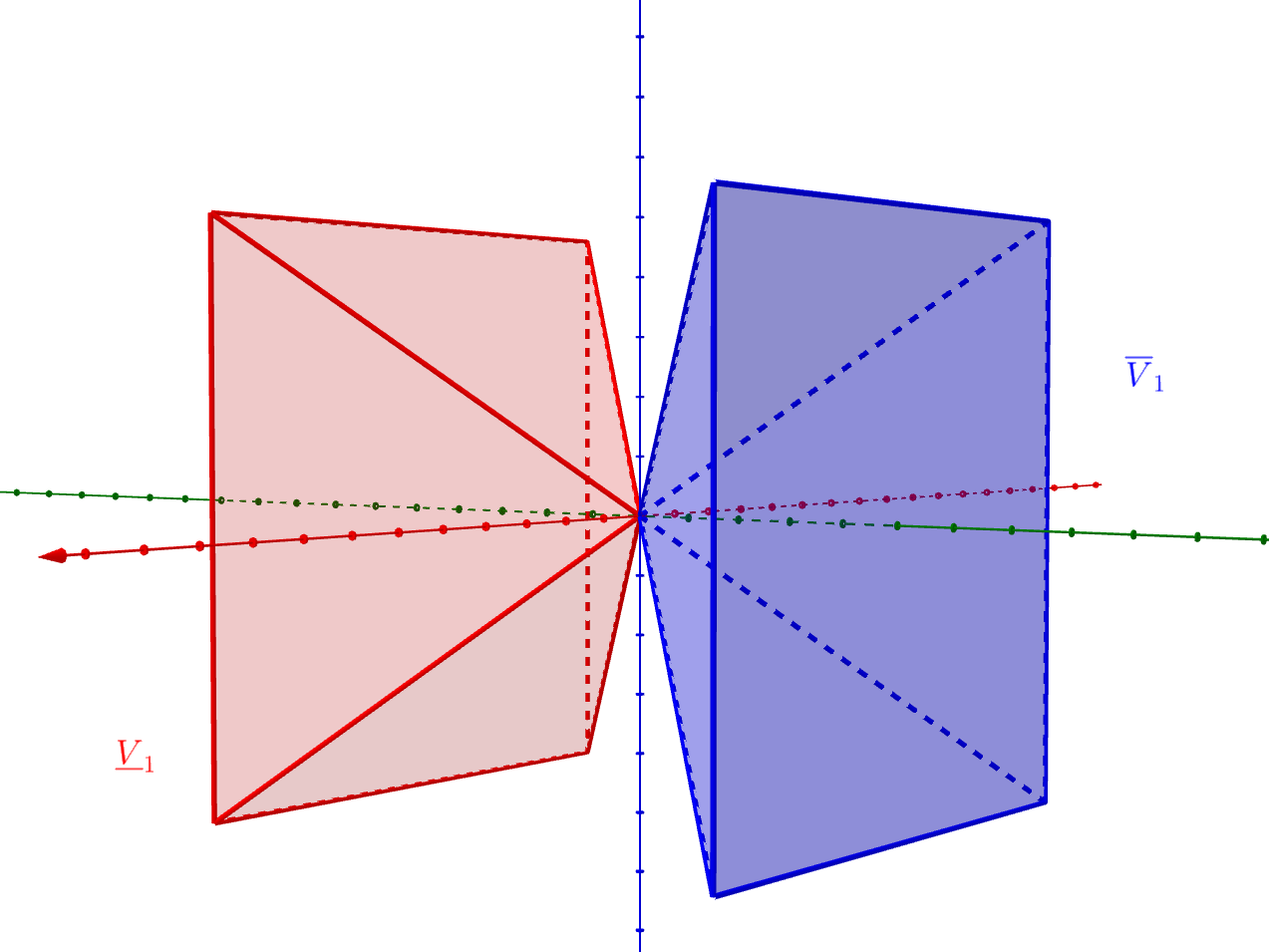}
        \end{center}
    \end{minipage}
    \begin{minipage}{.5\textwidth}
        \begin{center}
            \includegraphics[scale=0.2]{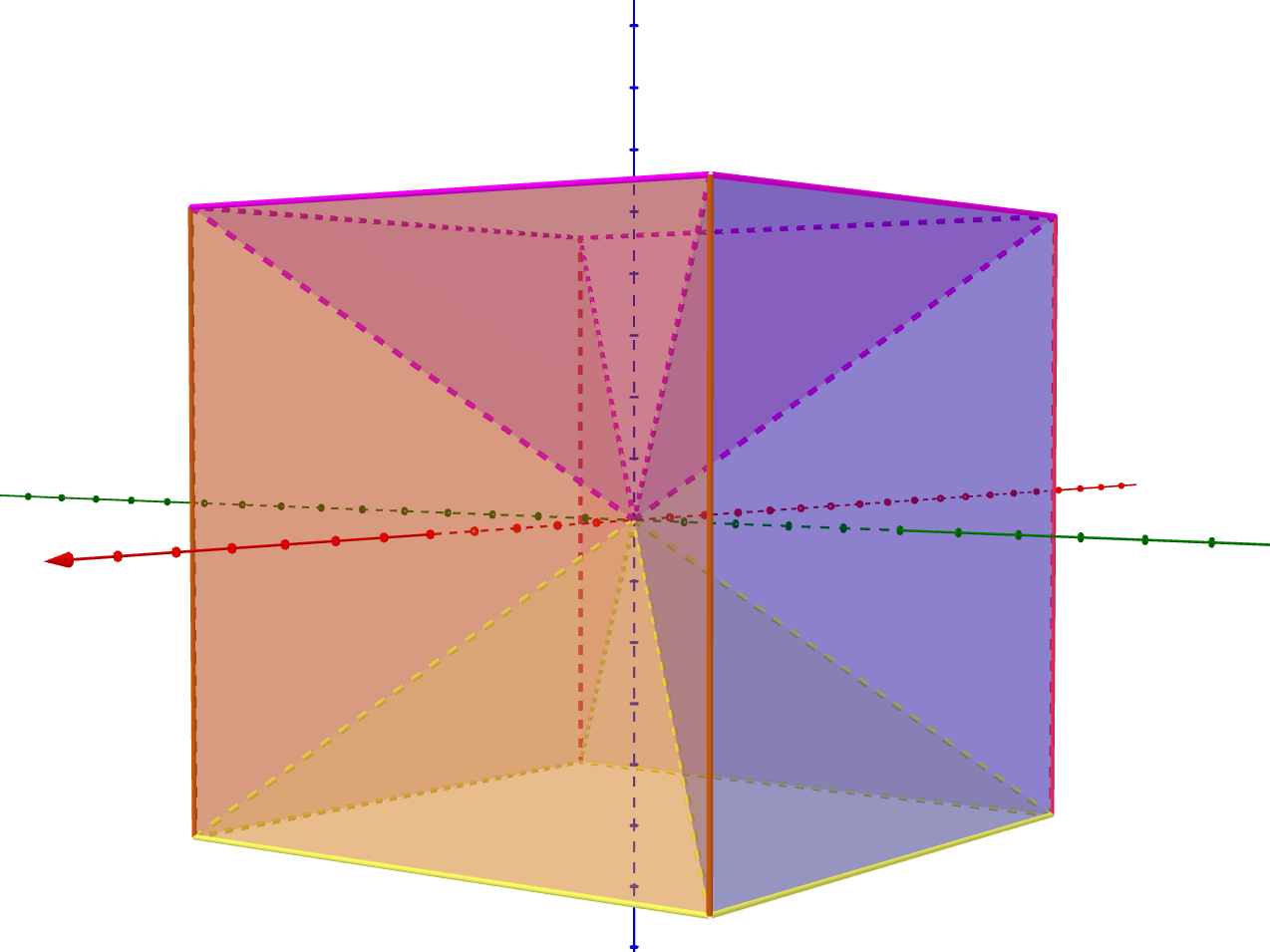}
        \end{center}
    \end{minipage}
    \caption{Cone Decomposition of $\mathbb{R}^3$. The polyhedral cones of eq. \eqref{eq:cones}, correspond to \emph{pyramids} is $\mathbb{R}^3$. Left: The $\underline{V}_1, \overline{V}_1$ cones of the $x_1$--axis. Observe that $\underline{V}_1 \cap \overline{V}_1 = \{\mathbf{0}\}$. Right: The cone decomposition ,
    of the entire $\mathbb{R}^3$.}
    \label{fig:cone-decomposition}
\end{figure}

To gain intuition about the underlying problem, we present the cone decomposition of $\mathbb{R}^3$ in Fig. \ref{fig:cone-decomposition}. Our focus is on the intersection of a collection of cones. We express the cones of eq. \eqref{eq:cones} as a set of inequalities. For each $i \in [d]$, let $\underline{U}_i, \overline{U}_i \in \mathbb{R}^{d \times d}$ denote two associated matrices. For the matrix $\underline{U}_i$, every element of the $i$-th column is $-1$. For all $m \neq n$ the entry, we have $(\underline{u}_i)_{mn} = 0$. Finally, for $n \neq i$, the diagonal entries satisfy $(\underline{u}_i)_{nn} = -1$. For $\overline{U}_i$, the construction is identical, except that every element in the $i$-th column equals $+1$. All other entries coincide with those of $\underline{U}_i$. In Fig. \ref{fig:cones-2}, we give an example when $d = 4$. With this notation in place, the cones from eq.~\eqref{eq:cones} can now be written equivalently in the form,
\begin{equation}
    \label{eq:cones-2}
    \underline{V}_i = \{\mathbf{x} \in \mathbb{R}^d \mid \underline{U}_i\mathbf{x} \geq \mathbf{0} \}, \quad\overline{V}_i = \{\mathbf{x} \in \mathbb{R}^d \mid \overline{U}_i\mathbf{x} \geq \mathbf{0} \}.
\end{equation}

\begin{figure}[t]
    \centering
    \begin{minipage}{.3\textwidth}
        \begin{equation*}
            \underline{U}_2 =
            \left[
            \begin{array}{c c c c }
                 -1 & -1 & 0 & 0\\
                 0 & -1 & 0 & 0\\
                 0 & -1 & -1 & 0\\
                 0 & -1 & 0 & -1\\
            \end{array}
            \right]
        \end{equation*}
    \end{minipage}
    \begin{minipage}{.3\textwidth}
        \begin{equation*}
            \overline{U}_3 =
            \left[
            \begin{array}{c c c c }
                 -1 & 0 & 1 & 0\\
                 0 & -1 & 1 & 0\\
                 0 & 0 & 1 & 0\\
                 0 & 0 & 1 & -1\\
            \end{array}
            \right]
        \end{equation*}
    \end{minipage}
    \begin{minipage}{.3\textwidth}
        \begin{equation*}
            \underline{\overline{U}}_{23} =
            \left[
            \begin{array}{c c c c }
                 -1 & -1 & 1 & 0\\
                 0 & -1 & 1 & 0\\
                 0 & -1 & 1 & 0\\
                 0 & -1 & 1 & -1\\
            \end{array}
            \right]
        \end{equation*}
    \end{minipage}
    \caption{Left, Middle: The matrices $\underline{U}_2,\overline{U}_3 \in \mathbb{R}^{4\times 4}$, for the cones $\underline{V}_2, \overline{V}_2$. It holds that $\underline{V}_2 \cap \overline{V}_3 = \{\mathbf{x} \in \mathbb{R}^d \mid \underline{\overline{U}}_{23}\mathbf{x} \geq \mathbf{0}\}$.}
    \label{fig:cones-2}
\end{figure}

Eq. \eqref{eq:cones-2} is equivalent to eq. \eqref{eq:cones}. Note that for each $i \in [d]$, the cones $\underline{V}_i$ and $\overline{V}_i$ both have full dimension, that is  $\mathsf{dim}(\underline{V}_i) = \mathsf{dim}(\overline{V}_i) = d$. Indeed, the rows of $\underline{U}_i$ form a set of $d$ linearly independent vectors. Together with the origin, these yield $d+1$ affinely independent vectors, confirming that $\underline{V}_i$ has dimension $d$. An identical argument applies to $\overline{V}_i$. 

Next, consider the intersection $\overline{V}_i \cap \overline{V}_j$ for distinct indices $i \neq j$. For any $\mathbf{x} \in \overline{V}_i \cap \overline{V}_j$ it must holds $\overline{U}_i\mathbf{x} \geq \mathbf{0}$ \emph{and} $\overline{U}_j\mathbf{x} \geq \mathbf{0}$. Thusly, we can define a matrix $\overline{U}_{ij}$, s.t. holds $\overline{U}_{ij}\mathbf{x} \geq \mathbf{0}$, \emph{iff} $\overline{U}_i\mathbf{x} \geq \mathbf{0}$ \emph{and} $\overline{U}_j\mathbf{x} \geq \mathbf{0}$. Concretely, $\overline{U}_{ij}$ is obtained from $\overline{U}_i$ by replacing the $j$-th column with all ones, while leaving all other entries unchanged. Hence,  $\overline{V}_i \cap \overline{V}_j = \{\mathbf{x} \in \mathbb{R}^d \mid \overline{U}_{ij} \mathbf{x} \geq \mathbf{0}\}$. An example for $d=4$ is shown in Fig. \ref{fig:cones-2}. Geometrically, this intersection reduces the dimension by one, $\mathsf{dim}(\overline{V}_i\cap \overline{V}_j) = d - 1$. Analogous constructions hold for the intersections $\underline{V}_i\cap \overline{V}_j$ and $\underline{V}_i\cap \underline{V}_j$. This agrees with the geometric intuition from Fig. \ref{fig:cone-decomposition}, since the intersection of two different pyramids in $\mathbb{R}^3$ is a facet.

    \section{Implementing the Maximal Closure Operator}
    \label{app:dichotomic}

In this section, we examine a \emph{deterministic} expansion method for computing maximally sound interval certifications. Essentially, the \textsc{Non--deterministic Expansion} method, introduced in Sec. \ref{sec:approximations}, and described in Algorithm \ref{algo:non-det-expand}, defines a \emph{collection} of deterministic algorithms, each making different choices on the coordinate to expand next. In Algorithm \ref{algo:expand-dichotomic}, we describe a simple, novel\footnote{
    The authors in \cite{iaxp} mention a dichotomic expansion as an alternative to their linear expansion algorithm. However, they do not give any details.
} and deterministic method, that expands each coordinate sequentially. At each step, we make a dichotomic (or binary) search to expand the $i$-th coordinate. Below, we establish the correctness and computational complexity of the above algorithm.

\begin{algorithm}[t]
    \DontPrintSemicolon
    \caption{\textsc{Non-DeterministicExpansion}}
    \label{algo:non-det-expand}
    \KwInput{$I_0 \in \intervalsdxu$, an initial sound interval certification, and $\delta > 0$, a percision constant.}
    \KwOutput{$J \in \intervalsdxu$, a expanded interval s.t. $J \supseteq I$.}

    $I \gets I_0$; $\underline{\mathcal{I}} \gets \varnothing$; $\overline{\mathcal{I}} \gets \varnothing$ 
    
    \While{$\underline{\mathcal{I}} \neq [d] \lor \overline{\mathcal{I}} \neq [d]$}{
        \textbf{choose} $\mathtt{endpoint} \in
  \{\mathcal{I} \in \{\underline{\mathcal{I}}, \overline{\mathcal{I}}\} \mid
  \mathcal{I} \neq [d]\}$
  
    \If{$\mathtt{endpoint} = \underline{\mathcal{I}}$}{
        \textbf{choose} $k \in [d] \setminus \underline{\mathcal{I}}$
        $J \gets I - [\delta\mathbf{e}_k, \mathbf{0}]$
        
            \If{$\models \mathscr{S}_{J, \mathcal{N}}$}{
                $\underline{\mathcal{I}} \gets \underline{\mathcal{I}} \cup \{k\}$
                
                \textbf{continue} \tcp*{Stop expanding this coordinate}
            }
    }\Else{
        \textbf{choose} $k \in [d] \setminus \overline{\mathcal{I}}$
        $J \gets I + [\mathbf{0}, \delta\mathbf{e}_k]$
        
        \If{$\models \mathscr{S}_{J, \mathcal{N}}$}{
                $\overline{\mathcal{I}} \gets \overline{\mathcal{I}} \cup \{k\}$
                
                \textbf{continue}
                \tcp*{Stop expanding this coordinate}
        }
    }
    
    $I \gets J$ \tcp*{Keep the expansion only if it is sound}
    }
\end{algorithm}

\begin{algorithm}[t]
    \DontPrintSemicolon
    \caption{\textsc{Sequential Dichotomic Expansion}}
    \label{algo:expand-dichotomic}
    \KwInput{an interval $[-\bell, \bu]$, $\bell, \bu \geq \mathbf{0}$, s.t. $[-\bell, \bu] \subseteq \mathcal{D}$.}
    \KwOutput{a maximally sound interval $[-\bell, \bu]$.}

    \For{$i \in [d]$}{
        $\mathtt{low}, \mathtt{high} \gets 0, \underline{U}_i$ \label{algo-step:expand-dichotomic-ub-while}
        \tcp*{expand $i$-th lower-bound}

        \While{$\mathtt{high} - \mathtt{low} > \delta$}{
            $\ell_i \gets \mathtt{low} +  (\mathtt{high} - \mathtt{low}) / 2$

            \If{$\mathscr{S}_{[-\bell, \bu], \mathcal{N}}(\mathbf{x})$}{
                $\mathtt{low} \gets \ell_i$
            }
            \Else{
                $\mathtt{high} \gets \ell_i$
            }
        }
        
        $\ell_i \gets \mathtt{low}$

        $\mathtt{low}, \mathtt{high} \gets 0, \overline{U}_i$ 
            \tcp*{expand $i$-th upper-bound}

        \While{$\mathtt{high} - \mathtt{low} > \delta$}{
            $u_i \gets \mathtt{low} + (\mathtt{high} - \mathtt{low}) / 2$

            \If{$\mathscr{S}_{[-\bell, \bu], \mathcal{N}}(\mathbf{x})$}{
                $\mathtt{low} \gets u_i$
            }
            \Else{
                $\mathtt{high} \gets u_i$
            }
        }            

        $u_i \gets \mathtt{low}$
    }

    \Return{$[-\bell, \bu]$}
\end{algorithm}

\begin{proposition}
    \label{prop:dichotomic}
    The \textsc{Sequential Dichotomic Expansion} algorithm satisfies Lemma \ref{lem:maximal-soundness}. Moreover, the algorithm terminates after $O(d\cdot\log(\mathcal{A}(\universe))$ $\mathscr{S}_{I, \mathcal{N}}(\cdot)$ oracle calls.
\end{proposition}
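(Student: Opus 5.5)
The proof splits into three parts: a loop invariant for soundness, maximality via Lemma \ref{lem:maximal-soundness} up to the precision $\delta$, and a count of oracle calls. Throughout, write the current interval as $[-\bell,\bu]$ with $\bell,\bu\geq\mathbf{0}$ (after translating $\mathbf{x}$ to the origin, as in the proofs of Th. \ref{theo:algo-min-complete} and \ref{theo:algo-max-sound}). I read the branch test in Algorithm \ref{algo:expand-dichotomic} as ``the soundness oracle returns no counterexample'', i.e. $\models\lnot\mathscr{S}_{[-\bell,\bu],\mathcal{N}}$. Only then does the assignment $\mathtt{low}\gets\ell_i$ (keep the expansion) make sense.

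\emph{Invariant.} For each binary search (say the one for the $i$-th lower endpoint), the interval obtained by setting $\ell_i=\mathtt{low}$ is sound, i.e. contained in $\mathcal{D}_c$. Initially $\mathtt{low}=0$. The initial interval is sound by hypothesis, and setting $\ell_i=0$ only shrinks it, so the invariant holds at the start. The invariant is preserved because $\mathtt{low}$ is updated only to a value just certified by the oracle. Moreover, the interval with $\ell_i=\mathtt{high}$ is either certified unsound or equals the universe bound $\underline{U}_i$. Both the lower and the upper search end with $\ell_i\gets\mathtt{low}$ (respectively $u_i\gets\mathtt{low}$), so after every search the current interval is sound. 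Hence the returned interval satisfies $[-\bell,\bu]\subseteq\mathcal{D}_c$.

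\emph{Maximality.} This is the main obstacle, because the coordinates are expanded sequentially. When coordinate $i$ is processed, the maximal feasible $\ell_i$ depends on the other coordinates at that moment. Later coordinates only grow the interval, since each search starts from $0$ and settles at some $\mathtt{low}\geq$ the old value, provided the search is read as starting from the current endpoint. Growth of the other coordinates can only make further expansion of coordinate $i$ harder. I would formalize this by monotonicity: if $I\subseteq I'$ and $I+\delta\mathbf{e}_i$ (extended at coordinate $i$) contains a point outside $\mathcal{D}_c$, so does the corresponding extension of $I'$. At termination of the search for coordinate $i$, the extension to $\mathtt{high}\leq\mathtt{low}+\delta$ contains a counterexample, unless $\mathtt{high}$ is the universe bound, in which case the coordinate is saturated. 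By monotonicity this witness persists in the final interval. So the final interval satisfies the condition of Lemma \ref{lem:maximal-soundness} for every extension of size at least $\delta$, in each of the $2d$ directions. That is, it is maximally sound up to the precision $\delta$, which is the same sense as in Th. \ref{theo:algo-max-sound} and Th. \ref{theo:non-det-expand}. I would state this $\delta$-relaxation explicitly, since exact maximality is not attainable by finitely many bisections.

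\emph{Complexity.} Each of the $2d$ bisections starts with a bracket of length at most $\mathcal{A}(\universe)$ and halves it per oracle call. It stops once the bracket length is at most $\delta$, so it makes $\lceil\log_2(\mathcal{A}(\universe)/\delta)\rceil$ calls. The total is $2d\lceil\log_2(\mathcal{A}(\universe)/\delta)\rceil=O(d\cdot\log(\mathcal{A}(\universe)))$ calls, treating $\delta$ as a fixed constant.
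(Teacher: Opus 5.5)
Your proposal is correct and follows essentially the paper's argument. The paper proves by induction on the bisection steps that a counterexample always lies in $(\mathtt{low},\mathtt{high}]$, which matches your ``$\mathtt{high}$ is certified unsound or is the universe bound''; it then concludes that the $\delta$-extension at the final endpoint contains a counterexample and counts logarithmically many oracle calls per coordinate. Your monotonicity step across later coordinates, your explicit soundness invariant, and your explicit $\delta$-relaxation of Lemma~\ref{lem:maximal-soundness} fill in points the paper's proof skips by calling the other coordinates ``symmetrical''; the monotonicity step genuinely needs your reading that each search starts from the current endpoint.
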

\begin{proof}
    Let $[-\bell, \bu]$ be the interval returned by the \textsc{Sequential Dichotomic Expansion} algorithm. We assume that $[-\bell, \bu] \subset \universe$. We prove that Lemma \ref{lem:maximal-soundness} is satisfied for the $i$--th coordinate of $\bu$. The case for the other coordinates of $\bu$ and the coordinates of $\bell$ are symmetrical.

    Consider the interval $[\mathtt{low}, \mathtt{high}]$ of step \ref{algo-step:expand-dichotomic-ub-while}. We show that if there is a counterexample in $(0, \underline{U}_i]$, there is always a counterexample in $(\mathtt{low}, \mathtt{high}]$, throughout the execution of the algorithm. Let $\mathtt{low}_j$, $u_{ij}$, $\mathtt{high}_j$ denote the values of the variables, of the $j$--th iteration. We prove this fact using induction.

    For the base step, $[\mathtt{low}_0, \mathtt{high}_0] = [0, \overline{U}_i]$. From our assumption, there is a counterexample $\mathbf{x}^a \in [\mathtt{low}_0, \mathtt{high}_0]$. Now assume that there is a counterexample in $[\mathtt{low}_j, \mathtt{high}_j]$. We show that there is a counterexample in $[\mathtt{low}_{j+1}, \mathtt{high}_{j+1}]$. Consider the variable $u_{i(j+1)}$ at the $(j+1)$--th iteration of the algorithm. It holds $u_{i(j+1)} = \mathtt{low}_j + (\mathtt{high}_j - \mathtt{low}_j)/2$. We take two cases, either there is a counterexample in $[\mathtt{low}_i, u_{i(j+1)}]$, or not.

    If there is a counterexample in $[\mathtt{low}_j, u_{i(j+1)}]$, then $\mathtt{low}_{j+1} = \mathtt{low}_j$. Moreover, $\mathtt{high}_{j+1} = u_{i(j+1)}$. Hence, exists a counter example in  $[\mathtt{low}_{j+1}, \mathtt{high}_{j+1}]$. Thus, we showed the desideratum.

    Now, we consider the case that there is no counterexample in $[\mathtt{low}_j, u_{i(j+1)}]$. But, from the inductive hypothesis, there is a counterexample in $[\mathtt{low}_j, \mathtt{high}_j]$. Therefore, there must be a counterexample in $[u_{i(j+1)}, \mathtt{high}_{j}]$. In this case, we have $\mathtt{low}_{j+1} = u_{i(j+1)}$ and $\mathtt{high}_{j+1} = \mathtt{high}_j$. Thus, there still exists a counterexample in $[\mathtt{low}_{j+1}, \mathtt{high}_{j+1}]$.

    From the step \ref{algo-step:expand-dichotomic-ub-while} of the algorithm, the procedure terminates when $\mathtt{high} -\mathtt{low} \leq \delta$. Moreover, $\ell_i = \mathtt{low}$. From the above, there is a counterexample in $[\mathtt{low}, \mathtt{high}]$. Thus, there is a counterexample in $[\bell, \bu + \delta\mathbf{e}^i]$.

    Finally, each expansion operation will take at most $\log(\mathcal{A}(\universe))$ steps. We make $d$ expansions. Moreover, the maximality of the returned solution is established by Lemma \ref{lem:maximal-soundness}. $\hfill \Box$
\end{proof}

    \section{More on Experimental Evaluation}
    \label{app:experiments}
    In this appendix we review some additional statistics from the experiments presented in Sec. \ref{sec:implementation}, 
providing additional insights on the details of algorithms.

\subsection{MNIST}

\begin{figure}[t]
    \centering
    \includegraphics[width=1\linewidth]{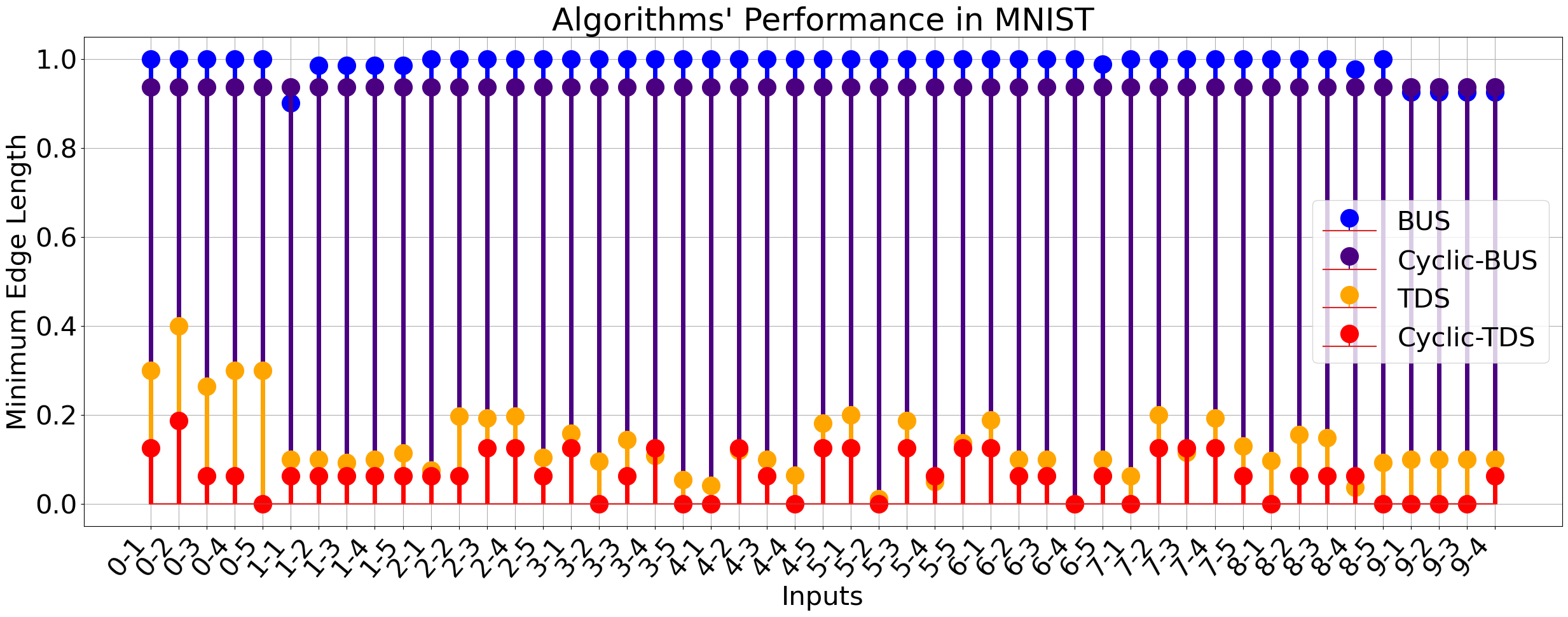}
    \caption{Minimum edge length of the certifications computed by each algorithm, per MNIST input point.}
    \label{fig:mnist-analytics}
\end{figure}

We begin from the MNIST dataset and the corresponding MLP. Tab. \ref{tab:experiments-mnist-expanded} provides a detailed description of the metrics analyzed in Sec. \ref{sec:implementation}. For each metric, we report the \emph{minimum (Min.)}, the \emph{average (Avg.)}, and the \emph{maximum (Max.)} values. In addition, we report the percentage of the inputs that timed out; the percentage of time consumed by the verification oracles; and the percentage of the non-trivial solutions, returned by each algorithm. Fig. \ref{fig:mnist-analytics} shows the achieved \emph{minimum edge length} for each input and algorithm. Finally, in Tab. \ref{tab:examples-MNIST} presents example images illustrating the computed bounds of the instance \textsc{7-4}.


\begin{table}
    \centering
    \begin{adjustbox}{width=\textwidth}
    \begin{tabular}{l c c c c c c}
                     & \textsc{BUS} & \textsc{TDS} & \textsc{SDE} & \textsc{TDS+SDE} & $\mathbb{B}$-\textsc{BUS} & $\mathbb{B}$-\textsc{TDS} \\   
         \thead{Lower\\ Bound}
            & ~\raisebox{-.5\height}{\includegraphics[scale=2]{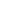}}~
            & ~\raisebox{-.5\height}{\includegraphics[scale=2]{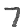}}~
            & ~\raisebox{-.5\height}{\includegraphics[scale=2]{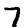}}~
            & ~\raisebox{-.5\height}{\includegraphics[scale=2]{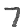}}~
            & ~\raisebox{-.5\height}{\includegraphics[scale=2]{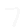}}~
            & ~\raisebox{-.5\height}{\includegraphics[scale=2]{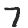}}~
            \\
         \thead{Upper\\ Bound}
            & \raisebox{-.5\height}{\includegraphics[scale=2]{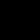}}
            & \raisebox{-.5\height}{\includegraphics[scale=2]{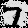}}
            & \raisebox{-.5\height}{\includegraphics[scale=2]{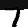}}
            & \raisebox{-.5\height}{\includegraphics[scale=2]{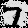}}
            & \raisebox{-.5\height}{\includegraphics[scale=2]{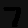}}
            & \raisebox{-.5\height}{\includegraphics[scale=2]{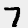}}
            \\
    \end{tabular}
    \end{adjustbox}
    \caption{Examples of the execution of all the presented algorithms on MNIST.}
    \label{tab:examples-MNIST}
\end{table}

From Tab. \ref{tab:examples-MNIST}, we observe that the \emph{morphology} of each bound strongly depends on the applied algorithm. Algorithms based on \emph{symmetric} interval (i.e, $\ell_\infty$-spheres) induce a uniform distortion on the given input. In contrast, sequential algorithms, such as \textsc{SDE}, expand each pixel in turn, once further expansion of the previous pixel is no longer possible. This results in a highly imbalanced distortion of the input image. 
Finally, the \textsc{TDS} and \textsc{BUS} algorithms lay between these two extremes.

\begin{table}
    \centering
    \begin{adjustbox}{width=\textwidth}
    \begin{tabular}{l l c c c c c c}
        \toprule\toprule
         \multirow{7}{*}{\thead{CPU\\Time}} & Algorithm & \thead{Min.\\ Value} & Avg. & \thead{Max.\\ Value} & Std Dev. & Timeouts & \thead{Timeout\\ Perc. (\%)} \\ \cline{2-8}
        & \textsc{BUS} & $30.29$m & $38.54$m &$49.70$m & $5.78$m & $0$ & 0\% \\
        &\textsc{TDS} & $8.97$m & $51.13$m &$62.54$m & $14.64$m & $28$ & 56\%\\ 
        &\textsc{SDE} & $11.27$m & $32.78$m & $67.95$m & $18.02$m & $13$ & 26\%\\
        &TDS+SDE & $+4.04$m & $+42.04$m & $+65.72$m & $14.64$m & 25 & 50\%\\
        \cline{2-8}
        &$\mathbb{B}$--BUS & $2.82$s & $3.92$s & $4.83$s & $0.55$s & $0$ & 0\% \\
        &$\mathbb{B}$--TDS & $1.05$s & $21.06$s & $149.26$s & $27.27$s & $0$ & 0\% \\
        \midrule\midrule
        \multirow{6}{*}{\thead{Number\\of\\Oracle\\Calls}} 
        &\textsc{BUS} & 1794 & 2085.73 & 2529 & 172.39 & 1.06s & 96\%\\
        &\textsc{TDS} & 1711 & 3753.02 & 4862 & 708.15 & 0.78s & 95\%\\
        &\textsc{SDE} & 1272 & 2963.86 & 3261 & 55.28 & 0.64s & 95\%\\
        &\textsc{TDS}+\textsc{SDE} & +10 & +1030.55 & +2569 & +927.37 & 2.41s & 98\%\\ 
        \cline{2-8}
        &$\mathbb{B}$--BUS & 4 & 4 & 4 & 0 & 0.94s & 99\%\\
        &$\mathbb{B}$--TDS & 4 & 4 & 4 & 0 & 0.93s & 95\%\\
        \midrule\midrule
        \multirow{6}{*}{\thead{Min.\\Edge\\Length\\$\alpha(\cdot)$}} 
        &\textsc{BUS} & 0.9 & 0.99 & 1.0 & 0.02 & 11 & 22\%\\
        &\textsc{TDS} & 0 & 0.13 & 0.4 & 0.08 & 48 & 96\%\\
        &\textsc{SDE}  & 0.0 & 0.0 & 0.0 & 0.0 & 0 & 0\%\\
        &\textsc{TDS}+\textsc{SDE} & 0.0 & 0.1 & 0.4 & 0.1 & 48 & 96\% \\
        \cline{2-8}
        &$\mathbb{B}$--BUS & 0.94 & 0.94 & 0.94 & 0.0 & 49 & 98\%\\
        &$\mathbb{B}$--TDS & 0.0 & 0.07 & 0.19 & 0.05 & 36 & 72\%\\
        \bottomrule\bottomrule
    \end{tabular}
    \end{adjustbox}
    \caption{Additional Analytics for the MNIST dataset.}
    \label{tab:experiments-mnist-expanded}
    \vspace{-1cm} 
\end{table}

\subsection{Fashion MNIST}

\begin{figure}
    \centering
    \includegraphics[width=1\linewidth]{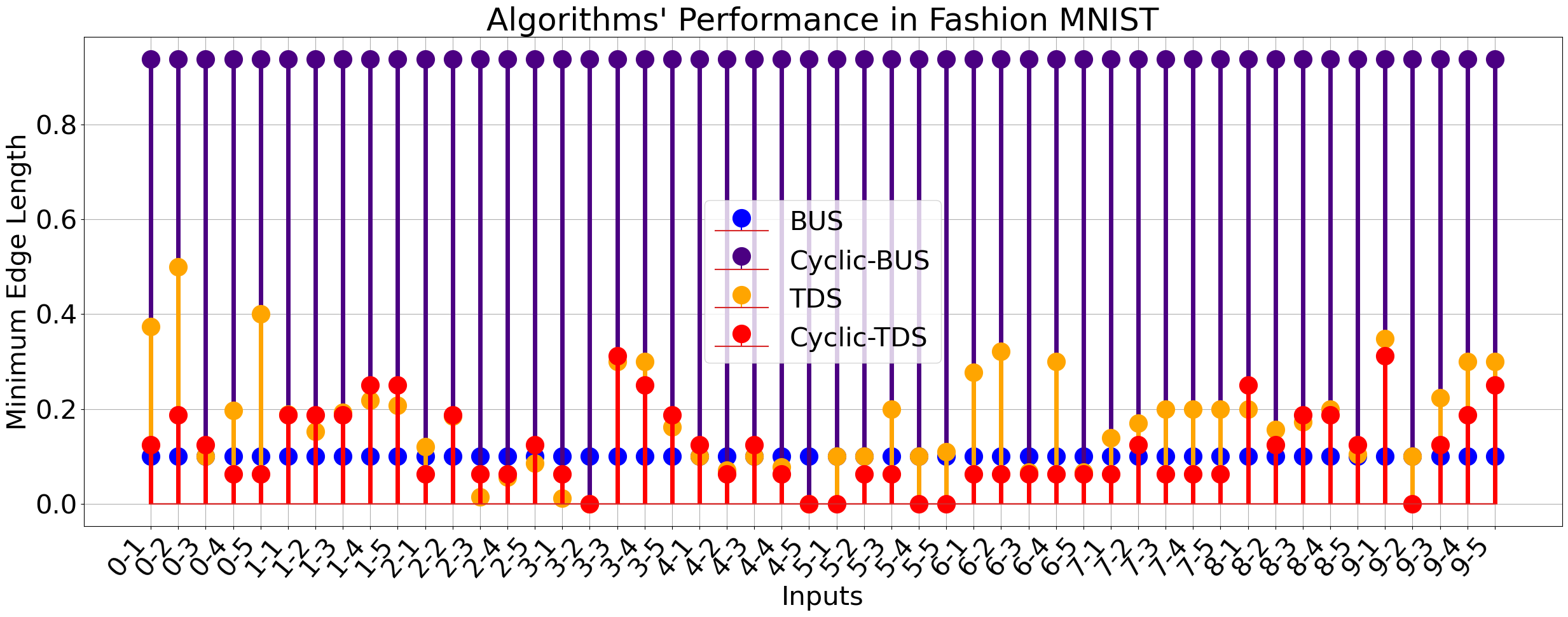}
    \caption{Minimum edge length of the certifications computed by each algorithm, per Fashion MNIST input point.}
    \label{fig:fashion-mnist-analytics}
\end{figure}

Subsequently, we focus on the Fashion MNIST dataset and the corresponding MLP. Tab. \ref{tab:experiments-fashion-mnist-expanded} provides a detailed description of the metrics used, reporting the \emph{minimum}, the \emph{average}, \emph{maximum)} values, along with its \emph{standard deviation}, similar to Tab. \ref{tab:experiments-mnist-expanded}. In addition, we report several percentage-based metrics. Fig. \ref{fig:fashion-mnist-analytics} reports the achieved \emph{minimum edge length} obtained for each input and algorithm. Finally, Tab. \ref{tab:examples-Fashion-MNIST} presents example images illustrating the computed bounds of the instance \textsc{7-4}.

From Fig. \ref{fig:fashion-mnist-analytics}, we observe that the \textsc{BUS} algorithm \emph{fails} to compute a correct complete certification, whereas $\mathbb{B}$-\textsc{BUS} succeeds. We attribute this behavior to \textsc{BUS}'s sensitivity to the precision parameter $\delta$. As shown in Tab. \ref{tab:examples-Fashion-MNIST}, the \textsc{BUS} algorithm behaves well; the small value of $\alpha$ is due to the involvement of only a few pixels.


\begin{table}
    \centering
    \begin{adjustbox}{width=\textwidth}
    \begin{tabular}{l c c c c c c}
                     & BUS & TDS & SDE & TDS+SDE & $\mathbb{B}$-BUS & $\mathbb{B}$-TDS \\   
         \thead{Lower\\ Bound}
            & ~\raisebox{-.5\height}{\includegraphics[scale=2]{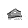}}~
            & ~\raisebox{-.5\height}{\includegraphics[scale=2]{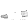}}~
            & ~\raisebox{-.5\height}{\includegraphics[scale=2]{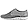}}~
            & ~\raisebox{-.5\height}{\includegraphics[scale=2]{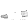}}~
            & ~\raisebox{-.5\height}{\includegraphics[scale=2]{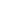}}~
            & ~\raisebox{-.5\height}{\includegraphics[scale=2]{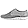}}~
            \\
         \thead{Upper\\ Bound}
            & \raisebox{-.5\height}{\includegraphics[scale=2]{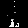}}
            & \raisebox{-.5\height}{\includegraphics[scale=2]{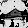}}
            & \raisebox{-.5\height}{\includegraphics[scale=2]{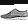}}
            & \raisebox{-.5\height}{\includegraphics[scale=2]{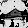}}
            & \raisebox{-.5\height}{\includegraphics[scale=2]{figures/examples-Fashion-MNIST/c-bu-d-7_4_lb.png}}
            & \raisebox{-.5\height}{\includegraphics[scale=2]{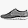}}
            \\
    \end{tabular}
    \end{adjustbox}
    \caption{Examples on Fashion MNIST.}
    \label{tab:examples-Fashion-MNIST}
\end{table}

\begin{table}
    \centering
    \begin{adjustbox}{width=\textwidth}
    \begin{tabular}{l l c c c c c c}
        \toprule\toprule
        \multirow{7}{*}{\thead{CPU\\Time}} & Algorithm & \thead{Min.\\ Value} & Avg. & \thead{Max.\\ Value} & Std Dev. & Timeouts & \thead{Timeout\\ Perc. (\%)} \\ \cline{2-8}
        & \textsc{BUS} & 17.1m & 23.82m & 30.87m & 3.45m & 0 & 0\%\\
        & \textsc{TDS} & 13.48m & 50.17m & 61.15m & 15.31m & 32 & 64\%\\ 
        & \textsc{SDE} & 10.38m & 22.82m & 60.03m & 13.27m & 3 & 6\%\\
        & TDS+SDE & +6.52m & +23.91m & +60.01m & 13m & 1 & 2\%\\
         \cline{2-8}
        &$\mathbb{B}$--BUS & 2.42s & 3.52s & 4.47s & 0.63s & 0 & 0\%\\
        &$\mathbb{B}$--TDS & 0.76s & 5.15s & 28.48s & 6.1s & 0 & 0\%\\
        \midrule\midrule
        \multirow{6}{*}{\thead{Number\\of\\Oracle\\Calls}} 
        &\textsc{BUS} & 987.12 & 1375.41 & 1782.78 & 200.99 & 1.11s & 96\%\\
        &\textsc{TDS} & 1343 & 3272.76 & 5062 & 724.22 & 0.88s & 95\%\\
        &\textsc{SDE} & 1450 & 3469.48 & 4040 & 390.51 & 0.36s & 92\%\\
        &\textsc{TDS}+\textsc{SDE} & +819 & +1779.34 & +2757 & 386.36 & 0.77s & 96\%\\ 
        \cline{2-8}
        &$\mathbb{B}$--BUS & 4 & 4 & 4 & 0.0 & 0.85s & 96\%\\
        &$\mathbb{B}$--TDS & 4 & 4 & 4 & 0.0 & 1.26s & 98\%\\
        \midrule\midrule
        \multirow{6}{*}{\thead{Min.\\Edge\\Length\\$\alpha(\cdot)$}} 
        &\textsc{BUS} & 0.1 & 0.1 & 0.1 & 0.0 & 50 & 100\% \\
        &\textsc{TDS} & 0.0 & 0.18 & 0.5 & 0.11 & 48 & 96\%\\
        &\textsc{SDE} & 0.0 & 0.0 & 0.0 & 0.0 & 0 & 0\%\\
        &\textsc{TDS}+\textsc{SDE} & 0.0 & 0.18 & 0.5 & 0.11 & 48 & 96\%\\
        \cline{2-8}
        &$\mathbb{B}$--BUS & 0.94 & 0.94 & 0.94 & 0.0 & 50 & 100\%\\
        &$\mathbb{B}$--TDS & 0.0 & 0.12 & 0.31 & 0.08 & 44 & 88\%\\
        \bottomrule\bottomrule
    \end{tabular}
    \end{adjustbox}
    \caption{Additional Experimental Statistics for the Fashion MNIST dataset.}
    \label{tab:experiments-fashion-mnist-expanded}
\end{table}

\end{document}